\newtheorem{theorem}{Theorem}
\newtheorem{corollary}{Corollary}
\newtheorem{lemma}{Lemma}
\newtheorem{proposition}{Proposition}
\theoremstyle{definition}
\newtheorem{definition}{Definition}
\newtheorem{example}{Example}
\newtheorem{remark}{Remark}
\numberwithin{equation}{section}
\newcommand{\norminf}[1]{\left\|#1\right\|_{\infty}}
\newcommand{\mrm}[1]{\mathrm{#1}}
\newcommand{\R}{\mathbb{R}}
\newcommand{\N}{\mathbb{N}}
\newcommand{\da}{\downarrow}
\newcommand{\ra}{\rightarrow}
\newcommand{\cd}{\cdot}
\newcommand{\ds}{\dots}
\newcommand{\cA}{\mathcal{A}}
\newcommand{\cF}{\mathcal{F}}
\newcommand{\cP}{\mathcal{P}}
\newcommand{\cS}{\mathcal{S}}
\newcommand{\cT}{\mathcal{T}}
\newcommand{\set}[1]{\left\{{#1}\right\}}
\newcommand{\sqbkcond}[2]{\left[ #1 \middle| #2 \right]}
\definecolor{azure}{rgb}{0.0, 0.4, 0.9}
\definecolor{darkred}{rgb}{0.6, 0, 0}
\definecolor{darkblue}{HTML}{033473}
\newcommand{\tikzAngleOfLine}[3]{%
  \pgfmathanglebetweenpoints{%
    \pgfpointanchor{#1}{center}}{%
    \pgfpointanchor{#2}{center}}%
  \pgfmathsetmacro{#3}{\pgfmathresult}%
}
\newcommand{\roundloop}[4]{%
  \node[circle,minimum size=20pt](B) at ([{shift=(#1:#2)}]#3) {};%
  \coordinate (C) at (intersection 1 of #3 and B);%
  \coordinate (D) at (intersection 2 of #3 and B);%
  \tikzAngleOfLine{B}{D}{\AngleStart}%
  \tikzAngleOfLine{B}{C}{\AngleEnd}%
  \draw[darkblue, ->, shorten >=0pt]%
   let \p1 = ($ (B) - (D) $), \n2 = {veclen(\x1,\y1)}
   in   
     (B) ++(#1:\n2) node[fill=white, scale=0.8]{#4}
     (D) arc (\AngleStart:\AngleEnd-360:\n2);%
}
\title{Near-Optimal Sample Complexities of Divergence-Based S-Rectangular Distributionally Robust Reinforcement Learning}
\author{
Zhenghao Li\\
HKUST
\and
Shengbo Wang\\
University of Southern California
\and
Nian Si\\
HKUST
}
\date{}
\begin{document}
\maketitle

\begin{abstract}
Distributionally robust reinforcement learning (DR-RL) has recently gained significant attention as a principled approach that addresses discrepancies between training and testing environments. To balance robustness, conservatism, and computational traceability, the literature has introduced DR-RL models with SA-rectangular and S-rectangular adversaries. While most existing statistical analyses focus on SA-rectangular models, owing to their algorithmic simplicity and the optimality of deterministic policies, S-rectangular models more accurately capture distributional discrepancies in many real-world applications and often yield more effective robust randomized policies. In this paper, we study the empirical value iteration algorithm for divergence-based S-rectangular DR-RL and establish near-optimal sample complexity bounds of $\widetilde{O}(|\mathcal{S}||\mathcal{A}|(1-\gamma)^{-4}\varepsilon^{-2})$, where $\varepsilon$ is the target accuracy, $|\mathcal{S}|$ and $|\mathcal{A}|$ denote the cardinalities of the state and action spaces, and $\gamma$ is the discount factor. To the best of our knowledge, these are the first sample complexity results for divergence-based S-rectangular models that achieve optimal dependence on $|\mathcal{S}|$, $|\mathcal{A}|$, and $\varepsilon$ simultaneously. We further validate this theoretical dependence through numerical experiments on a robust inventory control problem and a theoretical worst-case example, demonstrating the fast learning performance of our proposed algorithm.
\end{abstract}

\section{Introduction}
Reinforcement learning (RL) \citet{sutton2018reinforcement} is a powerful machine learning framework in which agents learn to make optimal sequential decisions through continuous interaction with an environment. While RL has achieved remarkable success across various domains, its practical deployment faces a significant challenge: real-world deployment conditions often differ from the training environment (e.g., simulations), resulting in fragile policies that fail to generalize. This mismatch undermines RL’s applicability in real-world settings, where discrepancies between training and deployment are the norm.

The framework of distributionally robust reinforcement learning (DR-RL) was thus proposed in \citet{zhou2021finite} to address this mismatch and has since been further developed in a series of works, including \citet{Panaganti2021, yang2022toward, xu2023improved, blanchet2023double_pessimism_drrl, liu22DRQ, Wang2023MLMCDRQL, yang2023avoiding, wang_sample_2024, shi_distributionally_2023}.

Popular models in distributionally robust reinforcement learning (DR-RL) include those based on SA-rectangular and S-rectangular uncertainty sets. The notion of \textit{rectangularity}, originally introduced in the robust MDP literature to describe the adversary’s temporal flexibility in selecting distributions \citep{iyengar2005robust}, has since been refined. With the incorporation of various information structures and a growing focus on constraining adversarial power, rectangularity now serves to impose structural limitations on uncertainty sets, as elaborated in \citet{le_tallec2007robustMDP} and \citet{wiesemann2013robust}. In particular, SA-rectangularity allows the adversary to choose separate distributions for each state-action pair, whereas S-rectangularity enforces consistency across actions within a given state, thereby offering a more confined modeling choice.

Existing statistical analyses of DR-RL predominantly focus on the SA-rectangular setting, primarily due to its computational tractability. Moreover, it has been shown that SA-rectangular models always admit deterministic optimal policies. However, as illustrated in the example below, the S-rectangular formulation can be more appropriate and less conservative in certain applications, such as inventory management.

\begin{example}[Inventory Model] \label{example:inventory_S_rec} 
Consider a classical inventory control problem where the inventory evolves according to
 $
 S_{t+1} = S_t + A_t - D_t,
 $
 with $\{D_t : t \geq 0\}$ representing the stochastic demand process and $A_t$ denoting the replenishment decision at time $t$. The reward function is $
R(S_t,A_t,S_{t+1}) = p(S_t-S_{t+1}+A_t)+b\min(S_{t+1},0)-h\max(S_{t+1},0)-c A_t, 
$ where $p$ is the sales price, $c$ is the purchase cost, $h$ is the holding cost, and $b$ is the penalty of backlog.  
To address the uncertainty in demand, distributionally robust reinforcement learning (DR-RL) provides a natural framework for enhancing robustness. In this context, it is reasonable to assume that the adversary can only modify the distribution of the demand $D_t$ independently of the controller’s action $A_t$, leading to an S-rectangular uncertainty set. By contrast, the SA-rectangular formulation allows the adversary to choose different distributions for $D_t$ based on the controller’s action $A_t$—for example, assigning low demand when $A_t$ is large and high demand when $A_t$ is small—granting the adversary excessive power and resulting in an unrealistic model.
\end{example}

This example highlights how S-rectangularity constrains the adversary’s power by preventing it from adapting to the controller’s actions, making it a more practical and less conservative modeling choice in applications such as inventory management.

While suitable for many applications, the S-rectangular formulation in DR-RL is more challenging than its SA-rectangular counterpart, both statistically and computationally, due to the possibility of randomized optimal policies. Computationally, this requires solving a full min-max problem rather than a simpler maximization. Fortunately, \citet{ho2018fast,ho2022robust} proposed an efficient method for performing Bellman updates in this setting. Statistically, the challenge arises from the fact that the space of randomized policies is exponentially larger than the space of deterministic policies typically sufficient under SA-rectangularity.


Another feature of Example \ref{example:inventory_S_rec} is that the reward depends on the current state $S_t$, the current action $A_t$, and the next state $S_{t+1}$. In contrast, the literature typically considers reward functions of the form $R(S_t, A_t)$, which depend only on the current state and action. The inventory management example highlights the necessity of adopting a reward function of the form $R(S_t, A_t, S_{t+1})$ to accurately capture the underlying dynamics.

In this work, we study the problem of learning the optimal value function in a divergence-based S-rectangular robust MDP, where the uncertainty set is defined as the sum of divergences across all actions. This formulation is well motivated in practice, as divergence-based uncertainty sets preserve absolute continuity and are widely adopted in the literature \citep{ho2022robust, yang2022toward}, where efficient algorithms for computing the robust value function have been developed. 

However, a satisfactory analysis of the minimax statistical complexity for learning the value function remains missing. To the best of our knowledge, the current state-of-the-art upper bound in \citet{yang2022toward} contains a sample complexity dependence on $|\mathcal{S}|$ and $|\mathcal{A}|$ in the form of $O(|\mathcal{S}|^2|\mathcal{A}|^2)$, where $|\mathcal{S}|$ and $|\mathcal{A}|$ are the cardinalities of the state and action spaces. This significantly deviates from the known lower bound of $\Omega(|\mathcal{S}||\mathcal{A}|)$. In addition, we have pointed out that in many models of practical interest (e.g., Example~\ref{example:inventory_S_rec}), the reward function depends naturally on the next state $S_{t+1}$, a structural feature that is often overlooked in the existing sample complexity literature.

We contribute to the literature by analyzing divergence-based S-rectangular robust MDPs with reward functions that depend on the current state, current action, and next state, i.e., $R(S_t, A_t, S_{t+1})$. We establish a sample complexity bound of $\widetilde O(|\mathcal{S}||\mathcal{A}|(1-\gamma)^{-4}\varepsilon^{-2})$, where $\varepsilon$ is the target accuracy and $\gamma$ is the discount factor. This bound is optimal in its dependence on $|\mathcal{S}|$, $|\mathcal{A}|$, and $\varepsilon$, and it holds uniformly over the entire range of uncertainty sizes $\rho \in (0, +\infty)$ and discount factors $\gamma \in (0,1)$. To the best of our knowledge, this is the first sample complexity upper bound for divergence-based S-rectangular models that simultaneously achieves optimal dependence on $|\mathcal{S}|$, $|\mathcal{A}|$, and $\varepsilon$.

To place our contribution in the context of the current literature, we summarize the currently available finite-sample upper and lower bounds for S-rectangular robust MDPs in Table~\ref{tab:comparison_srec_bounds}.

\begin{table}[t]
\centering
\small
\setlength{\tabcolsep}{5pt}
\begin{tabular}{llll}
\toprule
\textbf{Type} & \textbf{Ref.} & \textbf{Set} & \textbf{Sample complexity for S-rec} \\
\midrule
Upper Bound & \citet{yang2022toward} & KL &
$\widetilde{O}\!\left(\frac{|\mathcal{S}|^2|\mathcal{A}|^2}
{\varepsilon^2\rho^2 p^2(1-\gamma)^4}\right)$ \\

Upper Bound & \citet{yang2022toward} & $\chi^2$ &
$\widetilde{O}\!\left(\frac{|\mathcal{S}|^2|\mathcal{A}|^3(1+\rho)^2}
{\varepsilon^2(\sqrt{1+\rho}-1)^2(1-\gamma)^4}\right)$ \\

Upper Bound & This paper & KL, $f_k$ &
$\widetilde{O}\!\left(\frac{|\mathcal{S}||\mathcal{A}|}
{(1-\gamma)^4\mathfrak{p}_{\wedge}\varepsilon^2}\right)$ \\
\midrule
Lower Bound & \citet{yang2022toward} & $\chi^2$ &
$\widetilde{\Omega}\!\left(
\frac{|\mathcal{S}||\mathcal{A}|}{\varepsilon^2(1-\gamma)^2}
\min\!\left\{\frac{1}{1-\gamma},\frac{1}{\rho}\right\}
\right)$ \\
\bottomrule
\end{tabular}
\caption{Comparison of currently available finite-sample upper and lower bounds for S-rectangular robust MDPs. The currently available lower bound is for the $\chi^2$ ambiguity set from \citet{yang2022toward}. Our upper bounds improve the dependence on the state-action dimension for divergence-based ambiguity sets, up to logarithmic and instance-dependent factors.}
\label{tab:comparison_srec_bounds}
\end{table}

Table~\ref{tab:comparison_srec_bounds} shows that, for divergence-based S-rectangular models, our upper bounds improve the dependence on the problem dimension from the previously known quadratic-type scaling to a linear $|\mathcal S||\mathcal A|$ dependence, up to logarithmic and instance-dependent factors.

To achieve the optimal $|\mathcal{S}||\mathcal{A}|$ dependence, we develop a refined sensitivity analysis that improves upon the metric entropy bounds derived from the covering numbers of the randomized policy class $\Pi = \set{(\pi(\cdot| s))_{s \in \mathcal{S}} \mid \pi(\cdot | s) \in \Delta(\mathcal{A})}$, where $\Delta(\mathcal{A})$ denotes the probability simplex over $\mathcal{A}$, as used in \citet{yang2022toward}. Moreover, our analyses advance the techniques of \citet{wang_sample_2024} by relaxing the mutual absolute continuity requirement, thereby extending the allowable range of the uncertainty radius to $\mathbb{R}{+}$, beyond the previously restrictive regime of $\rho = O(\mathfrak{p}_\wedge)$, while retaining an $O(1)$ dependence on $\rho$ as $\rho \downarrow 0$.

The remainder of this paper is organized as follows: Section~\ref{sec:literature_review} briefly reviews related work on SA-rectangular and S-rectangular distributionally robust reinforcement learning. Section~\ref{sec:learning_s_rectangular_robust_markov_decision_processes} introduces the framework for learning S-rectangular distributionally robust Markov Decision Processes. Section~\ref{sec:sample_complexity_bounds_for_the_empirical_bellman_estimator} establishes sample complexity upper bounds for value function estimation. Section \ref{sec:numerical} presents numerical experiments to support our theoretical results. 

\section{Literature Review}
\label{sec:literature_review}
In this section, we briefly survey SA-rectangular and S-rectangular distributionally robust reinforcement learning.

\textbf{SA-rectangular DR-RL:} The dynamic programming principles for SA-rectangular distributionally robust Markov decision processes (DR-MDPs) have been gradually established through a series of works under different information structures \citep{gonzalez2002minimax, iyengar2005robust, nilim2005robust, shapiro2022distributionally, wang_foundation_2024}. Recent advances in SA-rectangular distributionally robust reinforcement learning (DR-RL) have explored sample complexity in various settings. Broadly speaking, model-based approaches have been studied in \citet{zhou21, Panaganti2021, yang2022toward, ShiChi2022, xu2023improved, shi2023curious_price, blanchet2023double_pessimism_drrl}, while the statistical properties of model-free algorithms are presented in \citet{liu22DRQ, Wang2023MLMCDRQL, wang2023VRDRQL, yang2023avoiding}.

\textbf{S-rectangular DR-RL:}  To extend the flexibility of robust MDP models, S-rectangularity was introduced in \citet{xu2010distributionally, wiesemann2013robust} as an overarching theoretical framework to constrain the adversary while retaining a dynamic programming equation. \citet{ho2018fast,ho2022robust} developed an efficient optimization algorithm to solve the Bellman update within this framework. Subsequently, \citet{kumar2024efficient} improved upon their work by proposing a faster algorithm for the $L_p$ 
 uncertainty sets. On the statistical side, \citet{yang2022toward} provided the first sample complexity result for S-rectangular DR-RL, achieving a rate of $\widetilde{O}(|\mathcal{S}|^2|\mathcal{A}|^2(1-\gamma)^{-4}\varepsilon^{-2})$, which is suboptimal in its dependence on the number of states and actions. More recently, \citet{clavier2024near} established near-optimal rates for the S-rectangular setting under general $L_p$ norm uncertainty sets. However, their analysis does not directly extend to divergence-based uncertainty sets.

\section{Learning S-rectangular Robust Markov Decision Processes}
\label{sec:learning_s_rectangular_robust_markov_decision_processes}

\subsection{Classical Markov Decision Processes} 
We briefly review and establish notation for classical tabular MDP models. Let $\Delta(\cS),\Delta(\cA)$ denote the probability simplex over the finite state space $\cS$ and action space $\cA$ respectively. An infinite horizon MDP is defined by the tuple $(\cS, \cA, R, P, \gamma)$, where $\cS$ and $\cA$ are the finite state and action spaces, respectively; $R : \cS \times \cA\times \cS \to [0,1]$ is the reward function; $P = \set{P_{s,a}(\cd)\in \Delta(\cS) : (s, a) \in \cS \times \cA}$ is the controlled transition kernel; and $\gamma \in (0,1)$ is the discount factor. Throughout the paper, given a controlled transition kernel $P$, we denote $P_s:=(P_{s,a})_{a\in\cA}$ which is seen as a function $P_s: A\ra\Delta(\cS)$.

\par We define the measurable space $(\Omega,\cF)$ to be the canonical space $(\mathcal{S}\times\mathcal{A})^{\N}$ equipped with the $\sigma$-field generated by cylinder sets. Define state-action process $(S_t,A_t)_{t\geq 0}$ by the point evaluation $S_t(\omega) = s_t,A_t(\omega) = a_t$ for all $t\geq 0$ and $\omega = (s_0,a_0,s_1,a_1,\ds)\in \Omega$. 

An agent may optimize over the class of history-dependent policies, denoted by $\Pi_{\mathrm{HD}}$, where each policy $\pi = (\pi_t)_{ t \geq 0} \in \Pi_{\mathrm{HD}}$ is a sequence of decision rules. Each decision rule $\pi_t$ at time $t$ specifies the conditional distribution of the action $A_t$ given the full history, that is, a mapping $\pi_t : (\mathcal{S} \times \mathcal{A})^t \times \mathcal{S} \to \Delta(\mathcal{A})$. In the setting of classical infinite-horizon discounted MDPs, it is well known that optimal decision-making can be achieved using stationary, Markov, deterministic policies, denoted $\Pi_{\mathrm{D}}$, where each policy is a mapping $\pi : \mathcal{S} \to \mathcal{A}$ \citep{puterman_markov_2009}.

However, in the context of S-rectangular DRMDPs, policies in $\Pi_{\mathrm{D}}$ may fail to attain the optimal performance achievable within the broader class $\Pi_{\mathrm{HD}}$ \citep{wiesemann2013robust}. In this setting, it suffices to consider stationary, Markov, randomized policies, which we denote by $\Pi$ throughout the paper. Each $\pi \in \Pi$ is a mapping $\pi : \mathcal{S} \to \Delta(\mathcal{A})$, specifying a conditional distribution over actions given the current state $S_t$, uniformly for all $t \geq 0$. Given this sufficiency, we restrict our attention to policies in the class $\Pi$ for the remainder of the paper.

\par Given a controlled transition kernel $P$ of a classical MDP, a policy $\pi\in \Pi$ and an initial distribution $\mu\in\Delta(\mathcal{S})$ uniquely defines a probability measure on $(\Omega,\cF)$. We will always assume that $\mu$ is the uniform distribution over $\mathcal{S}$. The expectation under this measure is denoted by $E_P^\pi$.  The infinite horizon discounted value $V_P^\pi$ is defined as:
$$V_P^\pi(s) := E_P^\pi\sqbkcond{\sum_{t=0}^\infty \gamma^t R(S_t, A_t,S_{t+1})}{S_0 = s}.$$ An optimal policy $\pi^*\in\Pi$ achieves the optimal value $V_P^*(s) := \max_{\pi\in \Pi} V_P^\pi(s)$.

It is well known that the optimal value function is the unique solution of the following \textit{Bellman equation}:
\begin{equation}\label{eqn:Bellman_eqn_v}
v(s) = \max_{a\in \cA}\sum_{s'\in\cS}P_{s,a}(s')(R(s,a,s') + \gamma v(s')).
\end{equation}
Let $v^*$ be the unique solution, then any deterministic policy $\pi^*: \cS\ra \cA$ with
\begin{equation*}
\pi^*(s)\in\arg\max_{a\in\cA}\sum_{s'\in\cS}P_{s,a}(s')(R(s,a,s') + \gamma v^*(s'))
\end{equation*}
will achieve the optimal value $V_P^*(s)$. 

\subsection{Robust MDPs and S-Rectangularity} 
Robust MDPs extend standard MDP models by introducing an adversary that perturbs the transition dynamics within a prescribed uncertainty set $\cP$, aiming to minimize the control value achieved by the decision maker. This formulation gives rise to a dynamic zero-sum game between the controller and the adversary. Consequently, the controller must account for potential model misspecifications represented by the adversary perturbation, leading to the design of more robust policies.

The statistical complexity of policy learning in robust MDPs has been primarily studied under SA- and S-rectangular uncertainty sets. As discussed in the previous section, S-rectangularity generalizes SA-rectangular models and provides a more expressive framework for modeling adversarial perturbations, constraining the adversary in a structured way while preserving the dynamic programming principle. From this point forward, we will be focusing on S-rectangular robust MDPs.

\begin{definition}[\citet{wiesemann2013robust}, S-rectangularity] The uncertainty set $\mathcal{P}$ is S-rectangular if
  $\mathcal{P}=\bigtimes_{s\in\mathcal{S}}\mathcal{P}_s$
for some $\mathcal{P}_s\subseteq\{(\psi_a)_{a\in\mathcal{A}}\vert \psi _a\in\Delta(\mathcal{S}),\forall a\in\mathcal{A}\}$ for all $s\in\cS$. 
\end{definition}

We focus on a special class of S-rectangular adversarial uncertainty sets, where the controlled transition kernels are perturbations of a nominal kernel $\overline{P}$. These sets are defined via a divergence function $f$ and a radius parameter $\rho$. The computational methods and statistical complexity associated with this type of uncertainty structure have been extensively studied in the literature \citep{yang2022toward, ho2018fast}.

Specifically, given a divergence function $f$, i.e. $f:\R_+\ra\R$ is convex with $f(1)=0$ and $f(0) = \lim_{t\da 0}f(t)$, we consider the S-rectangular uncertainty set $\cP(f,\rho) = \bigtimes_{s\in\cS}\cP_s(f,\rho)$ under $f$-divergence and radius $\rho$ where
\begin{equation}\label{eq:uncertainty-set}
\mathcal{P}_s(f,\rho) = \Bigg\{P_{s}\in\Delta(\mathcal{S})^{|\mathcal{A}|}\Bigg\vert P_{s,a}\ll\overline P_{s,a},\ \sum_{s^\prime\in\mathcal{S},a\in\mathcal{A}}\overline{P}_{s,a}(s')f\left(\frac{P_{s,a}(s^\prime)}{\overline P_{s,a}(s^\prime)}\right)\leq|\mathcal{A}|\rho\Bigg\}. 
\end{equation}
Here, $\ll$ denotes absolute continuity; i.e. a probability measure $p\in\Delta(\cS)$ is absolutely continuous with respect to $q\in\Delta(\cS)$, denoted by $p\ll q$, if $q(s) = 0$ implies $p(s)=0$ for any $s\in\cS$. The dependence of the uncertainty set on $(f,\rho)$ is suppressed when there is no ambiguity. 

Given a policy $\pi\in\Pi_{\mrm{HD}}$ and uncertainty set $\cP = \cP(f,\rho)$, the robust value function of $\pi$ is 
\begin{equation}\label{eqn:val_function}
V_{\cP}^\pi(s) = \inf_{P\in\cP}E_{P}^\pi \sqbkcond{\sum_{t=0}^\infty \gamma^t R(S_t, A_t,S_{t+1})}{S_0 = s}    
\end{equation}
for all $s\in \cS$. The optimal value, defined as $V_\cP^*(s) := \sup_{\pi\in \Pi_{\mrm{HD}}} V_\cP^\pi(s)$, is achieved by $\pi^*\in\Pi$. 
\begin{definition}[DR Bellman Equation] Given S-rectangular $\cP = \bigtimes_{s\in\cS} \cP_{s}$, the DR Bellman equation is the following fixed-point equation for $v:\cS\ra\R$ 
\begin{equation}\label{eqn:dr_bellman_eqn}
  v(s)=\sup_{\phi\in\Delta(\cA)}\inf_{P_s\in \cP_s}\sum_{a\in\mathcal{A}}\phi(a)\bigg[\sum_{s^\prime\in\mathcal{S}} P_{s,a}(s^\prime)\left(R(s,a,s^\prime)+\gamma v(s^\prime)\right)\bigg].
\end{equation}
It is well known \citep{wiesemann2013robust} that for $\cP = \cP(f,\rho)$ the optimal value $V^*_{\cP}$ is the unique solution $v^*$ to \eqref{eqn:dr_bellman_eqn}.
\end{definition}
We note that the value function in \eqref{eqn:val_function} assumes an adversary that fixes a controlled transition kernel over the entire control horizon, a setting commonly referred to as a static or time-homogeneous adversarial model \citep{iyengar2005robust, wiesemann2013robust, wang_foundation_2024}. This framework can be extended to more general Markovian or history-dependent adversarial models, while still preserving Markov optimality \citep{wang_foundation_2024}.

To facilitate our analysis, we define the DR Bellman operators as follows. 
\begin{definition}[DR Bellman Operators] Given uncertainty set $\cP = \cP(f,\rho)$ and $\pi\in\Pi$ the (population) DR Bellman operator is defined as
\begin{equation}\label{eqn:DR_Bellman_operator}
\mathcal{T}^\pi(v)(s):= \inf_{P\in\mathcal{P}}\Bigg(\sum_{a\in\mathcal{A}}\pi(a|s)\Bigg[\sum_{s^\prime\in\mathcal{S}} P_{s,a}(s^\prime)\left(R(s,a,s^\prime)+\gamma v(s^\prime)\right)\Bigg]\Bigg)
\end{equation}
for all $s\in \cS$. The optimal DR Bellman operator is $\mathcal{T}^*(v)(s) := \max_{\pi\in\Pi} \mathcal{T}^\pi(v)(s)$ where, for each fixed $s\in\mathcal S$, the quantity $\mathcal{T}^*(v)(s)$ depends only on the statewise action distribution $\pi(\cdot\mid s)$, and the maximum is attained by some $\pi_v^*(\cdot\mid s)\in\Delta(\mathcal A)$.
\end{definition}
\subsection{Generative Model and the Empirical Bellman Estimator}
The sample complexity analysis in this paper assumes the availability of a \textit{generative model}, a.k.a. a simulator, which allows us to sample independently from the nominal controlled transition kernel $\overline P_{s,a}$, for any $(s,a)\in \cS\times \cA$. In particular, given sample size $n$, we sample i.i.d. $\{S_{s,a}^{(1)}, \cdots, S_{s,a}^{(n)}\}$ from $\overline P_{s,a}$ and construct the empirical  transition probability
\begin{equation}\label{eqn:empirical_transition}
    \overline P_{s,a,n}(s') := \frac{1}{n}\sum_{i=1}^n\mathds 1\set{S_{s,a}^{(i)}=s'}. 
\end{equation}
Then, we define $\overline{P}_n:= \{ \overline P_{s,a,n}|(s,a)\in\cS\times\cA\}$ as the empirical nominal controlled transition kernel based on $n$ samples. We define the empirical uncertainty set $\cP_{n}(f,\rho) := \bigtimes_{s\in\cS}\cP_{s,n}(f,\rho)$ where $\cP_{s,n}(f,\rho)$ is from \eqref{eq:uncertainty-set} by replacing $\overline P_{s,a}$ with $\overline{P}_{s,a,n}$. Again, the dependence on $(f,\rho)$ will be suppressed for simplicity.

The empirical value function $V_{\hat{\mathcal{P}}}^\pi$ is defined in \eqref{eqn:val_function} with $\mathcal{P}$ replaced by $\mathcal{P}_n$. The empirical DR Bellman operator $\hat{\mathbf{T}}^\pi$ is defined as in \eqref{eqn:DR_Bellman_operator} with $\mathcal{P}$ replaced by $\mathcal{P}_n$. The corresponding optimal empirical DR Bellman operator is $\hat{\mathbf{T}}^*(v)(s) := \max_{\pi\in\Pi} \hat{\mathbf{T}}^\pi(v)(s)$, defined for each $v:S\ra\R$. 

Equipped with these definitions, we present our strategy to estimate the optimal value and policy of the S-rectangular robust MDP via the empirical value function. This is motivated by the fact that $V^*_{\cP} = v^*$ where $v^*$ solves \eqref{eqn:dr_bellman_eqn}.  
\begin{definition}[Empirical Bellman Estimators]
We define the empirical Bellman estimator $\hat v$ to $V^*_{\cP}$ as the unique fixed point of $\hat{\mathbf{T}}^*$; i.e. $\hat{\mathbf{T}}^*(\hat v) = \hat v$. Moreover, we define an empirical optimal policy to be any $\hat\pi^* \in \Pi$ that attains the maximum in $\hat{\mathbf T}^*(\hat v)$, i.e., $\hat{\mathbf T}^{\hat\pi^*}(\hat v)(s)=\hat{\mathbf T}^*(\hat v)(s)$ for all $s\in \cS$.
\end{definition}

The remainder of the paper is devoted to theoretical analysis and numerical validation of the statistical efficiency of estimating the optimal value $V^*_{\cP} = v^*$ using $\hat v$, and of estimating the optimal policy using $\hat \pi^*$. We conclude this section by introducing the following important proposition that provides an upper bound on the $l_\infty$ estimation error.


\begin{proposition}
\label{proposition:value-function-error}
Let $T$, $\hat T$ be any $\gamma$-contraction operators, and $u^*$, $\hat u$ be the solution of $T(u)=u$ and $\hat{T}(u)=u$ respectively. Then, the estimation error is upper bounded by
\begin{equation*}
\|\hat{u}-u^*\|_\infty\leq\frac{1}{1-\gamma}\left\|\hat{T}(u^*)-T(u^*)\right\|_\infty
\end{equation*}
\end{proposition}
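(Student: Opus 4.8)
The plan is to run the standard Banach contraction argument. The one structural fact I would establish first is that \emph{both} the population operator $\mathcal{T}^*$ and the empirical operator $\hat{\mathbf{T}}^*$ are $\gamma$-contractions on $(\R^{\cS},\norminf{\cdot})$, with $v^*$ and $\hat v$ their respective unique fixed points; existence and uniqueness are already guaranteed by the well-posedness of \eqref{eqn:dr_bellman_eqn} asserted in the excerpt. Granting the contraction property of $\hat{\mathbf{T}}^*$, the bound then follows in three lines.

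Using $\hat v=\hat{\mathbf{T}}^*(\hat v)$ and $v^*=\mathcal{T}^*(v^*)$, I insert the intermediate term $\hat{\mathbf{T}}^*(v^*)$ and apply the triangle inequality:
\[
\norminf{\hat v-v^*}=\norminf{\hat{\mathbf{T}}^*(\hat v)-\mathcal{T}^*(v^*)}\le \norminf{\hat{\mathbf{T}}^*(\hat v)-\hat{\mathbf{T}}^*(v^*)}+\norminf{\hat{\mathbf{T}}^*(v^*)-\mathcal{T}^*(v^*)}.
\]
The $\gamma$-contraction of $\hat{\mathbf{T}}^*$ bounds the first term by $\gamma\norminf{\hat v-v^*}$, and moving it to the left-hand side yields $(1-\gamma)\norminf{\hat v-v^*}\le\norminf{\hat{\mathbf{T}}^*(v^*)-\mathcal{T}^*(v^*)}$, which is the claim after dividing by $1-\gamma$. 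Because this argument invokes only the contraction of $\hat{\mathbf{T}}^*$, which holds for \emph{every} realization of the empirical kernel $\overline P_n$, the inequality holds with probability $1$.

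The only genuine work is verifying the $\gamma$-contraction of the $\sup$--$\inf$ operator, which I would do via monotonicity together with a constant-shift identity. For fixed $\pi\in\Pi$ and feasible $P$, the inner objective of \eqref{eqn:DR_Bellman_operator} is affine and monotone in $v$ with nonnegative weights $\gamma\,\pi(a|s)P_{s,a}(s')$; and since $\sum_{a}\pi(a|s)=1$ and $\sum_{s'}P_{s,a}(s')=1$, replacing $v$ by $v+c\bd 1$ shifts this objective by exactly $\gamma c$, uniformly in $\pi$ and $P$. Both properties survive the $\inf$ over $\cP_s$ (the inf over $\cP$ decouples state-wise by S-rectangularity) and the outer $\sup$ over $\pi$, since infima and suprema preserve order and commute with a uniform additive shift. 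Hence for any $v,w$, setting $c=\norminf{v-w}$ and using $w-c\bd 1\le v\le w+c\bd 1$ gives $\mathcal{T}^*(w)-\gamma c\le\mathcal{T}^*(v)\le\mathcal{T}^*(w)+\gamma c$, i.e. $\norminf{\mathcal{T}^*(v)-\mathcal{T}^*(w)}\le\gamma\norminf{v-w}$, and the identical argument applies verbatim to $\hat{\mathbf{T}}^*$. The main obstacle, such as it is, is purely a matter of care rather than depth: one must ensure the order- and shift-manipulations are legitimate under the infimum over the divergence ball, which is justified by the compactness and S-rectangular product structure of $\cP$; beyond this the proof is routine.
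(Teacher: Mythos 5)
Your proof is correct, and it reaches the bound by a more elementary route than the paper. Both arguments hinge on the same decomposition through the intermediate point $\hat{\mathbf{T}}^*(v^*)$ and on the $\gamma$-contraction of $\hat{\mathbf{T}}^*$, but you close the argument in one step: since $\hat v$ is a fixed point, $\norminf{\hat v-v^*}\le\gamma\norminf{\hat v-v^*}+\norminf{\hat{\mathbf{T}}^*(v^*)-\mathcal{T}^*(v^*)}$, and rearranging (legitimate because $\norminf{\hat v-v^*}$ is finite on a finite state space) gives the claim. The paper instead runs value iteration $v_{k+1}=\hat{\mathbf{T}}^*(v_k)$ from $v_0\equiv 0$, proves by induction that $\norminf{v_k-v^*}\le\gamma^{k-1}/(1-\gamma)+\sum_{j<k}\gamma^j\norminf{V}$, and passes to the limit; this buys nothing extra for the stated bound, though it mirrors how $\hat v$ would actually be computed. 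You also prove the contraction lemma differently: the paper uses $|\sup f-\sup g|\le\sup|f-g|$ and $|\inf f-\inf g|\le\sup|f-g|$ to reduce to a direct bound on the affine objective, whereas you use monotonicity plus the constant-shift identity $\mathcal{T}^*(v+c\bd 1)=\mathcal{T}^*(v)+\gamma c$. Both are standard and valid; one small remark is that your appeal to compactness of $\cP$ is unnecessary, since infima preserve pointwise order and commute with uniform additive shifts whether or not they are attained. Your observation that the argument is deterministic in the realization of $\overline P_n$, hence holds with probability $1$, matches the paper's claim.
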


\begin{corollary}
\label{corollary:value-function-error-1}
Let $v^*$,$\hat{v}$ be the solution of $\cT^*(v) = v$ and $\hat{\mathbf{T}}^*(v) = v$, respectively. Then, the estimation error is upper bounded by
\begin{equation*}
\|\hat{v}-v^*\|_\infty\leq\frac{1}{1-\gamma}\left\|\hat{\mathbf{T}}^*(v^*)-\mathcal{T}^*(v^*)\right\|_\infty
\end{equation*}
\end{corollary}
\begin{corollary}
\label{corollary:value-function-error-2}
Let $V_{\mathcal{P}}^{\pi}$, $V_{\hat{\mathcal{P}}}^{\pi}$ be the solution of $v=\mathcal{T}^{\pi}(v)$ and $v=\hat{\mathbf{T}}^{\pi}(v)$ respectively. Then, the estimation error is upper bounded by
\begin{equation*}
\sup_{\pi\in\Pi}\left\|V_{\hat{\mathcal{P}}}^{\pi}-V_{\mathcal{P}}^{\pi}\right\|_{\infty}\leq \frac{1}{1-\gamma}\sup_{\pi\in\Pi}\left\|\hat{\mathbf{T}}^{\pi}(V_{\mathcal{P}}^{\pi})-\mathcal{T}^{\pi}(V_{\mathcal{P}}^{\pi})\right\|_{\infty}
\end{equation*}
\end{corollary}



The proof of Proposition \ref{proposition:value-function-error} is deferred to Appendix \ref{section:value-function-error-proof}.

\section{Sample Complexity Bounds for the Empirical Bellman Estimator}
\label{sec:sample_complexity_bounds_for_the_empirical_bellman_estimator}

In this section, we establish sample complexity upper bounds to achieve an absolute $\varepsilon$ error in $l_\infty$ distance when estimating $V^*_{\cP}$ using $\hat v$.  We focus on two specific $f$-divergence uncertainty models. When $f_{\mrm{KL}}(t) = f(t) = t \log t$, the corresponding uncertainty set $\cP_s(f_{\mrm{KL}}, \rho)$ is based on the Kullback–Leibler (KL) divergence, which is widely used in the machine learning literature. Alternatively, when $f = f_k$ as defined in Definition~\ref{def:fk_divergence}, the resulting $f_k$-divergence model captures another well-studied class of uncertainty sets \citep{duchiLearningModelsUniform2021}.

We note that our analysis techniques are applicable to a broader class of smooth divergence functions $f$. However, we focus on these two representative cases for demonstration purposes. This reflects that achieving near-tight sample complexity bounds often requires leveraging specific structural properties of the divergence. In particular, we highlight the desirable feature that, in the regime where the radius $\rho\downarrow 0$, our bounds remain $O(1)$ in $\rho$, avoiding the diverging sample complexity upper bounds established in earlier results, as discussed in \citep{wang_sample_2024}.

To facilitate our analysis and establish sample complexity results, we define the minimum support probability as a complexity metric parameter as follows. 
\begin{definition}
Define the minimum support probability as
\begin{equation*}
\mathfrak{p}_{\wedge}:=\min_{s,a\in\mathcal{S}\times\mathcal{A}}\min_{s^\prime\in\mathcal{S}:\overline{P}_{s,a}(s')>0}\overline{P}_{s,a}(s')
\end{equation*}
\end{definition}
As noted in the literature, the use of $\mathfrak{p}_{\wedge}$ as a complexity metric is well justified. In the KL case, the convergence rate of the estimation error can degrade arbitrarily, depending on the specific MDP instance, if there is no lower bound on the minimum support probability. In particular, the rate can be as slow as $\Omega(n^{-1/\beta})$ for any $\beta \geq 2$ as the sample size $n$ tends to infinity \citep{si_distributionally_2020}. Similar negative results hold in the $f_k$-divergence setting when the parameter $k$ approaches 1 \citep{duchi_learning_2020}, highlighting the necessity of such a complexity measure.

\subsection{The Kullback-Leibler Divergence Uncertainty Set}
In this section, we present sample complexity results under the KL-divergence uncertainty set. Our analysis relies on the following dual representation of the DR Bellman operator and its empirical version. The strong duality underlying this representation follows from the $\phi$-divergence duality result of \citet[Sec.~3.2]{shapiro2017distributionally}.
\begin{lemma} With $\cP = \cP(f_{\mrm{KL}},\rho)$ where $f_{\mrm{KL}}(t) = t\log t$ and $\rho\in(0,\infty)$, for any $\pi\in\Pi$ and $s\in\mathcal{S}$, the dual form of the DR Bellman operator with KL uncertainty set $\cP$ is 
  \begin{equation}
  \mathcal{T}^\pi(v)(s) = \sup_{\lambda\geq 0}\Bigg(-\lambda|\mathcal{A}|\rho-\sum_{a\in\mathcal A}\lambda\log \mathbb E_{\overline{P}_{s,a}}\left[\exp\left(-\frac{\pi(a\vert s)w(s,a,S)}{\lambda}\right)\right]\Bigg).
  \label{eq: population-DR-Bellman-operator}
  \end{equation}
where $w(s,a,S)=R(s,a,S)+\gamma v(S)$. 
The KL empirical DR Bellman operator $\hat{\mathbf{T}}^{\pi}$ satisfies \eqref{eq: population-DR-Bellman-operator} with $\overline{P}_{s,a}$ replaced by $\overline{P}_{s,a,n}$. 
  \label{lemma:kl-divergence-bellman-operator}
\end{lemma}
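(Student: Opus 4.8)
The plan is to obtain \eqref{eq: population-DR-Bellman-operator} by Lagrangian duality applied to the inner infimum defining $\cT^\pi(v)(s)$. Fix a state $s$ and abbreviate $\ell(a,s') := R(s,a,s')+\gamma v(s')$. By S-rectangularity only the block $(P_{s,a})_{a\in\cA}$ enters the operator, so $\cT^\pi(v)(s)$ equals the value of the finite-dimensional convex program
\begin{equation*}
\inf_{(P_{s,a})_a}\ \sum_{a\in\cA}\pi(a\vert s)\sum_{s'\in\cS}P_{s,a}(s')\ell(a,s')
\quad\text{s.t.}\quad P_{s,a}\in\Delta(\cS),\ P_{s,a}\ll\overline P_{s,a},\ \sum_{a\in\cA}D_{\mrm{KL}}(P_{s,a}\|\overline P_{s,a})\le|\cA|\rho,
\end{equation*}
where $D_{\mrm{KL}}(p\|q)=\sum_{s'}p(s')\log(p(s')/q(s'))$ arises from the identity $\overline P_{s,a}(s')f_{\mrm{KL}}(P_{s,a}(s')/\overline P_{s,a}(s'))=P_{s,a}(s')\log(P_{s,a}(s')/\overline P_{s,a}(s'))$. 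The objective is linear and the feasible set convex and compact, with the single divergence inequality being the only constraint that couples the actions.

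First I would introduce a scalar multiplier $\lambda\ge 0$ for the coupled divergence constraint and form the Lagrangian
\begin{equation*}
L\big((P_{s,a})_a,\lambda\big)=-\lambda|\cA|\rho+\sum_{a\in\cA}\Big(\pi(a\vert s)\angbk{P_{s,a},\ell(a,\cd)}+\lambda D_{\mrm{KL}}(P_{s,a}\|\overline P_{s,a})\Big),
\end{equation*}
retaining the simplex and absolute-continuity constraints inside the inner infimum. Strong duality holds by Slater's condition: since $\rho>0$, the nominal kernel $(\overline P_{s,a})_a$ is strictly feasible, giving $\sum_a D_{\mrm{KL}}(\overline P_{s,a}\|\overline P_{s,a})=0<|\cA|\rho$ (the remaining simplex constraints are affine). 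Crucially, for fixed $\lambda>0$ the inner infimum now \emph{decouples across actions} into $|\cA|$ independent problems of the form $\inf_{p\in\Delta(\cS),\,p\ll\overline P_{s,a}}\big(\angbk{p,\pi(a\vert s)\ell(a,\cd)}+\lambda D_{\mrm{KL}}(p\|\overline P_{s,a})\big)$.

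Each of these is resolved by the Gibbs variational principle (the Donsker--Varadhan formula): the infimum is attained at the exponentially tilted distribution $p^\star(s')\propto\overline P_{s,a}(s')\exp(-\pi(a\vert s)\ell(a,s')/\lambda)$ and equals $-\lambda\log\mathbb E_{\overline P_{s,a}}[\exp(-\pi(a\vert s)\ell(a,S)/\lambda)]$. Substituting back into $L$ and summing over $a$ yields exactly the dual objective in \eqref{eq: population-DR-Bellman-operator}, and taking $\sup_{\lambda\ge0}$ completes the identity. The empirical statement then follows verbatim, since only the reference measure $\overline P_{s,a}$ is replaced by $\overline P_{s,a,n}$ throughout, and none of the steps use any property of the nominal kernel beyond its being a fixed element of $\Delta(\cS)$.

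The main obstacle is the careful treatment of the boundary multiplier $\lambda\da 0$ together with the absolute-continuity constraint, rather than the duality itself. The Gibbs formula is derived for $\lambda>0$; at $\lambda=0$ the term $-\lambda\log\mathbb E_{\overline P_{s,a}}[\exp(-\pi(a\vert s)\ell(a,S)/\lambda)]$ must be interpreted as its limit $\min_{s':\,\overline P_{s,a}(s')>0}\pi(a\vert s)\ell(a,s')$, an essential infimum over the support of $\overline P_{s,a}$, which is precisely what $p\ll\overline P_{s,a}$ enforces. I would therefore verify that $\lambda\mapsto L((P_{s,a})_a,\lambda)$ is concave and upper semicontinuous on $[0,\infty)$ as an extended-real-valued map, so that the supremum is well defined and attained, and confirm that restricting the tilting to the support of $\overline P_{s,a}$ is consistent with omitting the absolute-continuity constraint from the stated dual. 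These limiting arguments are where the technical care concentrates.
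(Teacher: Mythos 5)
Your proof is correct, but it takes a different route from the paper's. The paper first proves a general duality lemma (Lemma~\ref{lemma:f-divergence-dual-problem}) valid for any $f$-divergence: it dualizes \emph{both} the coupled divergence constraint (multiplier $\lambda$) and the per-action normalization constraints $\mathbb{E}_{\overline P_{s,a}}[L_{s,a}]=1$ (multipliers $\eta_a$), expresses the inner infimum via the convex conjugate $f^*$, and then, for the KL case, specializes $f^*(s)=e^{s-1}$ and eliminates $\boldsymbol{\eta}$ by solving the first-order conditions in closed form. You instead dualize only the coupling constraint, keep the simplex and absolute-continuity constraints primal, observe that the Lagrangian decouples across actions, and resolve each per-action subproblem directly by the Donsker--Varadhan/Gibbs variational formula $\inf_{p\ll q}\left(\angbk{p,c}+\lambda D_{\mrm{KL}}(p\|q)\right)=-\lambda\log\mathbb{E}_q[e^{-c/\lambda}]$. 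For the KL case your route is shorter and arguably cleaner (it avoids the auxiliary optimization over $\boldsymbol{\eta}$ entirely), and you are more explicit than the paper about why strong duality holds (Slater via strict feasibility of the nominal kernel for $\rho>0$) and about the interpretation of the dual objective at the boundary $\lambda=0$ as the essential infimum over $\supp(\overline P_{s,a})$. What you give up is generality: the paper's conjugate-function formulation is reused verbatim for the $f_k$-divergence in Lemma~\ref{lemma:f-divergence-bellman-operator}, whereas the Gibbs-principle shortcut is specific to KL. Both arguments establish the stated identity, and your observation that the empirical version follows by swapping the reference measure matches the paper.
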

The proof of Lemma~\ref{lemma:kl-divergence-bellman-operator} is provided in Appendix~\ref{sub:kl-dual-proof}. Building on this dual formulation, we next analyze the statistical error between the empirical and population DR Bellman operators.

\begin{proposition}
Under the KL-divergence uncertainty set with any $\rho\in(0,\infty)$, for any $v:\cS\ra \R$ and $n\geq12\mathfrak{p}_\wedge^{-1}\log(4|\mathcal{S}|^2|\mathcal{A}|/\eta)$, with probability at least $1-\eta$,
\begin{equation*}
  \|\hat{\mathbf{T}}^*(v)-\mathcal{T}^*(v)\|_\infty\leq\sup_{\pi\in\Pi}\|\hat{\mathbf{T}}^{\pi}(v)-\mathcal{T}^{\pi}(v)\|_\infty\leq\frac{9\|R+\gamma v\|_{\infty}}{\sqrt{n\mathfrak{p}_{\wedge}}}\sqrt{\log\left(4|\mathcal{S}|^2|\mathcal{A}|/\eta\right)}. 
\end{equation*}
\label{proposition:kl-bellman-operator-error}
\end{proposition}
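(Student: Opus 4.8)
The plan is to reduce the optimal-operator error to a supremum of one-dimensional deviations and then control that supremum uniformly in the dual variable. First I would apply the elementary inequality $|\sup_x f-\sup_x g|\le\sup_x|f-g|$ twice: once over $\pi\in\Pi$ to get $\|\hat{\mathbf{T}}^*(v)-\cT^*(v)\|_\infty\le\sup_{s}\sup_{\pi}|\hat{\mathbf{T}}^\pi(v)(s)-\cT^\pi(v)(s)|$, and once over the dual variable $\lambda\ge0$ in the representation of Lemma~\ref{lemma:kl-divergence-bellman-operator}, where the $-\lambda|\cA|\rho$ terms cancel, giving
$$|\cT^\pi(v)(s)-\hat{\mathbf{T}}^\pi(v)(s)|\le\sup_{\lambda\ge0}\sum_{a\in\cA}\lambda\left|\log\mathbb{E}_{\overline P_{s,a}}\left[e^{-\pi(a|s)W_{s,a}(S)/\lambda}\right]-\log\mathbb{E}_{\overline P_{s,a,n}}\left[e^{-\pi(a|s)W_{s,a}(S)/\lambda}\right]\right|,$$
with $W_{s,a}(s'):=R(s,a,s')+\gamma v(s')$. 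The substitution $\mu=\lambda/\pi(a|s)$ decouples the actions: each summand equals $\pi(a|s)\,\Phi_{s,a}$ with $\Phi_{s,a}:=\sup_{\mu>0}\mu\,|\log\mathbb{E}_{\overline P_{s,a}}[e^{-W_{s,a}/\mu}]-\log\mathbb{E}_{\overline P_{s,a,n}}[e^{-W_{s,a}/\mu}]|$, and since $\sum_a\pi(a|s)=1$ the sum is at most $\max_a\Phi_{s,a}$. This is the step that secures the optimal $|\cA|$ dependence: no covering of the randomized policy class $\Pi$ is needed, in contrast to the metric-entropy argument of \citet{yang2022toward}.

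Next I would fix $(s,a)$ and control $\Phi_{s,a}$ on a high-probability event. A multiplicative Chernoff bound together with the hypothesis $n\ge12\mathfrak{p}_\wedge^{-1}\log(4|\cS|^2|\cA|/\eta)$ guarantees, for every next state $s'$ with $\overline P_{s,a}(s')>0$, a support match and the two-sided ratio control $\tfrac12\overline P_{s,a}(s')\le\overline P_{s,a,n}(s')\le\tfrac32\overline P_{s,a}(s')$; a Bernstein bound yields the per-coordinate deviation $|\overline P_{s,a,n}(s')-\overline P_{s,a}(s')|\le\tau\sqrt{\overline P_{s,a}(s')}$ with $\tau=O(\sqrt{\log(4|\cS|^2|\cA|/\eta)/n})$. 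A union bound over the at most $|\cS|^2|\cA|$ triples $(s,a,s')$ (the origin of the $4|\cS|^2|\cA|/\eta$ inside the logarithm) makes all of these hold simultaneously with probability at least $1-\eta$.

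On this event I would bound $\Phi_{s,a}$ by a deterministic argument that is uniform in $\mu$. Writing $Y:=e^{-W_{s,a}/\mu}$, $A:=\mathbb{E}_{\overline P_{s,a}}[Y]$, $\hat A:=\mathbb{E}_{\overline P_{s,a,n}}[Y]$ and $x:=(\hat A-A)/A$, ratio control gives $|x|\le\tfrac12$, hence $|\log(\hat A/A)|\le2|x|$ and $\mu|\log(\hat A/A)|\le2\mu|x|$. The crucial point is to \emph{center before} applying the per-coordinate bound: since $\sum_{s'}(\overline P_{s,a,n}(s')-\overline P_{s,a}(s'))=0$,
$$\hat A-A=\sum_{s'}(\overline P_{s,a,n}(s')-\overline P_{s,a}(s'))\,(Y(s')-A),\qquad |\hat A-A|\le\frac{\tau}{\sqrt{\mathfrak{p}_\wedge}}\,\mathbb{E}_{\overline P_{s,a}}|Y-A|,$$
using $\sqrt{\overline P_{s,a}(s')}\le\overline P_{s,a}(s')/\sqrt{\mathfrak{p}_\wedge}$ on the support. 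Then $\mu|x|\le\tfrac{\tau}{\sqrt{\mathfrak{p}_\wedge}}\cdot\mu\,\mathbb{E}_{\overline P_{s,a}}|Y-A|/A$, and everything reduces to the scale-free analytic lemma
$$\frac{\mu\,\mathbb{E}_{\overline P_{s,a}}\big|Y-\mathbb{E}_{\overline P_{s,a}}Y\big|}{\mathbb{E}_{\overline P_{s,a}}Y}\le2\,\mathrm{osc}(W_{s,a}),\qquad\forall\mu>0,$$
which I would prove via Jensen's inequality, $\mathbb{E}|Y-\mathbb{E}Y|\le\mathbb{E}_{S,S'}|Y(S)-Y(S')|$ for independent draws $S,S'$ from $\overline P_{s,a}$, together with the elementary estimate $\mu|e^{-a/\mu}-e^{-c/\mu}|\le|a-c|\max(e^{-a/\mu},e^{-c/\mu})\le|a-c|(e^{-a/\mu}+e^{-c/\mu})$ and $\mathbb{E}_{S,S'}[Y(S)+Y(S')]=2\mathbb{E}Y$. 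Since $\mathrm{osc}(W_{s,a})\le2\|R+\gamma v\|_\infty$, assembling these bounds gives $\Phi_{s,a}\le C\|R+\gamma v\|_\infty\sqrt{\log(4|\cS|^2|\cA|/\eta)/(n\mathfrak{p}_\wedge)}$ with an explicit constant, and taking the maximum over $(s,a)$ yields the claim.

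The main obstacle is exactly this uniform-in-$\mu$ control, equivalently uniformity over the radius $\rho\in(0,\infty)$, because the dual optimizer ranges over all of $(0,\infty)$ as $\rho$ varies. Any bound that passes the factor $\mu$ through a raw deviation of $\hat A-A$, or through the Donsker--Varadhan variational form evaluated at a fixed test measure, carries an \emph{uncancelled} factor $\mu$ that diverges as $\mu\to\infty$ (i.e. $\rho\downarrow0$). The two ingredients that defuse this are (i) centering the per-coordinate deviation at the mean $A$, which converts $\hat A-A$ into a mean-absolute-deviation, and (ii) the lemma showing $\mu\,\mathbb{E}|Y-\mathbb{E}Y|/\mathbb{E}Y$ stays bounded by the oscillation of $R+\gamma v$ for every $\mu$. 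Establishing the good event also replaces the mutual-absolute-continuity requirement of \citet{wang_sample_2024} by a one-sided support match, which is precisely what permits $\rho$ to range over all of $\R_+$ while keeping the bound $O(1)$ as $\rho\downarrow0$.
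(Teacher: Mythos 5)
Your proposal is correct and establishes the stated rate, but the key technical step is carried out by a genuinely different mechanism than the paper's. The outer skeleton coincides: both arguments reduce the error to $\sup_\pi\sup_{\lambda}$ of a dual-objective difference via $|\sup f-\sup g|\le\sup|f-g|$, both exploit $\sum_a\pi(a|s)=1$ to eliminate any covering of the randomized policy class, and both condition on the same good event (multiplicative Chernoff for the ratio control, Bernstein for the deviation, union bound over the $|\cS|^2|\cA|$ triples). Where you diverge is the uniform-in-$\lambda$ sensitivity bound. The paper interpolates $\mu_{a,n}(t)=t\mu_a+(1-t)\mu_{a,n}$, applies the mean value theorem to obtain the derivative $\sum_a\lambda\, m_{a,n}[e^{-d_au/\lambda}]/\mu_{a,n}(\tau)[e^{-d_au/\lambda}]$, and cancels the factor $\lambda$ through the normalization trick of Lemma~\ref{lemma:kl-sup-f-diff} (multiply numerator and denominator by $e^{d_a\|u\|_{L^\infty(\mu_a)}/\lambda}$, use $m_{a,n}[c]=0$, then $(e^x-1)/e^x\le x$), arriving at $2\|u\|_\infty\bigl\|\tfrac{dm_n}{d\mu_n(t)}\bigr\|_{\infty,\boldsymbol\mu}$. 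You instead decouple the actions by the substitution $\mu=\lambda/\pi(a|s)$, work directly with $\log\hat A-\log A$ (no interpolated measure, no envelope/MVT machinery), center the deviation as $\hat A-A=\sum_{s'}(\hat P-P)(Y-A)$, and prove the scale-free symmetrization lemma $\mu\,\mathbb{E}|Y-\mathbb{E}Y|/\mathbb{E}Y\le 2\,\mathrm{osc}(W)$; both routes achieve the essential cancellation of $\lambda$ that makes the bound uniform over $\rho\in(0,\infty)$ and $O(1)$ as $\rho\downarrow 0$. Your version is more elementary and self-contained; its only cost is numerical: chaining $|\log(1+x)|\le 2|x|$ with the factor $4$ from $\mathrm{osc}(W)\le 2\|R+\gamma v\|_\infty$ and the Bernstein constant gives roughly $8\tau/\sqrt{\mathfrak{p}_\wedge}$ with $\tau\approx 1.5\sqrt{\log(\cdot)/n}$, i.e.\ a constant somewhat above the stated $9$ (the paper lands at $8/3+4\sqrt2\approx 8.32$). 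To recover the constant exactly you would need to sharpen $|\log(1+x)|\le 2|x|$ to $|\log(1+x)|\le 2\ln 2\,|x|$ on $|x|\le 1/2$ or similar; this is bookkeeping rather than a gap in the argument.
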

The proof of Proposition~\ref{proposition:kl-bellman-operator-error} is provided in Appendix~\ref{sub:proof_of_proposition_kl}. Then, combining Proposition~\ref{proposition:kl-bellman-operator-error} with Proposition \ref{proposition:value-function-error}, and the fact that $\norminf{R+\gamma v^*}\leq 1/(1-\gamma)$ under our assumption that $R\in[0,1]$, we arrive at the following theorem. The proof is presented in Appendix \ref{sub:proof_of_theorem_kl}. 
\begin{theorem}
Assume the adversary chooses from a KL-divergence uncertainty set with $\rho\in(0,\infty)$. If the sample size $n\geq12\mathfrak{p}_\wedge^{-1}\log(4|\mathcal{S}|^2|\mathcal{A}|/\eta)$, then, with probability at least $1-\eta$,
\begin{equation*}
    \|\hat{v} - v^*\|_\infty\leq\frac{9}{(1-\gamma)^2\sqrt{n\mathfrak{p}_\wedge}}\sqrt{\log(4|\cS|^2|\cA|/\eta)}.
\end{equation*}
Moreover, with probability at least $1-\eta$,
\begin{equation*}
  \sup_{\pi\in\Pi}V_{\mathcal{P}}^{\pi}(s)-V_{\mathcal{P}}^{\hat{\pi}^*}(s)\leq\frac{18}{(1-\gamma)^2\sqrt{n\mathfrak{p}_\wedge}}\sqrt{\log{(4|\mathcal{S}|^2|\mathcal{A}|/\eta)}}.
\end{equation*}
\label{theorem: kl}
\end{theorem}

\begin{remark}
Therefore, under the KL-divergence, a total of $\widetilde O(|\cS||\cA|(1-\gamma)^{-4}\mathfrak{p}_\wedge^{-1}\varepsilon^{-2})$ samples from the simulator suffices to obtain an $\varepsilon$-accurate estimate of $v^*$ with $\hat v$, or an $\varepsilon$-optimal policy with high probability. 
\end{remark}

\subsection[fk-Divergence Uncertainty Set]{$f_k$-Divergence Uncertainty Set}
Next, we consider a subclass of the Cressie-Read family of $f_k$-divergence with $k\in(1,\infty)$, as studied in \citet{duchiLearningModelsUniform2021}. 

\begin{definition} \label{def:fk_divergence}
For $k\in(1,\infty)$, the $f_k$-divergence is defined by the divergence functions $f_k(t):=(t^k-kt+k-1)/(k(k-1))$. We also define $k^* = k/(k-1)$. 
\end{definition}
Notably, when $k = 2$, the $f_2$-divergence is the $\chi^2$-divergence, which sees extensive application in the statistical testing literature. Moreover, when $k\da 1$, the $f_k$ induced divergence converges to KL. 

The analysis for $f_k$-divergence uncertainty sets follows the same strategy to KL-divergence in the previous subsection. Below we summarise the main results.

\begin{lemma}
With $\cP = \cP(f_k,\rho)$ and $\rho\in(0,\infty)$, for any $\pi\in\Pi$ and $s\in\mathcal{S}$, the dual form of the DR Bellman operator with $f_k$ uncertainty set $\cP$ is
\begin{equation*}
\mathcal{T}^\pi (v)(s) = \sup_{\eta\in\mathbb{R}^{|\mathcal{A}|}}\Bigg[\sum_{a\in\mathcal{A}} \eta_a-c\left(\sum_{a\in\mathcal{A}}\mathbb{E}_{\overline{P}_{s,a}}\left[(\eta_a-\pi(a|s)w(s,a,S))_+^{k^*} \right]\right)^{1/k^*}\Bigg]
\end{equation*}
where $c = c(k,\rho,|\mathcal{A}|) = |\mathcal{A}|^{1/k}\left(k(k-1)\rho+1 \right)^{1/k}$, $(\cd)_+ = \max(\cd,0)$ and $w(s,a,S)=R(s,a,S)+\gamma v(S)$. The $f_k$ empirical DR Bellman operator $\hat{\mathbf{T}}^{\pi}$ satisfies a similar equality with $\overline{P}_{s,a}$ replaced by $\overline{P}_{s,a,n}$.
\label{lemma:f-divergence-bellman-operator}
\end{lemma}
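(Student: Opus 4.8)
\emph{Proof plan.} The strategy mirrors the KL case of Lemma~\ref{lemma:kl-divergence-bellman-operator}: cast the adversary's problem as a convex program in the transition probabilities, dualize, and evaluate the conjugate of $f_k$ in closed form. Fix $s$ and abbreviate $g_a(s'):=R(s,a,s')+\gamma v(s')$ and $w_a:=\pi(a\mid s)$. Since every admissible $P_{s,a}\in\cP_s(f_k,\rho)$ satisfies $P_{s,a}\ll\overline P_{s,a}$, I would parametrize by the likelihood ratios $L_a:=\mathrm dP_{s,a}/\mathrm d\overline P_{s,a}\ge0$, turning $\mathcal T^\pi(v)(s)$ into
\begin{equation*}
\inf_{\{L_a\}}\ \sum_{a\in\cA} w_a\,\mathbb E_{\overline P_{s,a}}[L_a g_a]\quad\text{s.t.}\quad L_a\ge0,\ \ \mathbb E_{\overline P_{s,a}}[L_a]=1\ \ \forall a,\ \ \sum_{a}\mathbb E_{\overline P_{s,a}}[f_k(L_a)]\le|\cA|\rho.
\end{equation*}
This is convex in $\{L_a\}$, and because $\rho>0$ the nominal choice $L_a\equiv1$ is strictly feasible, so Slater's condition holds and strong duality applies.

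I would attach a multiplier $\lambda\ge0$ to the single divergence constraint and multipliers $\eta_a\in\R$ to the $|\cA|$ normalization constraints. Exchanging $\inf$ and $\sup$, the inner minimization separates across actions and across $s'$, collapsing to the scalar problem $\inf_{\ell\ge0}\{\lambda f_k(\ell)-b\,\ell\}=-\lambda f_k^{*+}(b/\lambda)$ with $b=\eta_a-w_a g_a(s')$, where $f_k^{*+}(u):=\sup_{\ell\ge0}\{u\ell-f_k(\ell)\}$ is the conjugate on the nonnegative ray. Solving the stationarity condition $f_k'(\ell)=u$ explicitly gives the closed form $f_k^{*+}(u)=\big((1+(k-1)u)_+^{\,k^*}-1\big)/k$; this is the step that introduces the exponent $k^*$ and the positive part $(\cd)_+$ in the statement. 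Note the structural contrast with KL: there the normalization multipliers are absorbed into the $\log\mathbb E[\exp(\cd)]$ term, whereas here they survive as the free variables $\eta_a$.

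The decisive and most error-prone step is eliminating both dual variables in closed form to recover the exact constant $c$. The key manipulation is the shift $\tilde\eta_a:=\eta_a+\lambda/(k-1)$, which clears $\lambda$ from inside the positive part and rewrites each conjugate term as $(k-1)^{k^*}(\tilde\eta_a-w_a g_a)_+^{k^*}$ times a power of $\lambda$. Collecting the terms linear in $\lambda$ reduces the problem to $\sup_{\lambda\ge0}\{-A\lambda-B\lambda^{-1/(k-1)}\}$ with $A\propto|\cA|\,(k(k-1)\rho+1)$ and $B\propto\sum_{a}\mathbb E_{\overline P_{s,a}}[(\tilde\eta_a-w_a g_a)_+^{k^*}]$; its maximizer is $\lambda^\star=(\,\cd\,)^{1/k^*}$, and upon substituting back, all stray powers of $k$ and $(k-1)$ cancel and leave precisely $c=|\cA|^{1/k}(k(k-1)\rho+1)^{1/k}$ multiplying $\big(\sum_a\mathbb E_{\overline P_{s,a}}[(\tilde\eta_a-w_a g_a)_+^{k^*}]\big)^{1/k^*}$. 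Relabeling $\tilde\eta_a$ as $\eta_a$ and bookkeeping the sign of the normalization multiplier yields the stated dual. The empirical identity is then immediate: nothing in the argument uses $\overline P_{s,a}$ beyond its being a probability measure, so replacing it by $\overline P_{s,a,n}$ throughout gives the dual for $\hat{\mathbf T}^\pi$. I expect the constant-cancellation algebra, together with the rigorous justification of strong duality (feasibility, attainment, and the $\inf$--$\sup$ interchange), to be the main obstacles; the conjugate computation and the $\lambda$-shift are mechanical once set up.
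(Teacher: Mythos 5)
Your proposal is correct and follows essentially the same route as the paper: parametrize by likelihood ratios, dualize with a multiplier $\lambda$ for the divergence constraint and $\eta_a$ for the normalizations, evaluate the conjugate $f_k^*(u)=\frac{1}{k}\big(((k-1)u+1)_+^{k^*}-1\big)$, apply the shift $\tilde\eta_a=\eta_a+\lambda/(k-1)$, and eliminate $\lambda$ via first-order conditions to produce $c=|\mathcal{A}|^{1/k}(k(k-1)\rho+1)^{1/k}$. Your explicit invocation of Slater's condition is a point the paper leaves implicit, but otherwise the two arguments coincide step for step.
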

The proof of Lemma~\ref{lemma:f-divergence-bellman-operator} is provided in Appendix~\ref{sub:f-dual-proof}. Again, with this dual representation of the DR Bellman operators and refined estimation error analysis, we arrive at the following result. 

\begin{proposition}
\label{proposition:f-divergence-bellman-operator-error}
Under the $f_k$-divergence uncertainty set with any $\rho\in(0,\infty)$, for any $v:\cS\ra \R_+$ and $n\geq12\mathfrak{p}_\wedge^{-1}\log(4|\mathcal{S}|^2|\mathcal{A}|/\eta)$, w.p. at least $1-\eta$,
\begin{equation*}
  \|\hat{\mathbf{T}}^*(v)-\mathcal{T}^*(v)\|_\infty\leq\sup_{\pi\in\Pi}\|\hat{\mathbf{T}}^{\pi}(v)-\mathcal{T}^{\pi}(v)\|_\infty\leq\frac{9\|R+\gamma v\|_{\infty}}{\sqrt{n\mathfrak{p}_{\wedge}}}\sqrt{\log\left(4|\mathcal{S}|^2|\mathcal{A}|/\eta\right)}. 
\end{equation*}
\end{proposition}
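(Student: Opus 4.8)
The plan is to follow the same route as the KL case (Proposition~\ref{proposition:kl-bellman-operator-error}), replacing the log-moment-generating functional by the $L^{k^*}$-type functional appearing in the dual of Lemma~\ref{lemma:f-divergence-bellman-operator}. First I would strip off the outer optimization over policies. Since $\hat{\mathbf{T}}^*(v)(s)=\sup_{\pi\in\Pi}\hat{\mathbf{T}}^\pi(v)(s)$ and $\mathcal{T}^*(v)(s)=\sup_{\pi\in\Pi}\mathcal{T}^\pi(v)(s)$, the elementary inequality $\abs{\sup_\pi f(\pi)-\sup_\pi g(\pi)}\le\sup_\pi\abs{f(\pi)-g(\pi)}$ gives $\norminf{\hat{\mathbf{T}}^*(v)-\mathcal{T}^*(v)}\le\sup_{s\in\cS}\sup_{\pi\in\Pi}\abs{\hat{\mathbf{T}}^\pi(v)(s)-\mathcal{T}^\pi(v)(s)}$. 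This is the step that avoids paying a covering number for the full randomized policy class and is what ultimately yields the $\abs{\cS}\abs{\cA}$ (rather than $\abs{\cS}^2\abs{\cA}^2$) dependence.

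Next I would substitute the dual representation. For fixed $s,\pi$, writing $Z_a(\cdot):=\pi(a|s)\sqbk{R(s,a,\cdot)+\gamma v(\cdot)}$, both operators equal $-\sup_\eta$ of an objective whose only data-dependent part is $c\,(\sum_a\mathbb{E}_{\cdot}[(\eta_a-Z_a)_+^{k^*}])^{1/k^*}$; the linear piece $\sum_a\eta_a$ and the constant $c$ are common to the empirical and population problems. Comparing the two objectives at their respective maximizers $\hat\eta,\eta^*$ and using optimality to sandwich the difference, it suffices to bound $c\,\abs{B(\eta)^{1/k^*}-A(\eta)^{1/k^*}}$ at $\eta\in\set{\hat\eta,\eta^*}$, where $A(\eta)=\sum_a\mathbb{E}_{\overline P_{s,a,n}}[(\eta_a-Z_a)_+^{k^*}]$ and $B(\eta)$ is its population analogue. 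Because $0\le Z_a\le\norminf{R+\gamma v}$, the maximizer lies in the box $\eta_a\in[\min_{s'}Z_a(s'),\max_{s'}Z_a(s')]$, so every integrand is at most $(2\norminf{R+\gamma v})^{k^*}$ --- this is the origin of the $2^{k^*}$ factor.

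The main obstacle is controlling $\abs{B^{1/k^*}-A^{1/k^*}}$: the map $x\mapsto x^{1/k^*}$ has an unbounded derivative at $0$, so a crude mean-value bound paired with the $O(n^{-1/2})$ fluctuation of $A-B$ would degrade the rate to $n^{-1/(2k^*)}$, and bounding $\abs{A-B}$ by summing over actions would spuriously reintroduce a factor $\abs{\cA}$. I would resolve this with a variance-aware (Bernstein) analysis together with the support-matching event. Concretely, I first condition on the high-probability event --- guaranteed by the hypothesis $n\ge 12\mathfrak{p}_\wedge^{-1}\log(4\abs{\cS}^2\abs{\cA}/\eta)$ via a union bound over the $\abs{\cS}^2\abs{\cA}$ triples $(s,a,s')$ --- that $\overline P_{s,a,n}$ and $\overline P_{s,a}$ have identical support and are relatively close. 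On this event the worst (degenerate) case $B=0$, where $\eta_a=\min_{s'\in\supp\overline P_{s,a}}Z_a(s')$, forces the empirical integrand to vanish as well, so that branch contributes nothing. Off the degenerate branch I would use that the variance of $(\eta_a-Z_a)_+^{k^*}$ is self-bounding, i.e. controlled by $(2\norminf{R+\gamma v})^{k^*}$ times its own mean; this cancels the singular power $B^{1/k^*-1}$ from the mean-value step and restores the $n^{-1/2}$ rate, while the minimum support probability $\mathfrak{p}_\wedge$ enters through the relative concentration of the least-likely support atom, producing the $\mathfrak{p}_\wedge^{-1/2}$ factor. The residual action-count is absorbed by $c=\abs{\cA}^{1/k}(k(k-1)\rho+1)^{1/k}$ because $1/k+1/k^*=1$, leaving no net $\abs{\cA}$ dependence.

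Finally, collecting the union bound over $(s,a,s')$ yields the $\sqrt{\log(4\abs{\cS}^2\abs{\cA}/\eta)}$ factor, and tracking the constants through the mean-value and Bernstein steps gives the stated prefactor $3\cdot 2^{k^*}k^*$. I expect the third step --- simultaneously taming the singular power $x\mapsto x^{1/k^*}$, retaining the optimal $\abs{\cA}$ cancellation, and doing so uniformly over the unbounded dual domain and over all of $\Pi$ --- to be the crux; the uniformity over the data-dependent maximizer $\hat\eta$ is what makes the self-bounding/support-matching combination, rather than a naive Lipschitz estimate, essential.
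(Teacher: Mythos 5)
Your outer skeleton matches the paper's: strip the policy supremum via $\lvert\sup_\pi f-\sup_\pi g\rvert\le\sup_\pi\lvert f-g\rvert$, pass to the dual of Lemma~\ref{lemma:f-divergence-bellman-operator}, condition on a multiplicative-closeness event, and close with Bernstein plus a union bound over the $\lvert\mathcal{S}\rvert^2\lvert\mathcal{A}\rvert$ triples. But the step you yourself identify as the crux does not go through as described. First, the claim that the dual maximizer lies in the box $\eta_a\in[\min_{s'}Z_a(s'),\max_{s'}Z_a(s')]$ is false: the dual domain is genuinely unbounded, and the first-order condition forces $\eta_a^*\to\infty$ as $\rho\downarrow 0$ (when $c\downarrow\lvert\mathcal{A}\rvert^{1/k}$ the Hölder-type equality is approached only in that limit). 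This is why the paper's Lemma~\ref{lemma:f-divergence-Delta} must separately treat the branch $\eta_a\ge 2d_a\lVert u\rVert_{L^\infty(\mu_a)}$, and it is there — via $\xi\le\eta_a\le 2(\eta_a-d_au)$ together with the identity $m_{a,n}[\mathrm{const}]=0$ — that the factor $2^{1/(k-1)}$ actually originates; it is not a consequence of the integrand being bounded by $(2\lVert u\rVert)^{k^*}$. With the box assumption gone, your uniform bound on the integrands and on the self-bounding variance constant both fail. Second, even granting a bounded box, the mean-value factor $B^{1/k^*-1}$ combined with a Bernstein fluctuation of order $\sqrt{B}$ leaves $B^{1/k^*-1/2}$, which is still singular at $B=0$ whenever $k^*>2$ (i.e.\ for all $k\in(1,2)$, a range the proposition covers); your degenerate-branch argument handles $B=0$ exactly but not $B$ small, so the singularity of $x\mapsto x^{1/k^*}$ is not tamed.

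The paper's mechanism for this step is different and is worth internalizing. It interpolates $\mu_{a,n}(t)=t\mu_a+(1-t)\mu_{a,n}$, invokes the Milgrom--Segal envelope theorem to differentiate $t\mapsto\sup_{\boldsymbol\eta}f(\boldsymbol\mu_n(t),u,\boldsymbol\eta)$, and — crucially — uses the first-order optimality condition $\bigl(\sum_a\mu_a[w_a^{k^*}]\bigr)^{1/k}=c\,\mu_i[w_i^{1/(k-1)}]$ to replace the singular normalization $c\,(\sum_a\mu_{a,n}(t)[w_a^{k^*}])^{-1/k}$ by the strictly positive denominator $\mu_{i,n}(t)[w_i^{1/(k-1)}]^{-1}$. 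Since $w_a^{k^*}=w_a\cdot w_a^{1/(k-1)}$, each resulting ratio $m_{a,n}[w_a^{k^*}]/\mu_{a,n}(t)[w_a^{1/(k-1)}]$ is bounded by $d_a\lVert u\rVert$ times the relative deviation $\lVert dm_{a,n}/d\mu_{a,n}(t)\rVert_{L^\infty(\mu_a)}$, and the sum over actions costs nothing because $\sum_a d_a=\sum_a\pi(a\vert s)=1$ — the $\lvert\mathcal{A}\rvert$ cancellation comes from the policy normalization, not (as you suggest) from $c=\lvert\mathcal{A}\rvert^{1/k}(\cdots)^{1/k}$ paired with $1/k+1/k^*=1$. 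The probabilistic part then needs only the multiplicative Chernoff bound on the relative deviation, with no variance-aware argument on the functional $A(\boldsymbol\eta)$ itself.
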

The proof of Proposition~\ref{proposition:f-divergence-bellman-operator-error} is provided in Appendix~\ref{sub:proposition-f-proof}. This, combined with Proposition \ref{proposition:value-function-error}, implies the following error bound, whose proof is deferred to Appendix~\ref{sub:theorem-f-proof}.

\begin{theorem}
Assume the adversary chooses from a $f_k$-divergence uncertainty set with $\rho\in(0,\infty)$. If $n\geq 12\mathfrak{p}_\wedge^{-1}\log(4|\mathcal{S}|^2|\mathcal{A}|/\eta)$, then, w.p. at least $1-\eta$,
\begin{equation*}
    \|\hat{v}-v^*\|_\infty\leq \frac{9}{(1-\gamma)^2\sqrt{n\mathfrak{p}_\wedge}}\sqrt{\log(4|\cS|^2|\cA|/\eta)}.
\end{equation*}
Moreover, w.p. at least $1-\eta$,
\begin{equation*}
\sup_{\pi\in\Pi}V_{\mathcal{P}}^{\pi}(s)-V_{\mathcal{P}}^{\hat{\pi}^*}(s)\leq\frac{18}{(1-\gamma)^2\sqrt{n\mathfrak{p}_\wedge}}\sqrt{\log\left(4|\mathcal{S}|^2|\mathcal{A}|/\eta\right)}.
\end{equation*}
\label{theorem:f}
\end{theorem}
\begin{remark}
Therefore, under the $f_k$-divergence, an $\varepsilon$-accurate estimate of $v^*$ with $\hat v$, or an $\varepsilon$-optimal policy with high probability, can be obtained using a total of $\widetilde O(|\cS||\cA|(1-\gamma)^{-4}\mathfrak{p}_\wedge^{-1}\varepsilon^{-2})$ samples.

\end{remark}

\begin{figure}[ht]
  \centering
  \begin{subfigure}{0.35\linewidth}
    \includegraphics[width=\linewidth]{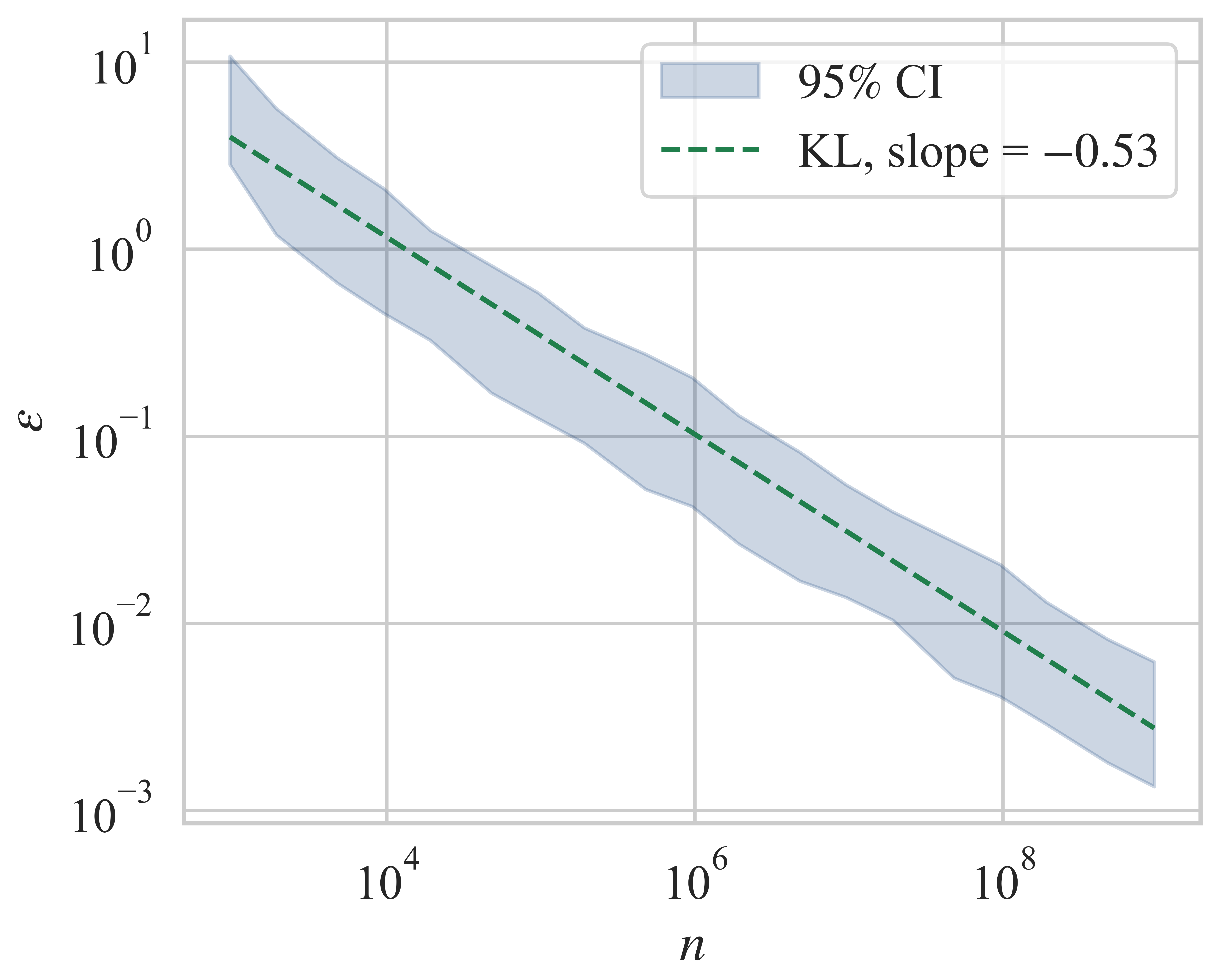}
    \caption{Uncertainty sets based on KL-divergence}
    \label{fig:n-KL}
  \end{subfigure}
  \begin{subfigure}{0.35\linewidth}
    \includegraphics[width=\linewidth]{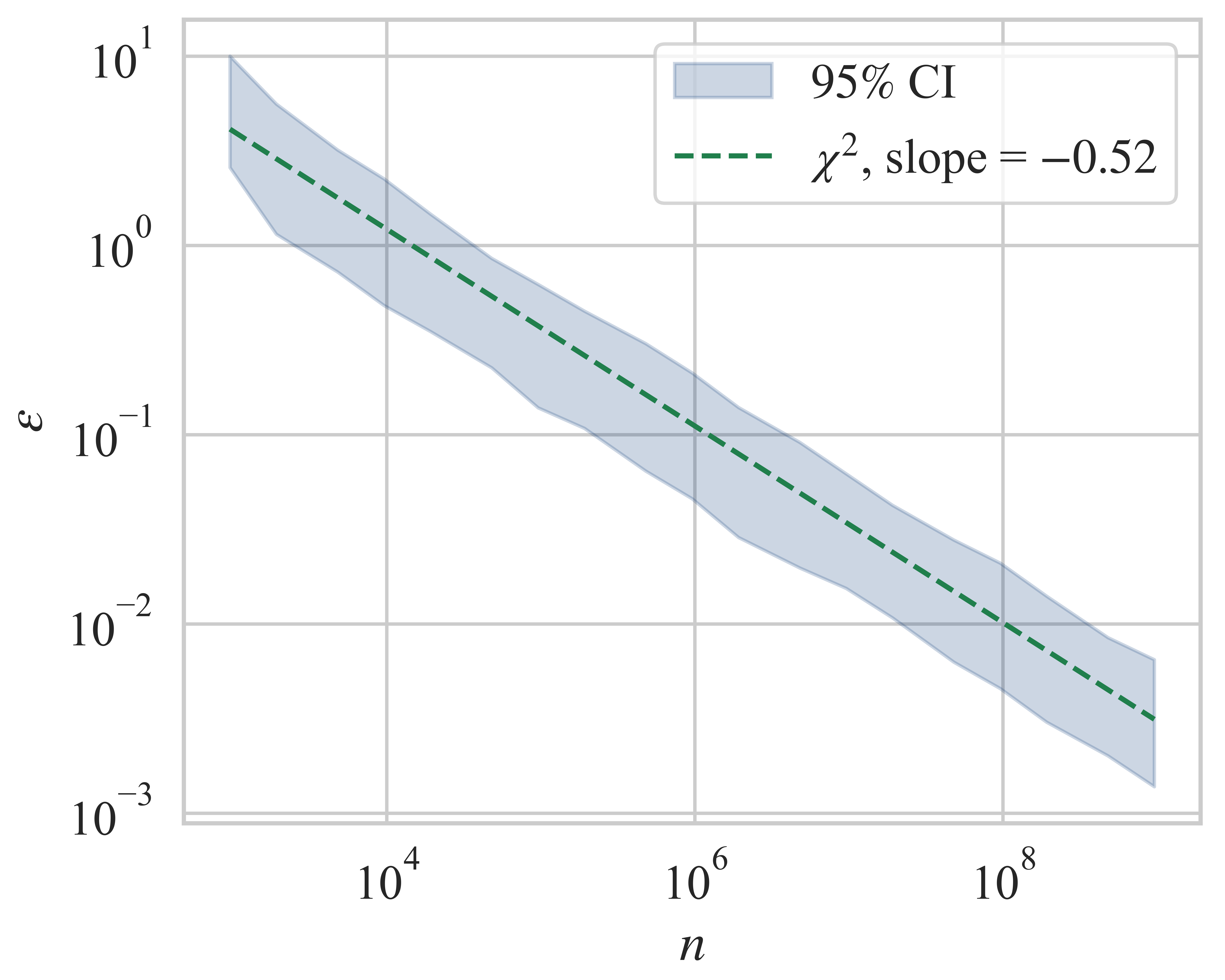}
    \caption{Uncertainty sets based on $\chi^2$-divergence}
    \label{fig:n-chi-square}
  \end{subfigure}
  \caption{Estimation error versus sample size $n$ in the robust inventory control problem.}
  \label{fig:n}
\end{figure}

\section{Numerical Experiments}
\label{sec:numerical}
In this section, we present two sets of numerical examples. In Section \ref{subsec:inventory}, we revisit the robust inventory problem from \citet{ho2018fast}, which features uncertain demand, to demonstrate the $n^{-1/2}$ error decay rate. In Section \ref{subsec:ld_example}, we consider an example from \citet{yang2022toward} to illustrate the linear dependence on $|\mathcal{S}||\mathcal{A}|$, which matches the lower bound established in \citet{yang2022toward}.
\subsection{Robust Inventory Control Problems}
\label{subsec:inventory}

We investigate the dependency of the estimation error $\varepsilon$ on the sample size $n$ and evaluate our approach on a classical discrete-time inventory management problem with stochastic demand and backlog \citep{ho2018fast}. In each period $t$, an agent decides the order quantity to maximize cumulative discounted rewards, accounting for holding costs, backlog penalties, and profits.

Let $I$ denote the maximum inventory level, $B$ the maximum backlog, and $O$ the maximum order quantity per period. The state space is defined as $\mathcal{S}=\{-B,\cdots, 0,\cdots,I\}$, the action space is $\mathcal{A} = \{0,\cdots,O\}$, where $s_t\in\mathcal{S}$ and $a_t\in\mathcal{A}$ denote the inventory and order item at the beginning of period $t$. Demand $D_t\in\{0,\cdots,D_{\max}\}$ is an i.i.d. sequence, with distribution $P_D\in\Delta(D_{\max}+1)$.


The MDP dynamics proceed as follows. Due to storage constraints, the effective order size is $\tilde{A}_t = \min(A_t, I-S_t).$
Then, the next state evolves as $S_{t+1} = \max(S_t + \tilde{A}_t - D_t, -B),$ ensuring that the backlog does not exceed $B$. The actual sales in period $t$ are given by
$X_t = S_t - S_{t+1} + \tilde{A}_t.$
A one-step reward $R(S_t,A_t,S_{t+1}) = pX_t+b\min(S_{t+1},0)-h\max(S_{t+1},0)-c\tilde A_t$ is collected, 
where $p$ is the sales price, $c$ is the purchase cost, $h$ is the holding cost, and $b$ is the penalty of backlog.

\begin{figure}[!ht]
\centering
\begin{tikzpicture}[
  ->,
  shorten >=1pt,
  node distance=2cm,
  on grid,
  >={Stealth[round]},
  every state/.style={draw=black,thick,minimum size=20pt,inner sep=0pt},
  every dot/.style={},
  every loop/.style={looseness=5}
]
  \node at (0.7,1) {$\mathcal{Y}_1$};
  \node at (0.7,2.5) {$\mathcal{Y}_2$};
  \node[state] (x_1)  at (3.5 ,0  ) {$x_1$};
  \node[state] (y_11) at (1.5 ,1  ) {};
  \node[state] (y_12) at (3   ,1  ) {};
  \node[state] (y_13) at (5   ,1  ) {};
  \node[state] (y_21) at (1.5 ,2.5) {};
  \node[state] (y_22) at (3   ,2.5) {};
  \node[state] (y_23) at (5   ,2.5) {};
  \node[state] (x_2)  at (7.5 ,0  ) {$x_2$};
  \node[state] (y_14) at (6.5 ,1  ) {};
  \node[state] (y_15) at (8.5 ,1  ) {};
  \node[state] (y_24) at (6.5 ,2.5) {};
  \node[state] (y_25) at (8.5 ,2.5) {};
  \node[state] (x_3)  at (11.5,0  ) {$x_{|\mathcal{S}|}$};
  \node[state] (y_16) at (10.5,1  ) {};
  \node[state] (y_17) at (12.5,1  ) {};
  \node[state] (y_26) at (10.5,2.5) {};
  \node[state] (y_27) at (12.5,2.5) {};

  \node[draw=black,fill=black,minimum size=2pt,inner sep=0pt,circle] (point_1) at (3.7,1.8) {};
  \node[draw=black,fill=black,minimum size=2pt,inner sep=0pt,circle] (point_2) at (4,1.8) {};
  \node[draw=black,fill=black,minimum size=2pt,inner sep=0pt,circle] (point_3) at (4.3,1.8) {};
  \node[draw=black,fill=black,minimum size=2pt,inner sep=0pt,circle] (point_1) at (7.2,1.8) {};
  \node[draw=black,fill=black,minimum size=2pt,inner sep=0pt,circle] (point_2) at (7.5,1.8) {};
  \node[draw=black,fill=black,minimum size=2pt,inner sep=0pt,circle] (point_3) at (7.8,1.8) {};
  \node[draw=black,fill=black,minimum size=2pt,inner sep=0pt,circle] (point_1) at (11.2,1.8) {};
  \node[draw=black,fill=black,minimum size=2pt,inner sep=0pt,circle] (point_2) at (11.5,1.8) {};
  \node[draw=black,fill=black,minimum size=2pt,inner sep=0pt,circle] (point_3) at (11.8,1.8) {};
  \node[draw=black,fill=black,minimum size=2pt,inner sep=0pt,circle] (point_1) at (9.2,0) {};
  \node[draw=black,fill=black,minimum size=2pt,inner sep=0pt,circle] (point_2) at (9.5,0) {};
  \node[draw=black,fill=black,minimum size=2pt,inner sep=0pt,circle] (point_3) at (9.8,0) {};

  \path[]
  (x_1) edge node [below left] {$a_1$} (y_11.south)
  (x_1) edge node [right] {$a_2$} (y_12.south)
  (x_1) edge node [below right] {$a_{|\mathcal{A}|}$} (y_13.south)
  (x_2) edge node [below left] {$a_1$} (y_14.south)
  (x_2) edge node [below right] {$a_{|\mathcal{A}|}$} (y_15.south)
  (x_3) edge node [below left] {$a_1$} (y_16.south)
  (x_3) edge node [below right] {$a_{|\mathcal{A}|}$} (y_17.south)

  (y_11) edge node[scale=0.8, below right] {$1-p$} (y_21)
  (y_12) edge node[scale=0.8, below right] {$1-p$} (y_22)
  (y_13) edge node[scale=0.8, below right] {$1-p$} (y_23)
  (y_14) edge node[scale=0.8, below right] {$1-p$} (y_24)
  (y_15) edge node[scale=0.8, below right] {$1-p$} (y_25)
  (y_16) edge node[scale=0.8, below right] {$1-p$} (y_26)
  (y_17) edge node[scale=0.8, below right] {$1-p$} (y_27)

  ;
  \roundloop{90}{15pt}{y_21}{$1$}
  \roundloop{90}{15pt}{y_22}{$1$}
  \roundloop{90}{15pt}{y_23}{$1$}
  \roundloop{90}{15pt}{y_24}{$1$}
  \roundloop{90}{15pt}{y_25}{$1$}
  \roundloop{90}{15pt}{y_26}{$1$}
  \roundloop{90}{15pt}{y_27}{$1$}
  \roundloop{0}{15pt}{y_11}{$p$}
  \roundloop{0}{15pt}{y_12}{$p$}
  \roundloop{0}{15pt}{y_13}{$p$}
  \roundloop{0}{15pt}{y_14}{$p$}
  \roundloop{0}{15pt}{y_15}{$p$}
  \roundloop{0}{10pt}{y_16}{$p$}
  \roundloop{0}{10pt}{y_17}{$p$}
\end{tikzpicture}
\caption{MDP instances from the lower bound construction in \citet{yang2022toward}.}
\label{fig:general-mdp}
\end{figure}
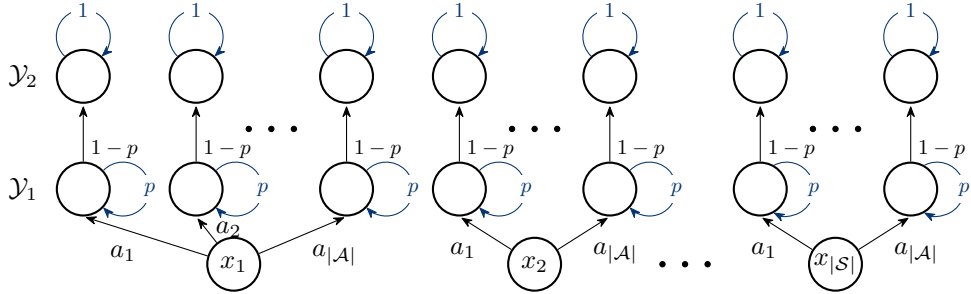

For our experiments, we set the parameters as follows: $I = 10$, $B = 5$, $O = 5$, $p = 3$, $c = 2$, $h = 0.2$, $b = 3$, $\gamma = 0.9$, and use the nominal demand distribution $P_D = [0.1, 0.2, 0.3, 0.3, 0.1]$, supported on $\{0, 1, 2, 3, 4\}$. For each $(s,a)$, we sample $n_0$ samples from the nominal transition kernels to generate the estimated transition probability $P_n$, and solve the DR-MDP problem with uncertainty size $\rho=1/6$ using the algorithm presented in \cite{ho2022robust}.

Figure \ref{fig:n} illustrates the relationship between the sample size $n$ and the error $\varepsilon$ between the empirical and population value functions. As shown in the log-log plot, the slope is approximately $-0.5$ for both the KL and $\chi^2$ cases, indicating that the error decreases at a rate proportional to $1/\sqrt{n}$.

\subsection[MDP Instances from the Lower Bound Construction]{MDP Instances from the Lower Bound Construction in \citet{yang2022toward}}
\label{subsec:ld_example}

In this section, we investigate the relationship between the estimation error and the sizes of the state space $|\mathcal{S}|$ and action space $|\mathcal{A}|$. We adopt the classic MDP structure introduced in \citet{gheshlaghi2013minimax} and \citet{yang2022toward}, which comprises three subsets: $\mathcal{S}$, $\mathcal{Y}_1$, and $\mathcal{Y}_2$, as illustrated in Figure~\ref{fig:general-mdp}.

Specifically, $\mathcal{S}$ denotes the set of all initial states, each associated with an action set $\mathcal{A}$. When an action $a_i \in \mathcal{A}$ is taken in state $s \in \mathcal{S}$, the system deterministically transitions (with probability 1) to the corresponding state $y_{1,s,a} \in \mathcal{Y}_1$. From each $y_{1,s,a}$, the system either remains in the same state with nominal probability $p$, or transitions to the corresponding absorbing state $y_{2,s,a} \in \mathcal{Y}_2$ with nominal probability $1-p$. All states in $\mathcal{Y}_2$ are absorbing, meaning that once the system enters one of these states, it remains there indefinitely via a self-loop with probability 1. 
The reward function is defined such that a reward of 1 is obtained only when the system is in any state within $\mathcal{Y}_1$; all other states yield a reward of 0. We solve the DR-MDP problem with $\gamma=0.9$ and uncertainty size $\rho = 0.1$.

\begin{figure}[!ht]
  \centering
  \begin{subfigure}{0.48\linewidth}
    \includegraphics[width=\linewidth]{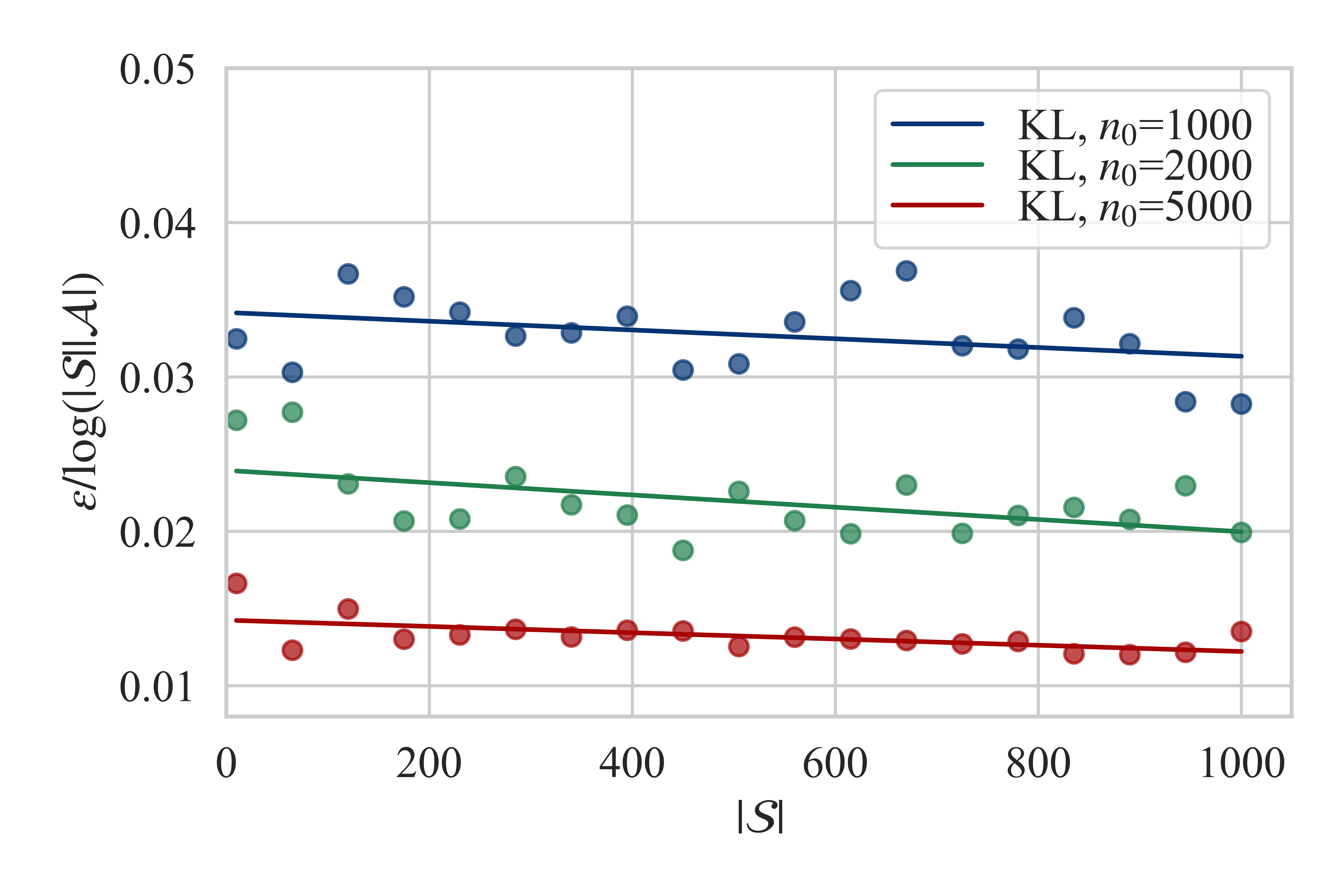}
    \caption{Uncertainty sets based on KL-divergence}
    \label{fig:general-mdp-S-a}
  \end{subfigure}
  \begin{subfigure}{0.48\linewidth}
    \includegraphics[width=\linewidth]{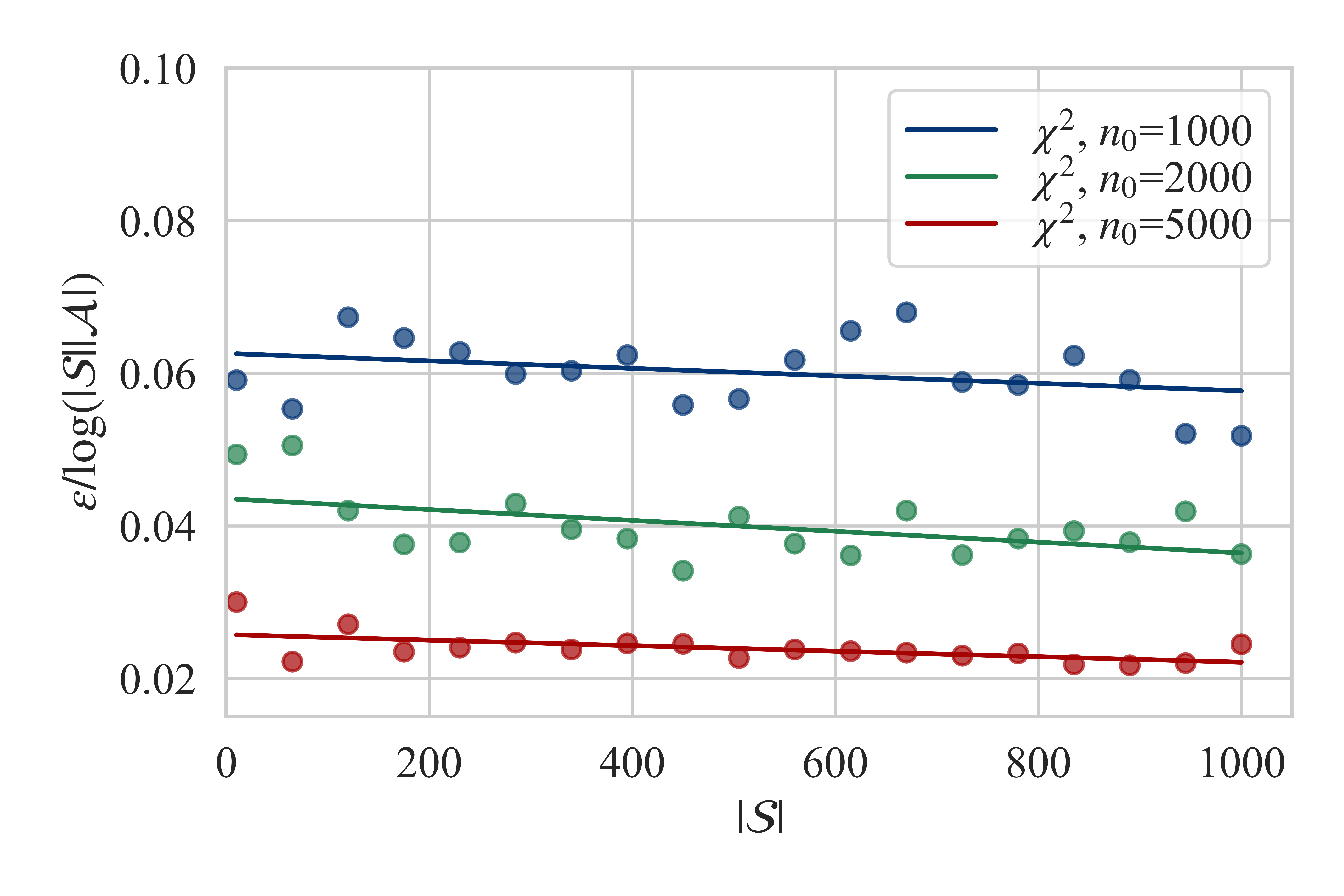}
    \caption{Uncertainty sets based on $\chi^2$-divergence}
    \label{fig:general-mdp-S-b}
  \end{subfigure}
 \caption{Estimation error versus the number of states $|\mathcal{S}|$ for the MDP instances based on the lower bound construction in \citet{yang2022toward}.}
  \label{fig:general-mdp-S}
\end{figure}

In our experiments, we first fix $|\mathcal{A}| = 65$ and vary the number of states from 10 to 1000, with the results shown in Figure~\ref{fig:general-mdp-S}. We then fix $|\mathcal{S}| = 65$ and vary $|\mathcal{A}|$ over the same range, with the corresponding results presented in Figure~\ref{fig:general-mdp-A}.

To align with our theoretical results, we normalize the estimation error by dividing it by $\log(|\mathcal{S}||\mathcal{A}|)$. Figures~\ref{fig:general-mdp-S} and~\ref{fig:general-mdp-A} display the behavior of this normalized error as $|\mathcal{S}|$ and $|\mathcal{A}|$ vary, respectively. Specifically, for each $(s, a)$ pair, we use $n_0$ samples, resulting in a total of $n_0 |\mathcal{S}| |\mathcal{A}|$ samples. The left subfigures correspond to the KL-divergence case, while the right subfigures correspond to the $\chi^2$-divergence case. 
We observe that as either $|\mathcal{S}|$ or $|\mathcal{A}|$ increases, the normalized error is non-increasing. This is consistent with our theoretical analysis, which predicts that the sample complexity scales linearly (up to logarithmic factors) with the product $|\mathcal{S}||\mathcal{A}|$.


\begin{figure}[!ht]
  \centering
  \begin{subfigure}{0.48\linewidth}
    \includegraphics[width=\linewidth]{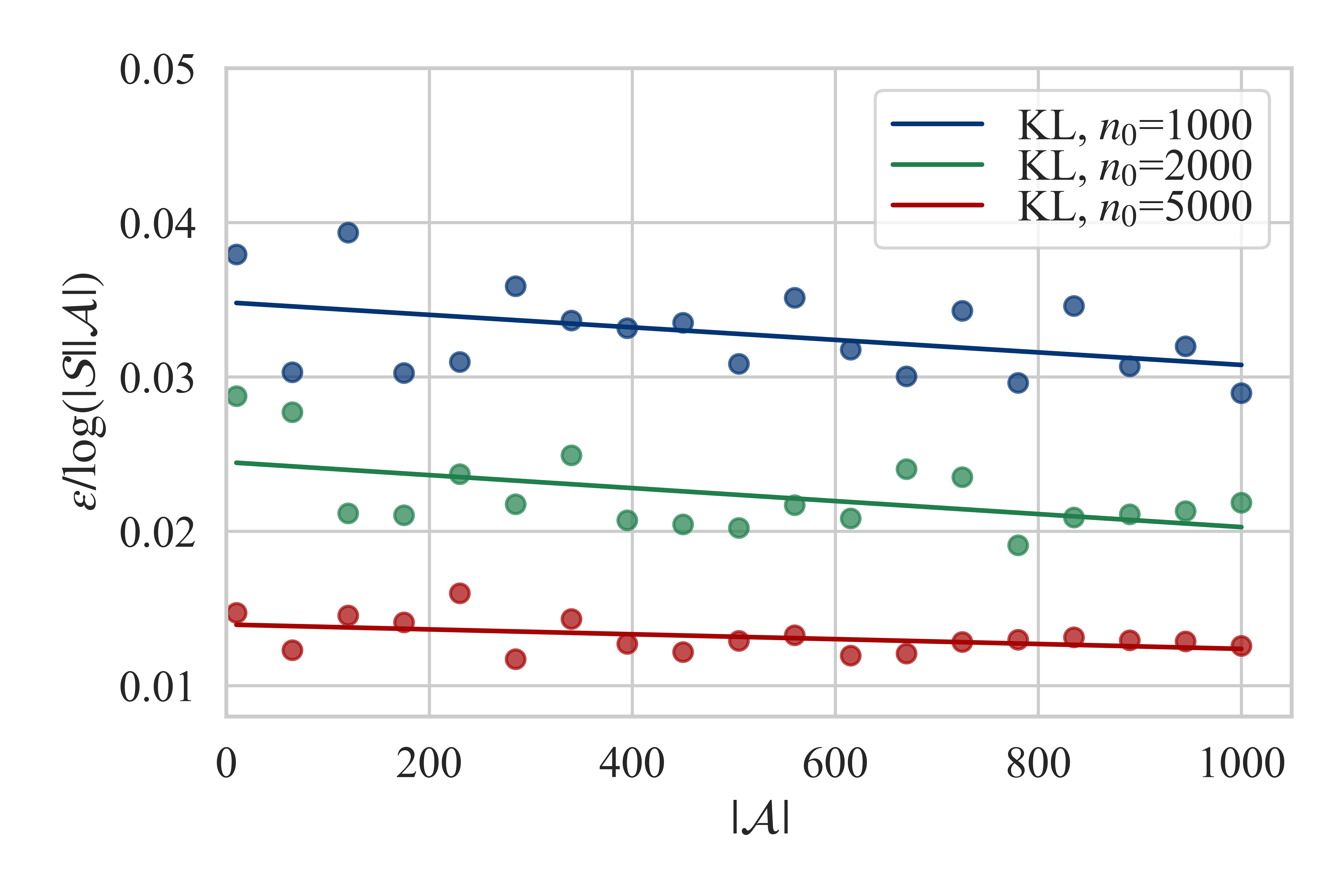}
    \caption{Uncertainty sets based on KL-divergence}
    \label{fig:general-mdp-A-a}
  \end{subfigure}
  \begin{subfigure}{0.48\linewidth}
    \includegraphics[width=\linewidth]{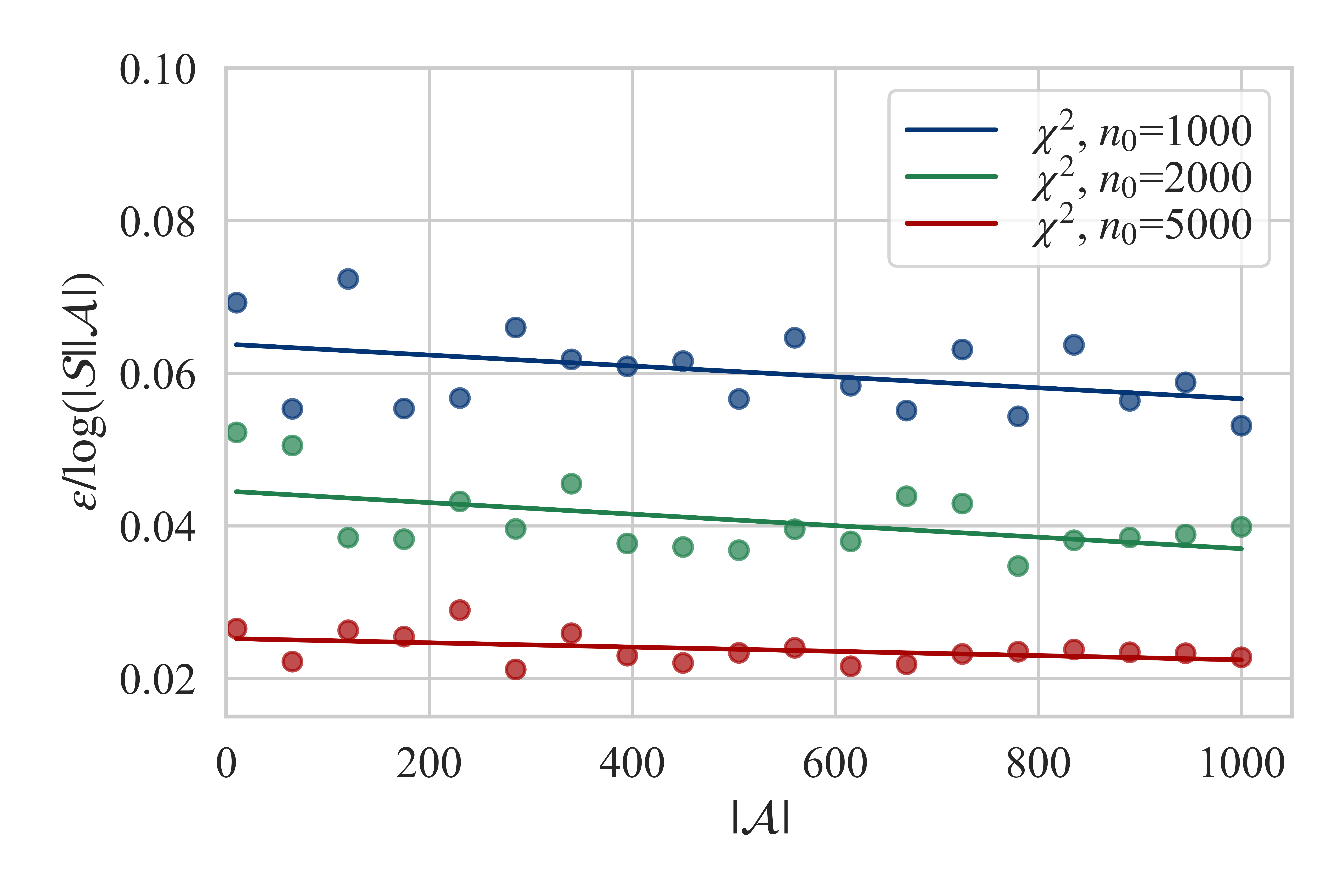}
    \caption{Uncertainty sets based on $\chi^2$-divergence}
    \label{fig:general-mdp-A-b}
  \end{subfigure}
  \caption{Estimation error versus the number of actions $|\mathcal{A}|$ for the MDP instances based on the lower bound construction in \citet{yang2022toward}.}
  \label{fig:general-mdp-A}
\end{figure}

\section{Conclusion and Future Work}
\label{sec:conclusion_and_future_work}
In this paper, we establish near-optimal sample complexity results for divergence-based S-rectangular robust MDPs under the discounted reward criterion. To the best of our knowledge, our results are the first to achieve optimal dependence on $|\mathcal{S}|$, $|\mathcal{A}|$, and $\varepsilon$ simultaneously.  
Looking ahead, several promising research directions remain open. A natural extension is to move beyond the generative model assumption and develop provable guarantees in more practical settings, such as model-free reinforcement learning and offline reinforcement learning. Another important avenue is to explore richer formulations of uncertainty, for instance, by modeling uncertainty around the underlying noise process in distributionally robust stochastic control. Finally, it is also interesting to investigate S-rectangular robust MDPs under the average-reward criterion.
\section*{Acknowledgement}
N. Si’s research is supported in part by the Early Career Scheme [Grant 	26210125] from the Hong Kong Research Grants Council and HKUST Li \& Fung Supply Chain Institute Research Grant 2025.
\FloatBarrier
\bibliographystyle{plainnat}
\bibliography{references,DR_MDP}

\clearpage
\appendix
\section*{Supplementary Materials}
\addcontentsline{toc}{section}{Supplementary Materials}

\section{Proofs of Value Function Error Bounds}
\label{section:value-function-error-proof}
In this section, we prove Proposition \ref{proposition:value-function-error}, Corollary \ref{corollary:value-function-error-1} and Corollary \ref{corollary:value-function-error-2}. We first show that both the population and the empirical S-rectangular Bellman operators $\mathcal{T}^*$ and $\hat{\mathbf{T}}^*$ are $\gamma$-contractions. Furthermore, for given $\pi$, the Bellman operators $\mathcal{T}^\pi$ and $\hat{\mathbf{T}}^\pi$ are also $\gamma$-contractions. This is a well-known fact, see for example \citep{wang_foundation_2024}. We include a proof to make the paper self-contained. 

\begin{lemma}
\label{lemma:Lipschitz-continuity}
Let
  \begin{equation*}
  f_{\pi,P}(v)(s) = \sum_{a\in\mathcal{A}}\pi(a|s)\left[\sum_{s^\prime\in\mathcal{S}} P_{s,a}(s^\prime)\left(R(s,a,s^\prime)+\gamma v(s^\prime)\right)\right]
  \end{equation*}
Then,
\begin{equation*}
\sup_{\pi\in\Pi}\sup_{P\in\mathcal{P}}\left| f_{\pi,P}(v_1)(s)-f_{\pi,P}(v_2)(s)\right|\leq\gamma \|v_1-v_2\|_{\infty}
\end{equation*}
\end{lemma}
\begin{proof}

Fix $s$. The reward terms cancel in the difference, hence for any $(\pi,P)$,
\begin{align*}
|f_{\pi,P}(v_1)(s)-f_{\pi,P}(v_2)(s)|
&=\gamma\Big|\sum_{a\in\cA}\pi(a|s)\sum_{s'\in\cS}P_{s,a}(s')\big(v_1(s')-v_2(s')\big)\Big|\\
&\le \gamma \sum_{a\in\cA}\pi(a|s)\sum_{s'\in\cS}P_{s,a}(s')\,\|v_1-v_2\|_\infty \\
&=\gamma\,\|v_1-v_2\|_\infty,
\end{align*}
where we used nonnegativity and normalization of $\pi(\cdot|s)$ and $P_{s,a}(\cdot)$ so the weights sum to $1$. Taking $\sup_{\pi\in\Pi}\sup_{P\in\cP}$ does not increase the right-hand side, yielding the claim.
\end{proof}

\begin{lemma}
  $\mathcal{T}^*$ and $\hat{\mathbf{T}}^*$ are $\gamma$-contraction operators on $(\cS\ra \R,\norminf{\cd})$; i.e. for all $v_1,v_2:\cS\ra\R,$ 
  \begin{gather*}
  \|\mathcal{T}^*(v_1)-\mathcal{T}^*(v_2)\|_{\infty}\leq \gamma \|v_1-v_2\|_{\infty}, \\
  \|\hat{\mathbf{T}}^*(v_1)-\hat{\mathbf{T}}^*(v_2)\|_{\infty}\leq \gamma\|v_1-v_2\|_{\infty}.
  \end{gather*}

  \label{lemma:gamma-contraction-operator-1}
\end{lemma}
\begin{proof}
  By definition, we have
  \begin{align*}
  |\mathcal{T}^*(v_1)(s)-\mathcal{T}^*(v_2)(s)|&=\left|\sup_{\pi\in\Pi} \mathcal{T}^\pi(v_1)(s)-\sup_{\pi\in\Pi} \mathcal{T}^\pi(v_2)(s)\right|\\
  &=\left|\sup_{\pi\in\Pi}\inf_{P\in\mathcal{P}} f_{\pi,P}(v_1)(s)-\sup_{\pi\in\Pi}\inf_{P\in\mathcal{P}} f_{\pi,P}(v_2)(s)\right|.
  \end{align*}
  Since $|\sup_X f - \sup_X g|\leq \sup_X |f-g|$ and $|\inf_X f - \inf_X g|\leq \sup_X |f-g|$, we have
  \begin{equation*}
  |\mathcal{T}^*(v_1)(s)-\mathcal{T}^*(v_2)(s)|\leq\sup_{\pi\in\Pi}\sup_{P\in\mathcal{P}}\left| f_{\pi,P}(v_1)(s)-f_{\pi,P}(v_2)(s)\right|\leq\gamma \|v_1-v_2\|_{\infty}, 
  \end{equation*}
where the last inequality follows from Lemma~\ref{lemma:Lipschitz-continuity}. 
The above inequality holds for any $s\in\mathcal{S}$, which lead to 
  \begin{equation*}
  \|\mathcal{T}^*(v_1)-\mathcal{T}^*(v_2)\|_{\infty}\leq \gamma\|v_1-v_2\|_{\infty}.
  \end{equation*}
  We replace $P_s$ with $P_{s,n}$, therefore
  \begin{align*}
  |\hat{\mathbf{T}}^*(v_1)(s)-\hat{\mathbf{T}}^*(v_2)(s)|&\leq\sup_{\pi\in\Pi}\sup_{P\in\mathcal{P}_n}\left| f_{\pi,P}(v_1)(s)-f_{\pi,P}(v_2)(s)\right|\leq\gamma \|v_1-v_2\|_{\infty}.
  \end{align*}
  which lead to 
  \begin{equation*}
  \|\hat{\mathbf{T}}^*(v_1)-\hat{\mathbf{T}}^*(v_2)\|_{\infty}\leq \gamma\|v_1-v_2\|_{\infty}.
  \end{equation*}
\end{proof}

\begin{lemma}
  $\mathcal{T}^\pi$ and $\hat{\mathbf{T}}^\pi$ are $\gamma$-contraction operators on $(\cS\ra \R,\norminf{\cd})$; i.e. for all $v_1,v_2:\cS\ra\R,$ 
  \begin{gather*}
  \|\mathcal{T}^\pi(v_1)-\mathcal{T}^\pi(v_2)\|_{\infty}\leq \gamma \|v_1-v_2\|_{\infty}, \\
  \|\hat{\mathbf{T}}^\pi(v_1)-\hat{\mathbf{T}}^\pi(v_2)\|_{\infty}\leq \gamma\|v_1-v_2\|_{\infty}.
  \end{gather*}

  \label{lemma:gamma-contraction-operator-2}
\end{lemma}
\begin{proof}
  By definition, we have
  \begin{equation*}
  |\mathcal{T}^\pi(v_1)(s)-\mathcal{T}^\pi(v_2)(s)|=\left|\inf_{P\in\mathcal{P}} f_{\pi,P}(v_1)(s)-\inf_{P\in\mathcal{P}} f_{\pi,P}(v_2)(s)\right|
  \end{equation*}
  Since $|\inf_X f - \inf_X g|\leq \sup_X |f-g|$, we have
  \begin{align*}
  |\mathcal{T}^\pi(v_1)(s)-\mathcal{T}^\pi(v_2)(s)|\leq\sup_{P\in\mathcal{P}}\left| f_{\pi,P}(v_1)(s)-f_{\pi,P}(v_2)(s)\right|\leq\sup_{\tilde{\pi}\in\Pi}\sup_{P\in\mathcal{P}}\left| f_{\tilde{\pi},P}(v_1)(s)-f_{\tilde{\pi},P}(v_2)(s)\right|\leq\gamma \|v_1-v_2\|_{\infty} 
  \end{align*}
where the last inequality follows from Lemma~\ref{lemma:Lipschitz-continuity}. 
The above inequality holds for any $s\in\mathcal{S}$, which lead to 
  \begin{equation*}
  \|\mathcal{T}^\pi(v_1)-\mathcal{T}^\pi(v_2)\|_{\infty}\leq \gamma\|v_1-v_2\|_{\infty}.
  \end{equation*}
  We replace $P_s$ with $P_{s,n}$, therefore,
  \begin{align*}
  |\hat{\mathbf{T}}^\pi(v_1)(s)-\hat{\mathbf{T}}^\pi(v_2)(s)|&\leq \sup_{\pi\in\Pi}\sup_{P\in\mathcal{P}_n}\left| f_{\pi,P}(v_1)(s)-f_{\pi,P}(v_2)(s)\right|\leq\gamma \|v_1-v_2\|_{\infty}.
  \end{align*}
  which lead to 
  \begin{equation*}
  \|\hat{\mathbf{T}}^\pi(v_1)-\hat{\mathbf{T}}^\pi(v_2)\|_{\infty}\leq \gamma\|v_1-v_2\|_{\infty}.
  \end{equation*}
\end{proof}

\subsection{Proof of Proposition~\ref{proposition:value-function-error}}

\begin{proof}
  The proof of Proposition~\ref{proposition:value-function-error} follows a similar argument to that used for the continuous-case operator in \citet{wang2024statistical}.

  Let $u_0\equiv 0$ and $u_{k+1} = \hat{T}(u_k)$. $\hat{u}$ and $u^*$ is defined as the fixed point of $\hat{T}$ and $T$ respectively.
  \begin{align*}
  \Delta_{k+1} &= u_{k+1} - u^*\\
  &=\hat{T}(u_k)-\hat{T}(u^*)+\hat{T}(u^*)-T(u^*)\\
  &=\left[\hat{T}(u^*+\Delta_k)-\hat{T}(u^*)\right]+\left[\hat{T}(u^*)-T(u^*)\right]\\
  &:=\mathbf{H}(\Delta_k)+V
  \end{align*}
  Since $\hat{T}$ is a $\gamma$-contraction operator, we have
  \begin{align*}
  \left\|\mathbf{H}(\Delta_1)-\mathbf{H}(\Delta_2)\right\|_{\infty} = \left\|\hat{T}(u^*+\Delta_1)-\hat{T}(u^*+\Delta_2)\right\|_{\infty}\leq \gamma\|\Delta_1-\Delta_2\|_{\infty},
  \end{align*}
  therefore, $\mathbf{H}$ is also a $\gamma$-contraction operator. Then we show
  \begin{equation*}
  \|\Delta_k\|_{\infty}\leq \gamma^k\|u^*\|_{\infty}+\sum_{j=0}^{k-1}\gamma^j\|V\|_{\infty}
  \end{equation*}
  by induction: for $k=1$,
  \begin{align*}
  \|\Delta_1\|_{\infty}&\leq\|\mathbf{H}(\Delta_0)\|_{\infty}+\|V\|_{\infty}\\
  &=\|\mathbf{H}(\Delta_0)-\mathbf{H}(0)\|_{\infty}+\|V\|_{\infty}\\
  &\leq\gamma\left\|u^{*}\right\|_{\infty}+\left\|V\right\|_{\infty}.
  \end{align*}
  For any $k$, we have 
  \begin{align*}
  \|\Delta_{k+1}\|_{\infty}&\leq\|\mathbf{H}(\Delta_k)\|_{\infty}+\|V\|_{\infty}\\
  &=\|\mathbf{H}(\Delta_k)-\mathbf{H}(0)\|_{\infty}+\|V\|_{\infty}\\
  &\leq\gamma\left\|\Delta_k\right\|_{\infty}+\left\|V\right\|_{\infty}\\
  &\leq\gamma\left(\gamma^{k}\|u^*\|_{\infty}+\sum_{j=0}^{k-1}\gamma^j\|V\|_{\infty}\right)+\left\|V\right\|_{\infty}\\
  &=\gamma^{k+1}\|u^*\|_{\infty}+\sum_{j=0}^{k}\gamma^j\|V\|_{\infty}.
  \end{align*}
  Therefore,
  \begin{equation*}
  \|\hat{u}-u^{*}\|_{\infty}=\lim_{k\to\infty}\|\Delta_k\|_{\infty}\leq\sum_{j=0}^\infty\gamma^j\left\|V\right\|_{\infty}=\frac{1}{1-\gamma}\left\|\hat{T}(u^{*})-T(u^{*})\right\|_\infty.
  \end{equation*}

\end{proof}

\subsection{Proof of Corollary \ref{corollary:value-function-error-1}}
\begin{proof}
  By Lemma~\ref{lemma:gamma-contraction-operator-1}, both $\cT^*$ and $\hat{\mathbf T}^*$ are $\gamma$-contractions. Applying Proposition~\ref{proposition:value-function-error} with $T=\cT^*$, $\hat T=\hat{\mathbf T}^*$ and $u^*=v^*$ yields
\[
\|\hat v-v^*\|_\infty
\;\le\;\frac{1}{1-\gamma}\,\|\hat{\mathbf T}^*(v^*)-\cT^*(v^*)\|_\infty,
\]
which proves the statement.
\end{proof}

\subsection{Proof of Corollary \ref{corollary:value-function-error-2}}
\begin{proof}
  Fix any $\pi\in\Pi$. By Lemma~\ref{lemma:gamma-contraction-operator-2}, $\cT^\pi$ and $\hat{\mathbf T}^\pi$ are $\gamma$-contractions. Applying Proposition~\ref{proposition:value-function-error} with $T=\cT^\pi$, $\hat T=\hat{\mathbf T}^\pi$ and $u^*=V_{\cP}^{\pi}$ gives
\[
\|V_{\hat\cP}^{\pi}-V_{\cP}^{\pi}\|_\infty
\;\le\;\frac{1}{1-\gamma}\,\big\|\hat{\mathbf T}^\pi\!\left(V_{\cP}^{\pi}\right)-\cT^\pi\!\left(V_{\cP}^{\pi}\right)\big\|_\infty.
\]
Since the above inequality holds for any $\pi\in\Pi$, taking the supremum over $\pi\in\Pi$ on both sides yields
\[
\sup_{\pi\in\Pi}\|V_{\hat\cP}^{\pi}-V_{\cP}^{\pi}\|_\infty
\;\le\;\frac{1}{1-\gamma}\,\sup_{\pi\in\Pi}\big\|\hat{\mathbf T}^\pi\!\left(V_{\cP}^{\pi}\right)-\cT^\pi\!\left(V_{\cP}^{\pi}\right)\big\|_\infty,
\]
which proves the claim.
\end{proof}

\section{Strong Duality for Divergence-Based S-Rectangular Bellman Operators}
The proofs for all $f$-divergence-based uncertainty sets follow a unified framework. We first present Lemma~\ref{lemma:f-divergence-dual-problem}, which gives a general dual formulation for any convex $f$-divergence. For the KL-divergence and the $f_k$-divergence, we specialise this result by substituting the corresponding conjugate functions $f^*$. The detailed derivations for the KL-divergence and the $f_k$-divergence are provided in Appendix~\ref{sub:kl-dual-proof} and~\ref{sub:f-dual-proof}, respectively.
\begin{lemma}
  For any $f$-divergence uncertainty set, where $f:\mathbb{R}_{+}\rightarrow \mathbb{R}$ is a convex function and $f(1)=0$ and satisfies $f(0) = \lim_{t\da 0}f(t)$, the convex optimization problem
  \begin{align*}
  \inf_{P\in\mathcal{P}}&~\sum_{a\in\mathcal{A}}\pi(a|s)\mathbb{E}_{P_{s,a}}\left[R(s,a,S)+\gamma v(S)\right]
  \end{align*}
  can be reformulated as:
  \begin{equation*}
  \sup_{\lambda\geq 0,\boldsymbol{\eta}\in\mathbb{R}^{|\mathcal{A}|}} -\lambda\sum_{a\in\mathcal{A}}\mathbb{E}_{\overline{P}_{s,a}}\left[f^*\left(\frac{\eta_a-\pi(a|s)\left(R(s,a,S)+\gamma v(S)\right)}{\lambda}\right)\right]-\lambda |\mathcal{A}|\rho+\sum_{a\in\mathcal{A}}\eta_a
  \end{equation*}
  where $f^*(t) = -\inf_{s\geq 0}\left(f(s)-st\right)$.
  \label{lemma:f-divergence-dual-problem}
\end{lemma}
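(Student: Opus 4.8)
The plan is to recast the primal infimum as a finite-dimensional convex program in likelihood-ratio variables and then invoke Lagrangian strong duality. First I would introduce, for each $a\in\cA$, the density $L_a(s') := P_{s,a}(s')/\overline{P}_{s,a}(s')$ on the support of $\overline{P}_{s,a}$; the absolute-continuity requirement $P_{s,a}\ll\overline{P}_{s,a}$ lets me disregard states outside that support, since there $P_{s,a}(s')$ is forced to vanish. In these variables the requirement $P_{s,a}\in\Delta(\cS)$ becomes the affine equality $\mathbb{E}_{\overline{P}_{s,a}}[L_a]=1$ together with $L_a\geq 0$, the divergence budget becomes $\sum_{a\in\cA}\mathbb{E}_{\overline{P}_{s,a}}[f(L_a)]\leq|\cA|\rho$, and, using $\mathbb{E}_{P_{s,a}}[g]=\mathbb{E}_{\overline{P}_{s,a}}[L_a g]$, the objective becomes the linear functional $\sum_{a\in\cA}\mathbb{E}_{\overline{P}_{s,a}}[L_a\,\pi(a|s)(R(s,a,S)+\gamma v(S))]$. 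Because $f$ is convex, this is a convex program with linear objective, affine equalities, and a single convex inequality constraint.

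Next I would attach a multiplier $\lambda\geq 0$ to the divergence inequality and multipliers $\eta_a\in\R$ to the equalities $\mathbb{E}_{\overline{P}_{s,a}}[L_a]=1$, and form the Lagrangian. Writing $w_a(s'):=\pi(a|s)(R(s,a,s')+\gamma v(s'))$, the Lagrangian collects into $\sum_{a\in\cA}\mathbb{E}_{\overline{P}_{s,a}}[\lambda f(L_a)-L_a(\eta_a-w_a(S))]-\lambda|\cA|\rho+\sum_{a\in\cA}\eta_a$, which decouples across $a$ and across states $s'$. Minimizing pointwise over $L_a(s')\geq0$ and, for $\lambda>0$, factoring out $\lambda$ produces exactly $-\lambda f^*\!\big((\eta_a-w_a(s'))/\lambda\big)$ by the definition $f^*(t)=-\inf_{s\geq0}(f(s)-st)$. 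Summing gives the stated dual objective, and weak duality (the dual supremum never exceeds the primal infimum) is then immediate.

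To upgrade weak duality to equality I would verify Slater's condition. The nominal choice $L_a\equiv1$, i.e. $P_{s,a}=\overline{P}_{s,a}$, satisfies every affine equality and, since $f(1)=0$, attains divergence $0<|\cA|\rho$ for any $\rho>0$; by Jensen this is in fact the minimal achievable divergence, so it is a strictly feasible point for the only nonlinear (convex) constraint while all remaining constraints are affine. Strong duality for convex programs under Slater's condition then yields that the primal optimum equals the dual supremum, which is precisely the asserted reformulation; the empirical version follows verbatim with $\overline{P}_{s,a}$ replaced by $\overline{P}_{s,a,n}$.

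The step I expect to require the most care is the pointwise minimization together with the boundary cases: justifying that the infimum over the $L_a$ may be carried inside the finite sum, handling $\lambda=0$ (where the inner infimum is $-\infty$ unless the coefficients $\eta_a-w_a$ are nonpositive, so this boundary never improves the supremum), and confirming that restricting each $L_a$ to the support of $\overline{P}_{s,a}$ loses nothing by absolute continuity. Once these points are settled, identifying the inner minimum with the conjugate $f^*$ and invoking Slater's condition are routine.
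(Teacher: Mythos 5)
Your proposal follows essentially the same route as the paper's proof: the change of variables to likelihood ratios $L_a = P_{s,a}/\overline{P}_{s,a}$, a Lagrangian with multiplier $\lambda \geq 0$ on the divergence budget and $\eta_a$ on each normalization constraint, and the pointwise identification of the inner infimum over $L_a \geq 0$ with $-\lambda f^*\left((\eta_a - w_a)/\lambda\right)$. The only substantive difference is that you explicitly verify Slater's condition at the nominal point $L_a \equiv 1$ to upgrade weak to strong duality, a step the paper leaves implicit by deferring to the argument of Lemma 8.5 in \citet{yang2022toward}.
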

\begin{proof}
  We follow the proof of Lemma 8.5 in \cite{yang2022toward}, however, in our case, $R$ is determined by the next state.
  We do a change of variables, let $L_{s,a}(s^\prime) = \frac{P_{s,a}(s^\prime)}{\overline P_{s,a}(s^\prime)}$. The original optimization problem can be reformulated as:
  \begin{align*}
  \inf_{L_s\geq 0}&~\sum_{a\in\mathcal{A}}\pi(a|s)\mathbb{E}_{\overline{P}_{s,a}}\left[L_{s,a}\left(R(s,a,S)+\gamma v(S)\right)\right]\\
  {\rm s.t.}&~\sum_{a\in\mathcal{A}}\mathbb E_{\overline{P}_{s,a}}[f(L_{s,a})]\leq |\mathcal{A}|\rho\\
  &~\mathbb E_{\overline{P}_{s,a}}[L_{s,a}]=1\quad \text{for all }a\in\mathcal A
  \end{align*}
  The Lagrange function of primal problem is 
  \begin{align*}
  \mathcal{L}(L,\lambda,\boldsymbol{\eta})&=\sum_{a\in\mathcal{A}}\pi(a|s)\mathbb{E}_{\overline{P}_{s,a}}\left[L_{s,a}\left(R(s,a,S)+\gamma v(S)\right)\right] \\
  &+ \lambda\left(\sum_{a\in\mathcal{A}}\mathbb E_{\overline{P}_{s,a}}[f(L_{s,a})]- |\mathcal{A}|\rho\right)-\sum_{a\in\mathcal{A}}\eta_a\left(\mathbb E_{\overline{P}_{s,a}}[L_{s,a}]-1\right)
  \end{align*}
  Denoting $f^*(t) = -\inf_{s\geq 0}\left(f(s)-st\right)$,
  \begin{align*}
  &\inf_{L_s\geq 0}~\mathcal{L}(L,\lambda,\boldsymbol{\eta})\\
  &=\inf_{L_s\geq 0}\left(\sum_{a\in\mathcal{A}}\mathbb{E}_{\overline{P}_{s,a}}\Big[\pi(a|s)L_{s,a}\left(R(s,a,S)+\gamma v(S)\right)+\lambda f(L_{s,a})-\eta_a L_{s,a}\Big]\right)-\lambda |\mathcal{A}|\rho+\sum_{a\in\mathcal{A}}\eta_a\\
  &=\lambda\sum_{a\in\mathcal{A}}\inf_{L_{s,a}\geq 0}\mathbb{E}_{\overline{P}_{s,a}}\left[\frac{\pi(a|s)\left(R(s,a,S)+\gamma v(S)\right)-\eta_a}{\lambda}L_{s,a}+ f(L_{s,a})\right]-\lambda |\mathcal{A}|\rho+\sum_{a\in\mathcal{A}}\eta_a\\
  &=-\lambda\sum_{a\in\mathcal{A}}\mathbb{E}_{\overline{P}_{s,a}}\left[f^*\left(\frac{\eta_a-\pi(a|s)\left(R(s,a,S)+\gamma v(S)\right)}{\lambda}\right)\right]-\lambda |\mathcal{A}|\rho+\sum_{a\in\mathcal{A}}\eta_a
  \end{align*}
\end{proof}

\subsection{Proof of Lemma~\ref{lemma:kl-divergence-bellman-operator}}
\label{sub:kl-dual-proof}
\begin{proof}
  Recall that for the KL-divergence, $f(t) = t\log t$, whose conjugate function $f^*(s) = e^{s-1}$. Substituting $f^*$ into Lemma~\ref{lemma:f-divergence-dual-problem}, we obtain the following dual form:
  \begin{align}
  &\sup_{\lambda\geq 0,\boldsymbol{\eta}\in\mathbb{R}^{|\mathcal{A}|}} -\lambda\sum_{a\in\mathcal{A}}\mathbb{E}_{\overline{P}_{s,a}}\left[f^*\left(\frac{\eta_a-\pi(a|s)\left(R(s,a,S)+\gamma v(S)\right)}{\lambda}\right)\right]-\lambda |\mathcal{A}|\rho+\sum_{a\in\mathcal{A}}\eta_a\notag\\
  \begin{split}
    &=\sup_{\lambda\geq 0,\boldsymbol{\eta}\in\mathbb{R}^{|\mathcal{A}|}} -\lambda\sum_{a\in\mathcal{A}}\exp\left(\frac{\eta_a-\lambda}{\lambda}\right)\mathbb{E}_{\overline{P}_{s,a}}\left[\exp\left(\frac{-\pi(a|s)\left(R(s,a,S)+\gamma v(S)\right)}{\lambda}\right)\right]\\
    &\quad -\lambda |\mathcal{A}|\rho+\sum_{a\in\mathcal{A}}\eta_a.
  \end{split} \label{eq:kl-dual-with-eta}
  \end{align}

We first note that for each action $a$, the term $\lambda \mathbb{E}_{\overline{P}_{s,a}}[\cdot]$ is a positive constant with respect to $\eta_a$, while the term $-\lambda\mathbb{E}_{\overline{P}_{s,a}}[\cdot]\exp((\eta_a-\lambda)/\lambda)$ is concave in $\eta_a$, since for any $c>0$, the function $-c\exp(x)$ is concave. Moreover, the term $\sum_{a}\eta_a$ is affine, and hence concave. As the sum of concave functions is concave, we conclude that (\ref{eq:kl-dual-with-eta}) is concave in $\boldsymbol{\eta}$. Next, we optimize with respect to $\boldsymbol{\eta}$ by setting the gradient with respect to each $\eta_a$ to zero:
  \begin{equation*}
  -\exp\left(\frac{\eta_a-\lambda}{\lambda}\right)\mathbb{E}_{\overline{P}_{s,a}}\left[\exp\left(\frac{-\pi(a|s)\left(R(s,a,S)+\gamma v(S)\right)}{\lambda}\right)\right]+1=0
  \end{equation*}
  Solving for $\eta_a$, we obtain
  \begin{equation}
  \eta_a=\lambda-\lambda\log\mathbb{E}_{\overline{P}_{s,a}}\left[\exp\left(\frac{-\pi(a|s)\left(R(s,a,S)+\gamma v(S)\right)}{\lambda}\right)\right].\label{eq:kl-dual-eta-solution}
  \end{equation}
  Substituting (\ref{eq:kl-dual-eta-solution}) into (\ref{eq:kl-dual-with-eta}), we obtain
  \begin{equation*}
  \sup_{\lambda\geq 0,\boldsymbol{\eta}\in\mathbb{R}^{|\mathcal{A}|}} -\lambda\sum_{a\in\mathcal{A}}\log\mathbb{E}_{\overline{P}_{s,a}}\left[\exp\left(\frac{-\pi(a|s)\left(R(s,a,S)+\gamma v(S)\right)}{\lambda}\right)\right]-\lambda |\mathcal{A}|\rho.
  \end{equation*}
\end{proof}

\subsection{Proof of Lemma \ref{lemma:f-divergence-bellman-operator}}
\label{sub:f-dual-proof}
\begin{proof}
We first introduce the conjugate function of $f_k$, which will be instrumental for deriving the dual representation of DR Bellman operator.
\begin{lemma}[\citet{duchiLearningModelsUniform2021}, Section 2]
  Recall that in $f_k$-divergence,
  \begin{equation*}
  f_k(t):=\frac{t^k-kt+k-1}{k(k-1)}
  \end{equation*}
  The conjugate function $f_k^*(s)=\sup_{t\geq 0}\left(st-f_k(t)\right)$ is given by
  \begin{equation*}
  f_k^*(s):=\frac1k\left[\left((k-1)s+1\right)_+^{k^*}-1\right]
  \end{equation*}
where $(x)_+= \max(x, 0)$.
\label{lemma:conjugate-function-fk}
\end{lemma}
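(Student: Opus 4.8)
The plan is to compute the supremum $f_k^*(s)=\sup_{t\geq 0}(st-f_k(t))$ directly, exploiting that $f_k$ is smooth and strictly convex on $(0,\infty)$. First I would fix $s$, set $g(t):=st-f_k(t)$, and record its derivatives: a direct computation gives $g'(t)=s-(t^{k-1}-1)/(k-1)$ and $g''(t)=-t^{k-2}$. Since $k>1$, we have $g''(t)<0$ for all $t>0$, so $g$ is strictly concave on $(0,\infty)$ and continuous on $[0,\infty)$. Consequently the maximum over $[0,\infty)$ is attained either at the unique interior stationary point (when one exists) or at the boundary $t=0$, and the sign of $g'(0^+)=((k-1)s+1)/(k-1)$ decides which.

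I would then split according to the sign of $(k-1)s+1$. In the interior case $(k-1)s+1>0$, solving $g'(t)=0$ gives $t^{k-1}=(k-1)s+1$, hence the maximizer $t^\star=((k-1)s+1)^{1/(k-1)}>0$. Substituting back is the one genuinely computational step: writing $u:=(k-1)s+1=(t^\star)^{k-1}$ and using $k^*=k/(k-1)$ together with the identities $(t^\star)^k=u^{k^*}$ and $u\,t^\star=u^{k^*}$, I would show that $k(k-1)\,g(t^\star)$ telescopes to $(k-1)(u^{k^*}-1)$, yielding $f_k^*(s)=\tfrac1k(u^{k^*}-1)=\tfrac1k\big(((k-1)s+1)^{k^*}-1\big)$.

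In the boundary case $(k-1)s+1\leq 0$, the bound $g'(0^+)\leq 0$ together with concavity (so $g'$ is decreasing) forces $g'(t)\leq 0$ on all of $[0,\infty)$; hence $g$ is non-increasing and the maximum sits at $t=0$, with value $g(0)=-f_k(0)=-1/k$, since $f_k(0)=(k-1)/(k(k-1))=1/k$. Finally I would check that this agrees with the stated formula: when $(k-1)s+1\leq 0$ the truncation gives $((k-1)s+1)_+=0$ and $0^{k^*}=0$, so $\tfrac1k(0-1)=-1/k$, and the two cases unify into $f_k^*(s)=\tfrac1k\big(((k-1)s+1)_+^{k^*}-1\big)$. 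The only real obstacle is the exponent bookkeeping in the substitution; once concavity locates the maximizer, the rest is routine.
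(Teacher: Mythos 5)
Your proof is correct, and it is worth noting that the paper itself does not prove this lemma at all: it is imported verbatim from the cited reference (Duchi and Namkoong, Section 2), so your argument supplies a self-contained verification that the paper omits. Each step checks out: $g'(t)=s-(t^{k-1}-1)/(k-1)$ and $g''(t)=-t^{k-2}<0$ are correct, so strict concavity on $(0,\infty)$ plus the sign of $g'(0^+)=\bigl((k-1)s+1\bigr)/(k-1)$ does locate the maximizer; in the interior case the substitution $u=(k-1)s+1=(t^\star)^{k-1}$ with $(t^\star)^k=u\,t^\star=u^{k^*}$ indeed gives
\begin{equation*}
k(k-1)\,g(t^\star)=k(u-1)t^\star-u^{k^*}+kt^\star-(k-1)=(k-1)\bigl(u^{k^*}-1\bigr),
\end{equation*}
and in the boundary case $g(0)=-f_k(0)=-1/k$ matches $\tfrac1k\bigl(0^{k^*}-1\bigr)$, so the two cases unify under the positive-part truncation (including the edge case $(k-1)s+1=0$). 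The trade-off is the expected one: the paper's citation buys brevity and defers correctness to the reference, while your direct calculus argument makes the appendix self-contained and makes visible exactly where the exponent $k^*=k/(k-1)$ and the constant $-1/k$ come from, which is useful since the dual in Lemma~\ref{lemma:f-divergence-bellman-operator} relies on precisely this closed form.
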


Substituting $f_k^*$ into Lemma~\ref{lemma:f-divergence-dual-problem}, and let $w_{s,a}(S) := \pi(a|s)\left(R(s,a,S)+\gamma v(S)\right)$, we obtain
\begin{align*}
&\sup_{\lambda\geq0,\eta\in\mathbb{R}^{|\mathcal{A}|}}-\sum_{a\in\mathcal{A}}\lambda\mathbb{E}_{\overline{P}_{s,a}}  \left[f^*\left(\frac{\eta_a-w_{s,a}(S)}{\lambda}\right)\right]-\lambda|\mathcal{A}|\rho+\sum_{a\in\mathcal{A}}\eta_a\\
&=\sup_{\lambda\geq 0,\eta\in\mathbb{R}^{|\mathcal{A}|}} -\sum_{a\in\mathcal{A}}\lambda \mathbb{E}_{\overline{P}_{s,a}}\left[\frac1k\left[\left((k-1)\frac{\eta_a-w_{s,a}(S)}{\lambda}+1\right)_+^{k^*}-1\right]\right]
\end{align*}

Since $k-1 > 0$ and $\lambda > 0$ are constants with respect to the random variable $S$, we can factor them out of the expectation and the positive-part operator $(\cdot)_+$.
\begin{align*}
&=\sup_{\lambda\geq0,\eta\in\mathbb{R}^{|\mathcal{A}|}}-\frac{(k-1)^{k^*}}{k\lambda^{k^*-1}}\sum_{a\in\mathcal{A}}\mathbb{E}_{\overline{P}_{s,a}}\left[\left(\eta_a-w_{s,a}(S)+\frac{\lambda}{k-1} \right)_{+}^{k^*} \right] -\lambda|\mathcal{A}|\left(\rho-\frac{1}k\right)+\sum_{a\in\mathcal{A}}\eta_a
\end{align*}

Finally, we perform the change of variables, let $\tilde{\eta}_a = \eta_a + \frac{\lambda}{k-1}$, we obtain
\begin{align}
&=\sup_{\lambda\geq0,\tilde\eta\in\mathbb{R}^{|\mathcal{A}|}}-\frac{(k-1)^{k^*}}{k\lambda^{k^*-1}}\sum_{a\in\mathcal{A}}\mathbb{E}_{\overline{P}_{s,a}}\left[\left(\tilde{\eta}_a-w_{s,a}(S)\right)_{+}^{k^*} \right] -\lambda|\mathcal{A}|\left(\rho+\frac{1}{k(k-1)}\right)+\sum_{a\in\mathcal{A}} \tilde{\eta}_a
\label{eq: proof-of-lemma-2-1}
\end{align}
Since $-\lambda^{-\alpha}$ is concave in $\lambda$ for any $\alpha > 0$, and $\lambda |\mathcal{A}| \left( \rho + \frac{1}{k(k-1)} \right)$ is an affine function of $\lambda$, it follows that \eqref{eq: proof-of-lemma-2-1} is concave with respect to $\lambda$. To optimize over $\lambda$, we take the derivative with respect to $\lambda$ and set it to zero, which yields:
\begin{equation*}
\frac{(k-1)^{k^*}}{k(k-1)\lambda^{k^*}}\sum_{a\in\mathcal{A}}\mathbb{E}_{\overline{P}_{s,a}}\left[\left(\tilde{\eta}_a-w_{s,a}(S)\right)_{+}^{k^*} \right] -|\mathcal{A}|\left(\rho+\frac{1}{k(k-1)}\right)=0
\end{equation*}

Multiply $k(k-1)$ on both side of the equation, we have
\begin{equation*}
\frac{(k-1)^{k^*}}{\lambda^{k^*}}\sum_{a\in\mathcal{A}}\mathbb{E}_{\overline{P}_{s,a}}\left[\left(\tilde{\eta}_a-w_{s,a}(S)\right)_{+}^{k^*} \right] -|\mathcal{A}|\left(k(k-1)\rho+1\right)=0
\end{equation*}
Therefore, we obtain
\begin{equation*}
\lambda^* = (k-1)|\mathcal{A}|^{-1/k^*}\left(k(k-1)\rho+1 \right)^{-1/k^*}\left(\sum_{a\in\mathcal{A}}\mathbb{E}_{\overline{P}_{s,a}}\left[(\tilde\eta_a-w_{s,a}(S))_+^{k^*} \right] \right)^{1/k^*}
\end{equation*}
By substituting $\lambda^*$ into the equation (\ref{eq: proof-of-lemma-2-1}) , we have
\begin{align*}
&\sup_{\lambda\geq0,\eta\in\mathbb{R}^{|\mathcal{A}|}}-\sum_{a\in\mathcal{A}}\lambda\mathbb{E}_{\overline{P}_{s,a}}  \left[f^*\left(\frac{\eta_a-w_{s,a}(S)}{\lambda}\right)\right]-\lambda|\mathcal{A}|\rho+\sum_{a\in\mathcal{A}}\tilde{\eta}_a\\
&=\sup_{\tilde\eta\in\mathbb{R}^{|\mathcal{A}|}}-\frac{k-1}k|\mathcal{A}|^{1/k}\left(k(k-1)\rho+1 \right)^{1/k}\left(\sum_{a\in\mathcal{A}}\mathbb{E}_{\overline{P}_{s,a}}\left[(\tilde\eta_a-w_{s,a}(S))_+^{k^*} \right] \right)^{1/k^*}\\
&-\frac1k|\mathcal{A}|^{1/k}\left(k(k-1)\rho+1 \right)^{1/k}\left(\sum_{a\in\mathcal{A}}\mathbb{E}_{\overline{P}_{s,a}}\left[(\tilde\eta_a-w_{s,a}(S))_+^{k^*} \right] \right)^{1/k^*}+\sum_{a\in\mathcal{A}} \tilde{\eta}_a \\
&=\sup_{\tilde\eta\in\mathbb{R}^{|\mathcal{A}|}}-|\mathcal{A}|^{1/k}\left(k(k-1)\rho+1 \right)^{1/k}\left(\sum_{a\in\mathcal{A}}\mathbb{E}_{\overline{P}_{s,a}}\left[(\tilde\eta_a-w_{s,a}(S))_+^{k^*} \right]\right)^{1/k^*}+\sum_{a\in\mathcal{A}} \tilde{\eta}_a
\end{align*}
\end{proof}

\section{Proofs of Properties of the Empirical Bellman Operator: KL Case}
\label{sec:proofs_kl_case}

Our techniques in this section refine that in \citet{wang_sample_2024}. To follow the constructions in \citet{wang_sample_2024}, we introduce some notations. Consider $\mu_{s,a}\in\Delta(\cS)$ and its empirical version $\mu_{s,a,n}$ constructed from $n$ i.i.d samples from $\mu_{s,a}$. Define the collection of these measures under state $s$ as $\boldsymbol{\mu}_s:=\{\mu_{s,a}:a\in\mathcal{A}\}$. 
For a function $u: \mathcal{S} \to \mathbb{R}$ and for each $s \in \mathcal{S}$, we define:
\begin{align*}
\|u\|_{\infty, \boldsymbol{\mu}_s} &= \max_{a\in\mathcal A}\|u\|_{L^{\infty}(\mu_{s,a})}, \\
\left\|\frac{dm_{n}}{d\mu_{n}(t)}\right\|_{\infty, \boldsymbol{\mu}_s} &= \max_{a\in\mathcal A}\left\|\frac{dm_{a,n}}{d\mu_{a,n}(t)}\right\|_{L^{\infty}(\mu_{s,a})}.
\end{align*}
For the supremum over all states, we define
\begin{equation*}
\|u\|_\infty = \sup_{s \in \mathcal{S}} \|u\|_{\infty, \boldsymbol{\mu}_s}.
\end{equation*}

We define a "good event" under which the empirical measure $\mu_{s,a,n}$ uniformly approximates the population measure $\mu_{s,a}$ with relative error bounded by $\delta_0$ across all actions $a\in\mathcal{A}$. Formally, this event is given by
\begin{equation*}
\Omega_{n,\delta_0}(\boldsymbol\mu_s)=\left\{\omega:\sup_{a\in\mathcal{A}}\sup_{s^\prime:\mu_{s,a}(s^\prime)>0}\left|\frac{\mu_{s,a,n}(\omega)(s^\prime)-\mu_{s,a}(s^\prime)}{\mu_{s,a}(s^\prime)}\right|\leq \delta_0\right\}.
\end{equation*}
Further, the good event over all states is defined as
\begin{equation*}
\Omega_{n,\delta_0}=\bigcap_{s\in\mathcal{S}}\Omega_{n,\delta_0}(\boldsymbol{\mu}_s)=\left\{\omega:\sup_{s\in\mathcal{S}}\sup_{a\in\mathcal{A},s^\prime:\mu_{s,a}(s^\prime)>0}\left|\frac{\mu_{s,a,n}(\omega)(s^\prime)-\mu_{s,a}(s^\prime)}{\mu_{s,a}(s^\prime)}\right|\leq \delta_0\right\}.
\end{equation*}

For notation simplicity, we suppress the dependence on the state variable $s$. 
Consider a function $u : \mathcal{S} \to \mathbb{R}$. The dual function under KL-divergence is given by:
\begin{equation}
f(\boldsymbol{\mu},u,\lambda):=-\lambda|\mathcal{A}|\rho-\sum_{a\in\mathcal A}\lambda\log \mu_{a}\left[e^{-d_au/\lambda}\right],
\label{eq:f-kl}
\end{equation}
where $\lambda >0$ is the dual regularization parameter, and we denote $d_a:=\pi(a|s)$ for simplicity.

We define the deviation between empirical and true measures as
\begin{equation*}
  m_{a,n} = \mu_{a,n} - \mu_a,
\end{equation*}
and their convex interpolation by
\begin{equation*}
  \mu_{a,n}(t) = t\mu_{a} + (1-t)\mu_{a,n}.
\end{equation*}


\subsection{Proof of Proposition \ref{proposition:kl-bellman-operator-error}}
\label{sub:proof_of_proposition_kl}

\begin{proof}
Using (\ref{eq:f-kl}) to express Bellman operator, we obtain
\begin{equation}\label{eqn:Thatv_minus_Tv}
\begin{aligned}
\sup_{\pi\in\Pi}\left|\hat{\mathbf{T}}^\pi(v)(s) - \mathcal{T}^\pi(v)(s)\right|&= \sup_{d\in\Delta(\mathcal{A})}\left|\sup_{\lambda>0}f(P_{s,n},R(s,\cdot,\cdot)+\gamma v,\lambda)-\sup_{\lambda>0}f(P_s,R(s,\cdot,\cdot)+\gamma v,\lambda)\right|\\
&\leq\sup_{\lambda >0,d\in\Delta(\mathcal{A})} \left|f(P_{s,n},R(s,\cdot,\cdot)+\gamma v,\lambda) - f(P_s,R(s,\cdot,\cdot)+\gamma v,\lambda)\right|.
\end{aligned}
\end{equation}
We analyze the sensitivity of the mapping $\boldsymbol{\mu}\ra f(\boldsymbol{\mu},u,\lambda)$. For any fixed $u,\boldsymbol{\mu}$ and $\boldsymbol{\mu}_n$, define 
\begin{equation*}
g_n(t,\lambda) = f\left(\boldsymbol{\mu}_n(t),u,\lambda\right).
\end{equation*}
According to mean value theorem, there exists $\tau \in (0,1)$ satisfies:
\begin{align*}
\left|f(\boldsymbol{\mu}_n,u,\lambda) - f(\boldsymbol{\mu},u,\lambda)\right|&=\left|g_n(0,\lambda)-g_n(1,\lambda)\right|\\
&=\left|\partial_tg_n(t,\lambda)\Big|_{t=\tau}\right|\\
&=\left|\sum_{a\in\mathcal A}\lambda \frac{m_{a,n}[e^{-d_au/\lambda}]}{\mu_{a,n}(\tau)[e^{-d_au/\lambda}]}\right|
\end{align*}
To bound the difference above, we invoke the following lemma.
\begin{lemma}
  For any fixed $u$ and $\pi$, $\mu_n\ll \mu$, we have that
  \begin{equation*}
  \sup_{\lambda\geq 0}\left|\sum_{a\in\mathcal A}\lambda \frac{m_{a,n}[e^{-d_au/\lambda}]}{\mu_{a,n}(t)[e^{-d_au/\lambda}]}\right|\leq 2 \|u\|_{\infty}\left\|\frac{dm_{n}}{d\mu_{n}(t)}\right\|_{\infty,\boldsymbol\mu}.
  \end{equation*}
  \label{lemma:kl-sup-f-diff}
\end{lemma}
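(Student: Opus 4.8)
The plan is to prove the bound one action at a time and then sum, exploiting $\sum_{a}d_a = 1$. Fix $t$ and $\lambda>0$, abbreviate $\nu_a := \mu_{a,n}(t)$ (a genuine probability measure, being a convex combination of two probability measures) and $r_a := dm_{a,n}/d\mu_{a,n}(t)$, so that $\|r_a\|_{L^\infty(\mu_a)}\le M$ where $M:=\left\|\frac{dm_n}{d\mu_n(t)}\right\|_{\infty,\boldsymbol{\mu}}$. Writing $\phi_a := e^{-d_a u/\lambda}>0$ and introducing the exponentially tilted probability measure $\tilde\nu_a$ via $d\tilde\nu_a/d\nu_a = \phi_a/\nu_a[\phi_a]$, each summand rewrites as a tilted expectation,
\begin{equation*}
\lambda\,\frac{m_{a,n}[\phi_a]}{\mu_{a,n}(t)[\phi_a]}
=\lambda\,\frac{\int \phi_a\, r_a\,d\nu_a}{\int \phi_a\,d\nu_a}
=\lambda\,\mathbb{E}_{\tilde\nu_a}[r_a].
\end{equation*}
The structural fact that makes the argument work is that $r_a$ is mean-zero under the \emph{untilted} measure: $\mathbb{E}_{\nu_a}[r_a]=\int dm_{a,n}=\mu_{a,n}(\cS)-\mu_a(\cS)=0$, since $m_{a,n}$ is a difference of probability measures.

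The main obstacle is the prefactor $\lambda$ combined with the denominator $\nu_a[\phi_a]$, which can be exponentially small as $\lambda\da 0$; consequently any attempt to bound numerator and denominator separately (e.g.\ $|\mathbb{E}_{\tilde\nu_a}[r_a]|\le M$, giving a useless $\lambda M$, or lower-bounding $\nu_a[\phi_a]\ge e^{-d_aU/\lambda}$) blows up and does not yield a bound uniform in $\lambda$. The way around this is to interpolate between $\nu_a$ and $\tilde\nu_a$ along the tilting path $d\nu_a^{(\tau)}/d\nu_a \propto e^{-\tau d_a u/\lambda}$, $\tau\in[0,1]$, and invoke the elementary exponential-family identity together with $\mathbb{E}_{\nu_a}[r_a]=0$:
\begin{equation*}
\frac{d}{d\tau}\,\mathbb{E}_{\nu_a^{(\tau)}}[r_a]
=-\frac{1}{\lambda}\,\cov_{\nu_a^{(\tau)}}\!\big(r_a,\,d_a u\big),
\qquad
\mathbb{E}_{\tilde\nu_a}[r_a]=-\frac{1}{\lambda}\int_0^1 \cov_{\nu_a^{(\tau)}}\!\big(r_a,\,d_a u\big)\,d\tau .
\end{equation*}
The troublesome factor $\lambda$ now cancels exactly, reducing the whole problem to a uniform bound on the covariance along the path.

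It remains to control that covariance. Each $\nu_a^{(\tau)}$ is absolutely continuous with respect to $\nu_a$, whose support lies in the support of $\mu_a$ (as $\mu_{a,n}\ll\mu_a$), so $|r_a|\le M$ holds $\nu_a^{(\tau)}$-almost everywhere. Centering $u$ and using that $u-\mathbb{E}_{\nu_a^{(\tau)}}[u]$ is bounded in absolute value by the oscillation of $u$, hence by $2\|u\|_\infty$, I would estimate
\begin{equation*}
\big|\cov_{\nu_a^{(\tau)}}(r_a,d_a u)\big|
= d_a\Big|\mathbb{E}_{\nu_a^{(\tau)}}\!\big[r_a\,(u-\mathbb{E}_{\nu_a^{(\tau)}}u)\big]\Big|
\le d_a\,M\,\mathbb{E}_{\nu_a^{(\tau)}}\big|u-\mathbb{E}_{\nu_a^{(\tau)}}u\big|
\le 2\,d_a\,M\,\|u\|_\infty .
\end{equation*}
Combining with the integral representation gives $\lambda\,|\mathbb{E}_{\tilde\nu_a}[r_a]|\le 2\,d_a\,M\,\|u\|_\infty$ for every $\lambda>0$, and summing over $a$ using $\sum_{a\in\cA} d_a=1$ yields $\big|\sum_a \lambda\,\mathbb{E}_{\tilde\nu_a}[r_a]\big|\le 2\|u\|_\infty M$ uniformly in $\lambda$, which is exactly the claimed bound (sharpening the oscillation estimate to the mean-absolute-deviation inequality would even remove the factor $2$). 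Taking the supremum over $\lambda\ge 0$ completes the proof.
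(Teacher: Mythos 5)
Your proof is correct, and it takes a genuinely different route from the paper's. The paper's argument is purely algebraic: it multiplies numerator and denominator by the constant $e^{d_a\|u\|_{L^\infty(\mu_a)}/\lambda}$ so the exponent becomes nonnegative, uses $m_{a,n}[1]=0$ to replace the numerator by $m_{a,n}[e^{d_a(\|u\|_{L^\infty(\mu_a)}-u)/\lambda}-1]$, applies the elementary ratio bound $|m[w_1]/\mu[w_2]|\le\|dm/d\mu\|_{L^\infty(\mu)}\|w_1/w_2\|_{L^\infty(\mu)}$, and cancels the prefactor $\lambda$ via the pointwise inequality $e^{x}-1\le x e^{x}$ for $x\ge 0$. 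You instead recognize each summand as $\lambda$ times a tilted expectation of the mean-zero density $r_a$ and cancel the $\lambda$ by differentiating along the tilting path, reducing everything to a covariance estimate; both arguments hinge on the same structural fact (that $m_{a,n}$ annihilates constants) and both extract the factor $2$ from the oscillation of $u$ on $\supp(\mu_a)$, summing over $a$ with $\sum_a d_a=1$. Your route is more conceptual: it explains the $\lambda$-cancellation as an exponential-family identity, and it correctly suggests the constant $2$ could be sharpened via the mean-absolute-deviation bound, at the price of needing to justify smoothness of $\tau\mapsto \mathbb{E}_{\nu_a^{(\tau)}}[r_a]$ (immediate here since the state space is finite and $Z_a(\tau)=\nu_a[e^{-\tau d_a u/\lambda}]>0$) and the absolute-continuity chain $\nu_a^{(\tau)}\ll\nu_a\ll\mu_a$ so that $|r_a|\le M$ and $|u|\le\|u\|_{L^\infty(\mu_a)}$ hold along the whole path — both of which you do note. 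The paper's version is shorter and entirely elementary; yours generalizes more readily to other dual objectives of tilting type.
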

The proof is deferred to Appendix~\ref{sub:kl-sup-f-diff-proof}. According to lemma \ref{lemma:kl-sup-f-diff}, we have
\begin{equation*}
\sup_{\lambda\geq 0}|f(\boldsymbol\mu_n,u,\lambda)-f(\boldsymbol\mu,u,\lambda)|\leq 2 \|u\|_{\infty}\left\|\frac{dm_{n}}{d\mu_{n}(t)}\right\|_{\infty,\boldsymbol\mu}.
\end{equation*}

We decomposed the probability using the event $\Omega_{n,\delta_0}(\boldsymbol{\mu})$ where the empirical estimates are close to the population measures:
\begin{align*}
&P\left(\sup_{\lambda\geq 0,d\in\Delta(\mathcal{A})}|f(\mu_n,u,\lambda)-f(\mu,u,\lambda)|>t\right)\\
&\leq P(\Omega_{n,\delta_0}(\boldsymbol\mu)^c)+P\left(2\left\|u\right\|_{\infty}\left\|\frac{dm_n}{d\mu_n(\tau)}\right\|_{\infty,\boldsymbol\mu}>t,\Omega_{n,\delta_0}(\boldsymbol\mu)\right)
\end{align*}
To control the denominator $\mu_{a,n}(\tau)(s')$ appearing in the bound, we use the following lemma, which asserts that under the good event, the empirical and population measures remain close for all $t \in [0,1]$:
\begin{lemma}
  For any $s^\prime$ with $\mu(s^\prime)>0$, the measure $\mu_n(t)(s^\prime)$ satisfies
  \begin{equation*}
  (1-\delta_0)\mu(s^\prime)\leq\mu_n(t)(s^\prime)\leq (1+\delta_0)\mu(s^\prime),\quad \forall t\in[0,1].
  \end{equation*}
  \label{lemma:mut-near-mu}
\end{lemma}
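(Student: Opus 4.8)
The plan is to recognize that this is a statement about convex combinations and to derive it directly from the definition of the good event $\Omega_{n,\delta_0}(\boldsymbol{\mu})$. I work throughout on this event, and I fix a generic action $a$, dropping the subscript in line with the surrounding convention, so that $\mu$ denotes the population measure, $\mu_n$ its empirical counterpart, and $\mu_n(t) = t\mu + (1-t)\mu_n$ their interpolation.

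First I would unpack what membership in the good event provides. By the definition of $\Omega_{n,\delta_0}(\boldsymbol{\mu})$, for every $s'$ with $\mu(s')>0$ we have $\bigl|\mu_n(s')-\mu(s')\bigr|\le \delta_0\,\mu(s')$, which rearranges into the two-sided bound
\[
(1-\delta_0)\mu(s') \le \mu_n(s') \le (1+\delta_0)\mu(s').
\]
Since $\delta_0\ge 0$, the endpoint $\mu(s')$ itself trivially satisfies $(1-\delta_0)\mu(s') \le \mu(s') \le (1+\delta_0)\mu(s')$. Hence both $\mu(s')$ and $\mu_n(s')$ lie in the closed interval $I := \bigl[(1-\delta_0)\mu(s'),\,(1+\delta_0)\mu(s')\bigr]$.

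The second step is the convexity observation: for each fixed $t\in[0,1]$, the quantity $\mu_n(t)(s') = t\mu(s') + (1-t)\mu_n(s')$ is a convex combination of the two endpoints $\mu(s')$ and $\mu_n(s')$, both of which lie in $I$. As $I$ is an interval and therefore convex, the combination again lies in $I$, which is exactly the claimed bound, uniformly in $t$. If one prefers an explicit computation over invoking convexity, substituting the lower bound on $\mu_n(s')$ gives $\mu_n(t)(s') \ge \mu(s')\bigl[1-(1-t)\delta_0\bigr]\ge (1-\delta_0)\mu(s')$ since $1-t\in[0,1]$, and symmetrically $\mu_n(t)(s')\le (1+\delta_0)\mu(s')$ from the upper bound.

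There is no genuine obstacle here; the lemma is a book-keeping step whose sole purpose is to ensure that the interpolated denominator $\mu_{a,n}(\tau)(s')$ appearing in the sensitivity estimate of Lemma~\ref{lemma:kl-sup-f-diff} stays bounded away from $0$ (and bounded above) uniformly over the interpolation parameter, so that the ratio $dm_n/d\mu_n(t)$ is well controlled. The only point I would state with care is that the bound must hold \emph{simultaneously} for all $t\in[0,1]$ — which is precisely what the convexity of $I$ delivers in one stroke.
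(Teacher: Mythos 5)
Your proof is correct and follows essentially the same route as the paper: both derive the two-sided bound on $\mu_n(s')$ from the good event and then propagate it through the interpolation, your explicit computation being line-for-line the paper's argument. The convex-combination phrasing is a clean repackaging of the same idea, not a different approach.
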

The proof is deferred to Appendix~\ref{sub:mut-near-mu-proof}. By using lemma \ref{lemma:mut-near-mu}, we have $\mu_{a,n}(\tau)(s^\prime)\geq (1-\delta_0)\mu_a(s^\prime)$, therefore,
\begin{align*}
&\leq P\left(\sup_{a,s^\prime}\left|\frac{\mu_{a,n}(s^\prime)-\mu_a(s^\prime)}{\mu_a(s^\prime)}\right|>\delta_0\right) + P\left(2\|u\|_{\infty}\sup_{a,s^\prime}\left|\frac{\mu_{a,n}(s^\prime)-\mu_a(s^\prime)}{(1-\delta_0) \mu_{a}(s^\prime)}\right|>t\right).
\end{align*}

By using the multiplicative Chernoff bound and Bernstein inequality, we have
\begin{align*}
&\leq P\left(\sup_{a,s^\prime}\left|\frac 1n \sum_{i=1}^n \mathds{1}(S_i=s^\prime)-\mu_a(s^\prime)\right|>\delta_0\mu_a(s^\prime)\right) \\ &+ P\left(\frac{2}{1-\delta_0}\|u\|_{\infty}\sup_{a,s^\prime}\frac 1{\mu_{a}(s^\prime)}\left|\frac 1n \sum_{i=1}^n \mathds{1}(S_i=s^\prime)-\mu_a(s^\prime)\right|>t\right) \\
&\leq 2\sum_{a\in\mathcal{A}}\sum_{s^\prime\in \mathcal{S}}\left(\exp\left(-\frac{\delta_0^2n\mu_a(s^\prime)}{3}\right)+\exp\left(-\frac{t^2}2\left(\frac{4\|u\|_{\infty}^2}{(1-\delta_0)^2n\mu_a(s^\prime)} + \frac{2\|u\|_{\infty}t}{3(1-\delta_0)n\mu_a(s^\prime)}\right)^{-1}\right)\right).
\end{align*}
Since $\mu_a(s^\prime)\geq \mathfrak{p}_{\wedge}$, and both exponential term above is monotonically decreasing over $\mu_a(s^\prime)$, we have
\begin{align*}
&\leq 2|\mathcal{A}||\mathcal{S}|\left(\exp\left(-\frac{\delta_0^2n\mathfrak{p}_{\wedge}}{3}\right)+\exp\left(-\frac{t^2}2\left(\frac{4\|u\|_{\infty}^2}{(1-\delta_0)^2n\mathfrak{p}_{\wedge}} + \frac{2\|u\|_{\infty}t}{3(1-\delta_0)n\mathfrak{p}_{\wedge}}\right)^{-1}\right)\right).
\end{align*}
Recall from \eqref{eqn:Thatv_minus_Tv} that
\begin{equation*}
P\left(\sup_{\pi\in\Pi}\left|\hat{\mathbf{T}}^\pi(v)(s) - \mathcal{T}^\pi(v)(s)\right|>t\right) \leq P\left(\sup_{\lambda >0,d_a\in\Delta(\mathcal{A})} \left|f(P_{s},R(s,\cdot,\cdot)+\gamma v,\lambda) - f(P_s,R(s,\cdot,\cdot)+\gamma v,\lambda)\right|\right)
\end{equation*}
Replacing $\boldsymbol\mu$ with $P_s$ and $\boldsymbol\mu_n$ with $P_{s,n}$ and choose $\delta_0 = \frac12$, by union bound, we have
\begin{align*}
&P\left(\sup_{\pi\in\Pi}\left\|\hat{\mathbf{T}}^\pi(v)-\mathcal{T}^\pi(v)\right\|_\infty>t\right)\\
&\leq P\left(\sup_s\sup_{\lambda\geq0,d\in\Delta(\mathcal{A})}|f(P_{s,n},R(s,\cdot,\cdot)+\gamma v,\lambda)-f(P_{s},R(s,\cdot,\cdot)+\gamma v,\lambda)|>t\right)\\
&\leq 2|\mathcal{S}|^2|\mathcal{A}|\exp\left(-\frac{n\mathfrak{p}_{\wedge}}{12}\right)+2|\mathcal{S}|^2|\mathcal{A}|\exp\left(-\frac{t^2}{2}\left(\frac{16\|R(s,\cdot,\cdot)+\gamma v\|_{\infty}^2}{n\mathfrak{p}_{\wedge}} + \frac{4\|R(s,\cdot,\cdot)+\gamma v\|_{\infty}t}{3\gamma n\mathfrak{p}_{\wedge}}\right)^{-1}\right).
\end{align*}
Set each term to be less than $\eta/2$, we need
\begin{gather}
n\geq \frac{12}{\mathfrak{p}_{\wedge}}\log\left(4|\mathcal{S}|^2|\mathcal{A}|/\eta\right)\label{eq:proposition-kl-n}\\
t\geq \frac{8\|R+\gamma v\|_{\infty}}{3n\mathfrak{p}_{\wedge}}\log\left(4|\mathcal{S}|^2|\mathcal{A}|/\eta\right) + \frac{4 \|R+\gamma v\|_{\infty}}{\sqrt{n\mathfrak{p}_{\wedge}}}\sqrt{2\log\left(4|\mathcal{S}|^2|\mathcal{A}|/\eta\right)}.\label{eq:proposition-kl-t}
\end{gather}
Under \eqref{eq:proposition-kl-n}, we have
\begin{equation*}
\frac{\log(4|\cS|^2|\cA|/\eta)}{n\mathfrak{p}_\wedge}\leq \sqrt{\frac{\log(4|\cS|^2|\cA|/\eta)}{n\mathfrak{p}_\wedge}}.
\end{equation*}
By substituting this bound into \eqref{eq:proposition-kl-t}, we have
\begin{align*}
&\frac{8\|R+\gamma v\|_{\infty}}{3n\mathfrak{p}_{\wedge}}\log\left(4|\mathcal{S}|^2|\mathcal{A}|/\eta\right) + \frac{4 \|R+\gamma v\|_{\infty}}{\sqrt{n\mathfrak{p}_{\wedge}}}\sqrt{2\log\left(4|\mathcal{S}|^2|\mathcal{A}|/\eta\right)}\\
&\leq \left(\frac{8}{3}+4\sqrt{2}\right)\frac{\|R+\gamma v\|_{\infty}}{\sqrt{n\mathfrak{p}_{\wedge}}}\sqrt{\log\left(4|\mathcal{S}|^2|\mathcal{A}|/\eta\right)}\\
&\leq \frac{9\|R+\gamma v\|_{\infty}}{\sqrt{n\mathfrak{p}_{\wedge}}}\sqrt{\log\left(4|\mathcal{S}|^2|\mathcal{A}|/\eta\right)}
\end{align*}
Therefore, for when $n$ specifies \eqref{eq:proposition-kl-n} and $t$ satisfies
\begin{equation*}
t \geq \frac{9\|R+\gamma v\|_{\infty}}{\sqrt{n\mathfrak{p}_{\wedge}}}\sqrt{\log\left(4|\mathcal{S}|^2|\mathcal{A}|/\eta\right)},
\end{equation*}
we have
\begin{equation*}
P\left(\sup_{\pi\in\Pi}\left\|\hat{\mathbf{T}}^\pi(v)-\mathcal{T}^\pi(v)\right\|_\infty>t\right) \leq \eta.
\end{equation*}
Since $|\sup_X f-\sup_X g|\leq \sup_X|f-g|$,
\begin{align*}
P\left(\sup_{\pi\in\Pi}\left\|\hat{\mathbf{T}}^*(v)-\mathcal{T}^*(v)\right\|_\infty>t\right)&=P\left(\left\|\sup_{\pi\in\Pi}\hat{\mathbf{T}}^\pi(v)-\sup_{\pi\in\Pi}\mathcal{T}^\pi(v)\right\|_\infty>t\right)\\
&\leq P\left(\sup_{\pi\in\Pi}\left\|\hat{\mathbf{T}}^\pi(v)-\mathcal{T}^\pi(v)\right\|_\infty>t\right) \\
&\leq \eta.
\end{align*}

This implies Proposition \ref{proposition:kl-bellman-operator-error}.
\end{proof}


\subsection{Proof of Lemma \ref{lemma:kl-sup-f-diff}}
\label{sub:kl-sup-f-diff-proof}
\begin{proof}

Observe that multiplying the numerator and denominator by $e^{d_a\|u\|_{L^\infty(\mu_a)}/\lambda}$ preserves the value of the fraction. This is equivalent to rewriting the exponential terms as:

\begin{equation*}
\left|\sum_{a\in\mathcal A}\lambda \frac{m_{a,n}[e^{-d_au/\lambda}]}{\mu_{a,n}(t)[e^{-d_au/\lambda}]}\right| =\left|\sum_{a\in\mathcal A}\lambda \frac{m_{a,n}[e^{d_a(\|u\|_{L^\infty(\mu_a)} -u)/\lambda}]}{\mu_{a,n}(t)[e^{d_a(\|u\|_{L^\infty(\mu_a)} -u)/\lambda}]}\right|.
\end{equation*}

Since $m_{a,n} = \mu_{a,n}-\mu_a$, for any constant $c$, we have $m_{a,n}[c]=0$, which lead to
\begin{equation*}
= \left|\sum_{a\in\mathcal A}\lambda \frac{m_{a,n}[e^{d_a(\|u\|_{L^\infty(\mu_a)} -u)/\lambda}-1]}{\mu_{a,n}(t)[e^{d_a(\|u\|_{L^\infty(\mu_a)} -u)/\lambda}]}\right|.
\end{equation*}
For any measure $m,\mu$ and random variable $w_1,w_2$, the following equation holds:
\begin{align*}
\left|\frac{m[w_1]}{\mu[w_2]}\right| &=\left|\frac{\sum_{s}m(s)w_1(s)}{\sum_{s}\mu(s)w_2(s)}\right|\\
&= \left|\left(\sum_{s}\mu(s)\frac{m(s)}{\mu(s)}w_2(s)\frac{w_1(s)}{w_2(s)}\right)\left(\sum_{s}\mu(s)w_2(s)\right)^{-1}\right|\\
&\leq\left|\frac{\sum_{s}\mu(s)w_2(s)}{\sum_{s}\mu(s)w_2(s)}\right|\cdot\max_s\left|\frac{m(s)}{\mu(s)}\right|\cdot\max_s\left|\frac{w_1(s)}{w_2(s)}\right|\\
&=\left\|\frac{dm}{d\mu}\right\|_{L^{\infty}(\mu)}\left\|\frac{w_1}{w_2}\right\|_{L^{\infty}(\mu)}.
\end{align*}
Applying this result and $\left|\sum \cdot\right|\leq \sum|\cdot|$, we obtain
\begin{equation*}
\left|\sum_{a\in\mathcal A}\lambda \frac{m_{a,n}[e^{-d_au/\lambda}]}{\mu_{a,n}(t)[e^{-d_au/\lambda}]}\right|\leq \sum_{a\in\mathcal A}\left\|\lambda \frac{e^{d_a(\|u\|_{L^\infty(\mu_a)} -u)/\lambda}-1}{e^{d_a(\|u\|_{L^\infty(\mu_a)} -u)/\lambda}}\right\|_{L^\infty(\mu_a)}\left\|\frac{dm_{a,n}}{d\mu_{a,n}(t)}\right\|_{L^\infty(\mu_a)}.
\end{equation*}
Notice that when $x>0$, we have $e^x-1 \leq xe^x$, then we obtain
\begin{align*}
&\leq\sum_{a\in\mathcal A}\left\|\lambda \frac{\frac{d_a(\|u\|_{L^\infty(\mu_a)} -u)}{\lambda}e^{d_a(\|u\|_{L^\infty(\mu_a)} -u)/\lambda}}{e^{d_a(\|u\|_{L^\infty(\mu_a)} -u)/\lambda}}\right\|_{L^\infty(\mu_a)}\left\|\frac{dm_{a,n}}{d\mu_{a,n}(t)}\right\|_{L^\infty(\mu_a)}\\
&\leq\sum_{a\in\mathcal A} \left\| d_a(\|u\|_{L^\infty(\mu_a)}-u)\right\|_{L^\infty(\mu_a)}\left\|\frac{dm_{a,n}}{d\mu_{a,n}(t)}\right\|_{L^\infty(\mu_a)}\\
&\leq\sum_{a\in\mathcal A} 2 d_a\|u\|_{\infty}\left\|\frac{dm_{a,n}}{d\mu_{a,n}(t)}\right\|_{L^\infty(\mu_a)}\\
&\leq 2\|u\|_{\infty}\left\|\frac{dm_{n}}{d\mu_{n}(t)}\right\|_{\infty,\boldsymbol\mu}
\end{align*}  
as claimed. 
\end{proof}


\subsection{Proof of Lemma \ref{lemma:mut-near-mu}}
\label{sub:mut-near-mu-proof}
\begin{proof}
On the event $\Omega_{n,\delta_0}$, the empirical measure satisfies $\sup_{s^\prime\in\mathcal{S}}\left|\frac{\mu_n(s^\prime)-\mu(s^\prime)}{\mu(s^\prime)}\right|\leq \delta_0$. Hence, for any $s^\prime$ with $\mu(s^\prime) > 0$, we have:
\begin{equation*}
(1-\delta_0)\mu(s^\prime)\leq \mu_n(s^\prime)\leq (1+\delta_0)\mu(s^\prime).
\end{equation*}
Substituting in the above bound on $\mu_n(s^\prime)$ into the definition of $\mu_{n}(t)(s^\prime)$ gives
\begin{equation*}
(1-(1-t)\delta_0)\mu(s^\prime)\leq t\mu(s^\prime)+(1-t)\mu_n(s^\prime)\leq(1+(1-t)\delta_0)\mu(s^\prime).
\end{equation*}
For all $t\in[0,1]$, $(1-t)\leq1$, therefore, we have 
\begin{equation*}
(1-\delta_0)\mu(s^\prime)\leq\mu_n(t)(s^\prime)\leq(1+\delta_0)\mu(s^\prime).
\end{equation*}
\end{proof}

\subsection{Proof of Theorem \ref{theorem: kl}}
\label{sub:proof_of_theorem_kl}
\begin{proof}
Substituting $\|R + \gamma v\|_{\infty} \leq 1/(1-\gamma)$ into the bound from Proposition~\ref{proposition:kl-bellman-operator-error} and applying Proposition~\ref{proposition:value-function-error}, we obtain the stated result.
\begin{align*}
\|\hat v - v^*\|_{\infty}&\leq \frac{1}{1-\gamma}\left\|\hat{\mathbf{T}}^*(v^*)-\mathcal{T}^*(v^*)\right\|_\infty\\
&\leq \frac{9\|R+\gamma v\|_{\infty}}{(1-\gamma)\sqrt{n\mathfrak{p}_{\wedge}}}\sqrt{\log\left(4|\mathcal{S}|^2|\mathcal{A}|/\eta\right)}\\
&\leq \frac{9}{(1-\gamma)^2\sqrt{n\mathfrak{p}_{\wedge}}}\sqrt{\log\left(4|\mathcal{S}|^2|\mathcal{A}|/\eta\right)}\\
\end{align*}
with probability $1-\eta$.

The preceding step provides a uniform bound on the value-function estimation error. We now turn to the policy suboptimality gap. We first introduce following lemma.

\begin{lemma}\label{lemma:supVoptgap_leq_supVdiff}
Let $\hat\pi^* \in \Pi$ satisfy
\begin{equation*}
\hat{\mathbf{T}}^{\hat\pi^*}(\hat v)(s)=\hat{\mathbf{T}}^*(\hat v)(s),\qquad \forall s\in\mathcal S.
\end{equation*}
Then for every $s\in\mathcal{S}$, we have
\begin{equation*}
0\leq \sup_{\pi\in\Pi}V_{\mathcal{P}}^{\pi}(s)-V_{\mathcal{P}}^{\hat{\pi}^*}(s)\leq 2\sup_{\pi\in\Pi}\left\|V_{\hat{\mathcal{P}}}^{\pi}-V_{\mathcal{P}}^{\pi}\right\|_{\infty}
\end{equation*}
\end{lemma}

According to Corollary \ref{corollary:value-function-error-2}, Lemma \ref{lemma:supVoptgap_leq_supVdiff} and Proposition \ref{proposition:kl-bellman-operator-error}, we have
\begin{align*}
\sup_{\pi\in\Pi}V_{\mathcal{P}}^{\pi}(s)-V_{\mathcal{P}}^{\hat{\pi}^*}(s)&\leq2\sup_{\pi\in\Pi}\left\|V_{\hat{\mathcal{P}}}^{\pi}-V_{\mathcal{P}}^{\pi}\right\|_{\infty}\\
&\leq\frac{2}{1-\gamma}\sup_{\pi\in\Pi}\left\|\hat{\mathbf{T}}^{\pi}(V_{\mathcal{P}}^{\pi})-\mathcal{T}^{\pi}(V_{\mathcal{P}}^{\pi})\right\|_{\infty}\\
&\leq\frac{18\|R+\gamma v\|_\infty}{\sqrt{n\mathfrak{p}_\wedge}(1-\gamma)}\sqrt{\log{(4|\mathcal{S}|^2|\mathcal{A}|/\eta)}}\\
&\leq\frac{18}{\sqrt{n\mathfrak{p}_\wedge}(1-\gamma)^2}\sqrt{\log{(4|\mathcal{S}|^2|\mathcal{A}|/\eta)}}
\end{align*}
\end{proof}

\subsection{Proof of Lemma \ref{lemma:supVoptgap_leq_supVdiff}}
\begin{proof}
We first show that $\hat\pi^*$ is optimal for the empirical DR-MDP, namely
\begin{equation*}
V_{\hat{\mathcal{P}}}^{\hat\pi^*}(s)=\sup_{\pi\in\Pi}V_{\hat{\mathcal{P}}}^{\pi}(s),\qquad \forall s\in\mathcal S.
\end{equation*}
Since
\begin{equation*}
\hat{\mathbf T}^{\hat\pi^*}(\hat v)(s)=\hat{\mathbf T}^*(\hat v)(s)=\hat v(s), \qquad \forall s\in\mathcal S,
\end{equation*}
the vector $\hat v$ is a fixed point of $\hat{\mathbf T}^{\hat\pi^*}$. Because
$\hat{\mathbf T}^{\hat\pi^*}$ is a $\gamma$-contraction, this fixed point is unique.
By definition, that unique fixed point is $V_{\hat{\mathcal{P}}}^{\hat\pi^*}$. Hence
\begin{equation*}
V_{\hat{\mathcal{P}}}^{\hat\pi^*}=\hat v.
\end{equation*}

Now fix any $\pi\in\Pi$. By definition of $\hat{\mathbf T}^*$, for every $u: S\rightarrow \mathbb{R}$ and every $s\in\mathcal{S}$,
\begin{equation*}
\hat{\mathbf T}^{\pi}(u)(s)\le \hat{\mathbf T}^*(u)(s).
\end{equation*}
In particular,
\begin{equation*}
\hat{\mathbf T}^{\pi}(\hat v)(s)\le \hat{\mathbf T}^*(\hat v)(s)=\hat v(s),\qquad \forall s\in\mathcal{S}.
\end{equation*}
Define the sequence $(u_k)_{k\ge 0}$ by
\begin{equation*}
u_0:=\hat v,\qquad u_{k+1}:=\hat{\mathbf T}^{\pi}(u_k),\quad k\ge 0.
\end{equation*}
Then $u_1(s)\le u_0(s)$ for all $s$. Since $\hat{\mathbf T}^{\pi}$ is monotone, whenever
$u_k(s)\le u_{k-1}(s)$, we have
\begin{equation*}
u_{k+1}=\hat{\mathbf T}^{\pi}(u_k)(s)\le\hat{\mathbf T}^{\pi}(u_{k-1})(s)=u_k(s),\qquad \forall s\in\mathcal{S}.
\end{equation*}
Thus, by induction,
\begin{equation*}
u_k(s)\le \hat v(s),\qquad \forall s\in\mathcal{S},\forall k\ge 1.
\end{equation*}
Since $\hat{\mathbf T}^{\pi}$ is a $\gamma$-contraction, $(u_k)$ converges to the
unique fixed point of $\hat{\mathbf T}^{\pi}$, namely $V_{\hat{\mathcal{P}}}^{\pi}$.
Letting $k\to\infty$ yields
\begin{equation*}
V_{\hat{\mathcal{P}}}^{\pi}(s)\le \hat v(s) = V_{\hat{\mathcal{P}}}^{\hat{\pi}^*}(s),\qquad \forall s\in\mathcal{S}.
\end{equation*}
Because $\pi \in \Pi$ was arbitrary,
\begin{equation*}
V_{\hat{\mathcal{P}}}^{\hat{\pi}^*}(s)=\sup_{\pi\in\Pi}V_{\hat{\mathcal{P}}}^{\pi}(s),\qquad \forall s\in\mathcal{S}.
\end{equation*}
Finally, for any $s\in\mathcal{S}$,
\begin{align*}
0&\leq\sup_{\pi\in\Pi}V_{\mathcal{P}}^{\pi}(s)-V_{\mathcal{P}}^{\hat{\pi}^*}(s)\\
&=\sup_{\pi\in\Pi}V_{\mathcal{P}}^{\pi}(s)-\sup_{\pi\in\Pi}V_{\hat{\mathcal{P}}}^\pi(s)+V_{\hat{\mathcal{P}}}^{\hat{\pi}^*}(s)-V_{\mathcal{P}}^{\hat{\pi}^*}(s)\\
&\leq\left\|\sup_{\pi\in\Pi}V_{\mathcal{P}}^{\pi}-\sup_{\pi\in\Pi}V_{\hat{\mathcal{P}}}^\pi\right\|_{\infty}+\left\|V_{\hat{\mathcal{P}}}^{\hat{\pi}^*}-V_{\mathcal{P}}^{\hat{\pi}^*}\right\|_{\infty}\\
&\leq2\sup_{\pi\in\Pi}\left\|V_{\mathcal{P}}^{\pi}-V_{\hat{\mathcal{P}}}^\pi\right\|_{\infty}
\end{align*}  
This completes the proof.
\end{proof}


\section{Proofs of Properties of the Empirical Bellman Operator: \texorpdfstring{$f$}{f}-Divergence Case}

\subsection{Proof of Proposition \ref{proposition:f-divergence-bellman-operator-error}}
\label{sub:proposition-f-proof}
\begin{proof}
Let 
\begin{equation*}
f(\boldsymbol \mu,u,\boldsymbol\eta)=-c(k,\rho,|\mathcal{A}|)\left(\sum_{a\in\mathcal{A}}\mu_a\left[w_a^{k^*} \right]\right)^{1/k^*}+\sum_{a\in\mathcal{A}} \eta_a,
\end{equation*}
where $w_a = (\eta_a-d_au)_+$. By definition, we have
\begin{align*}
&P\left(\sup_{\pi}\left|\hat{\mathbf{T}}^\pi(v)(s) - \mathcal{T}^\pi(v)(s)\right|>t\right)\\
&\leq P\left(\sup_{d\in\Delta(|\mathcal{A}|)} \left|\sup_{\boldsymbol{\eta}\in\mathbb{R}^{|\mathcal{A}|}}f(\boldsymbol{\mu}_n,R(s,\cdot,\cdot)+\gamma v,\boldsymbol{\eta})-\sup_{\boldsymbol{\eta}\in\mathbb{R}^{|\mathcal{A}|}}f(\boldsymbol{\mu},R(s,\cdot,\cdot)+\gamma v,\boldsymbol{\eta})\right|>t\right).
\end{align*}
We analyze the sensitivity of the mapping $\boldsymbol{\mu}\ra f(\boldsymbol{\mu},u,\lambda)$. To control the difference between the empirical and the population objective, we establish the following lemma. The proof is deferred to Appendix~\ref{sub:f-divergence-Delta-proof}.
\begin{lemma} For any fixed $u$ and $\pi$, there exists $t\in [0,1]$ such that
  \begin{equation*}
  \left|\sup_{\boldsymbol{\eta}\in\mathbb{R}^{|\mathcal{A}|}}f(\boldsymbol{\mu}_n,u,\boldsymbol{\eta})-\sup_{\boldsymbol{\eta}\in\mathbb{R}^{|\mathcal{A}|}}f(\boldsymbol{\mu},u,\boldsymbol{\eta})\right| \leq 2\|u\|_{\infty,\boldsymbol{\mu}}\left\|\frac{dm_{n}}{d\mu_{n}(t)}\right\|_{\infty,\boldsymbol\mu}.
  \end{equation*}
  \label{lemma:f-divergence-Delta}
\end{lemma}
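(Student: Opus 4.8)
The plan is to follow the template of the KL case (Lemma~\ref{lemma:kl-sup-f-diff}), but with two extra ingredients forced by the power--mean structure: a reduction to the optimizer, and a cancellation of the prefactor $c(k,\rho,|\mathcal{A}|)$ so that the final constant is independent of $\rho$.

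First I would localize the supremum. Writing $\boldsymbol{\eta}^*$ and $\hat{\boldsymbol{\eta}}$ for the maximizers of $\boldsymbol{\eta}\mapsto f(\boldsymbol{\mu},u,\boldsymbol{\eta})$ and $\boldsymbol{\eta}\mapsto f(\boldsymbol{\mu}_n,u,\boldsymbol{\eta})$, the elementary sandwich $f(\boldsymbol{\mu}_n,u,\boldsymbol{\eta}^*)-f(\boldsymbol{\mu},u,\boldsymbol{\eta}^*)\le \sup_{\boldsymbol{\eta}}f(\boldsymbol{\mu}_n,u,\cdot)-\sup_{\boldsymbol{\eta}}f(\boldsymbol{\mu},u,\cdot)\le f(\boldsymbol{\mu}_n,u,\hat{\boldsymbol{\eta}})-f(\boldsymbol{\mu},u,\hat{\boldsymbol{\eta}})$ shows the left-hand side of the lemma is at most $|f(\boldsymbol{\mu}_n,u,\boldsymbol{\eta}^\star)-f(\boldsymbol{\mu},u,\boldsymbol{\eta}^\star)|$ for $\boldsymbol{\eta}^\star$ equal to one of these two optimizers. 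In this difference the affine term $\sum_a\eta_a$ cancels, leaving only the concave penalty; I set $w_a=(\eta^\star_a-d_au)_+$, interpolate $\mu_{a,n}(t)=t\mu_a+(1-t)\mu_{a,n}$, and apply the mean value theorem to $g_n(t)=f(\boldsymbol{\mu}_n(t),u,\boldsymbol{\eta}^\star)$ exactly as in the KL proof. This gives $|g_n(0)-g_n(1)|=|g_n'(\tau)|$ with $g_n'(\tau)=\tfrac{c(k,\rho,|\mathcal{A}|)}{k^*}\,S(\tau)^{1/k^*-1}\sum_{a}m_{a,n}[w_a^{k^*}]$, where $S(\tau)=\sum_a\mu_{a,n}(\tau)[w_a^{k^*}]$.

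Second, the core sensitivity estimate mirrors Lemma~\ref{lemma:kl-sup-f-diff} by exploiting that $m_{a,n}$ annihilates constants. Because $w_a$ is $1$-Lipschitz in $d_a u$, its oscillation on $\mathrm{supp}\,\mu_a$ is at most $2d_a\|u\|_{L^\infty(\mu_a)}$; subtracting the constant $\bar w_a^{k^*}$ with $\bar w_a=\max_{s'}w_a(s')$ and using $|w_a^{k^*}-\bar w_a^{k^*}|\le k^*\bar w_a^{k^*-1}(\bar w_a-w_a)$ gives, through the same Radon--Nikodym bound $|m_{a,n}[h]|\le\|dm_{a,n}/d\mu_{a,n}(\tau)\|_{L^\infty(\mu_a)}\,\mu_{a,n}(\tau)[|h|]$ used in the KL case, the estimate $\bigl|\sum_a m_{a,n}[w_a^{k^*}]\bigr|\le 2k^*\|u\|_{\infty,\boldsymbol{\mu}}\bigl\|\tfrac{dm_n}{d\mu_n(\tau)}\bigr\|_{\infty,\boldsymbol{\mu}}\sum_a d_a\bar w_a^{k^*-1}$. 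Subtracting the constant is what makes this useful as $\rho\downarrow 0$: the naive estimate keeping $w_a^{k^*}$ intact only yields a factor $c(k,\rho,|\mathcal{A}|)\,S(\tau)^{1/k^*}$, which diverges in that limit. Next I would cancel the prefactor via first-order optimality at $\boldsymbol{\eta}^\star$. Stationarity forces $\mu_a[w_a^{k^*-1}]$ to equal a common value $q$ across all active actions, and combining the stationarity conditions with the active divergence constraint yields the identities $S^{1/k}=c(k,\rho,|\mathcal{A}|)\,q$ and $\bar w_a\le q^{k-1}+2d_a\|u\|_{L^\infty(\mu_a)}$ (the latter from $\min_{s'}w_a(s')\le q^{k-1}$ and the oscillation bound). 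Substituting, the factor $c(k,\rho,|\mathcal{A}|)\,S(\tau)^{1/k^*-1}\sum_a d_a\bar w_a^{k^*-1}$ collapses --- up to the comparison $\mu_{a,n}(\tau)\approx\mu_a$ controlled in the proof of Proposition~\ref{proposition:f-divergence-bellman-operator-error} --- to the $\rho$-free quantity $q^{-1}\sum_a d_a\bar w_a^{k^*-1}$.

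The hard part will be bounding $q^{-1}\sum_a d_a\bar w_a^{k^*-1}$ by the universal constant $2^{1/(k-1)}k^*$ uniformly in $\rho$. When the bulk dominates ($q^{k-1}\ge 2d_a\|u\|_\infty$) one has $\bar w_a^{k^*-1}\le 2^{k^*-1}q$, which gives the clean constant directly; but when the oscillation dominates (the large-$\rho$, small-$q$ regime) the ratio of the maximum to the mean of $w_a^{k^*-1}$ is not controlled term by term, and one must instead revert to the un-subtracted estimate, whose prefactor $c(k,\rho,|\mathcal{A}|)^k q^{k-1}$ is small precisely in that regime. Reconciling the two regimes --- via a case split on $q$, or by taking the minimum of the two per-action estimates --- to land on a single $\rho$-independent constant is the delicate step, and it is exactly where the $O(1)$ dependence on $\rho$ as $\rho\downarrow0$ is secured, improving on the mutually-absolutely-continuous analysis of \citet{wang_sample_2024}.
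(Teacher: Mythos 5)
Your outline matches the paper's own proof in all essential respects: the paper likewise reduces to the $t$-derivative along the interpolation $\boldsymbol{\mu}_n(t)$, cancels the prefactor $c(k,\rho,|\mathcal{A}|)\,S(\tau)^{-1/k}$ against $1/\mu_{i,n}(\tau)[w_i^{1/(k-1)}]$ via the stationarity identity of Lemma~\ref{lemma:optimal-f}, subtracts a constant that $m_{a,n}$ annihilates (the paper uses $(\eta_a)_+^{k^*}$ rather than your $\bar w_a^{k^*}$) to extract the oscillation $2d_a\|u\|_{L^\infty(\mu_a)}$, and resolves your ``hard part'' by exactly the case split you anticipate: $\eta_a\ge 2d_a\|u\|_{L^\infty(\mu_a)}$ (bulk-dominated, where $\xi\le\eta_a\le 2(\eta_a-d_au)$ yields the factor $2^{1/(k-1)}k^*$) versus $d_a\operatorname*{essinf}_{\mu_a}u\le\eta_a<2d_a\|u\|_{L^\infty(\mu_a)}$ (oscillation-dominated, where the un-subtracted ratio is bounded by $2d_a\|u\|_{L^\infty(\mu_a)}$), taking the maximum of the two constants. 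The one genuine divergence is your localization step: you fix the optimizer at an endpoint and apply the mean value theorem to $t\mapsto f(\boldsymbol{\mu}_n(t),u,\boldsymbol{\eta}^\star)$, so the stationarity identity you need holds at $t\in\{0,1\}$ rather than at the intermediate $\tau$, and transferring it requires the comparison $\mu_{a,n}(\tau)\approx\mu_a$ you mention --- meaning your version of the lemma holds only on the good event $\Omega_{n,\delta_0}$ and with the constant degraded by a factor depending on $\delta_0$. The paper instead invokes the Milgrom--Segal envelope theorem for $V(t)=\sup_{\boldsymbol{\eta}\in X_2}g(\boldsymbol{\eta},t)$, so the derivative is evaluated at the optimizer for $\boldsymbol{\mu}_n(\tau)$ itself, the cancellation is exact, and the lemma remains a deterministic inequality; it also disposes of the degenerate region where all $\eta_a\le d_a\operatorname*{essinf}_{\mu_a}u$ (so that $q=0$ and division by $q$ is ill-posed) via the explicit partition into $X_1$, $X_2$, $X_3$ with $X_3=\varnothing$, a case your sketch should settle before dividing by $q$. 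Neither difference is fatal --- your route lands on the same bound up to constants --- but the envelope-theorem step is what keeps the stationarity cancellation exact at $\tau$ and the statement free of the good event.
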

We decomposed the probability using the event $\Omega_{n,\delta_0}(\boldsymbol{\mu})$ where the empirical estimates are close to the population measures. By using lemma \ref{lemma:f-divergence-Delta}, we obtain 
\begin{align*}
&P\left(\sup_{d\in\Delta(|\mathcal{A}|)} \left|\sup_{\boldsymbol{\eta}\in\mathbb{R}^{|\mathcal{A}|}}f(\boldsymbol{\mu}_n,u,\boldsymbol{\eta})-\sup_{\boldsymbol{\eta}\in\mathbb{R}^{|\mathcal{A}|}}f(\boldsymbol{\mu},u,\boldsymbol{\eta})\right|>t\right)\\
&\leq P(\Omega_{n,\delta_0}(\boldsymbol\mu)^c)+P\left(2\left\|u\right\|_{\infty}\left\|\frac{dm_n}{d\mu_n(\tau)}\right\|_{\infty,\boldsymbol\mu}>t,\Omega_{n,\delta_0}(\boldsymbol\mu)\right)
\end{align*}
Again using Lemma \ref{lemma:mut-near-mu}, we have
\begin{align*}
&\leq P\left(\sup_{a\in\mathcal{A},s^\prime\in\mathcal{S}}\left|\frac{\mu_{a,n}(s^\prime)-\mu_a(s^\prime)}{\mu_a(s^\prime)}\right|>\delta_0\right) + P\left(2\|u\|_{\infty,\boldsymbol{\mu}}\sup_{a\in\mathcal{A},s^\prime\in\mathcal{S}}\left|\frac{\mu_{a,n}(s^\prime)-\mu_a(s^\prime)}{(1-\delta_0) \mu_{a}(s^\prime)}\right|>t\right).
\end{align*}
By Chernoff Bound and Bernstein Inequality, we obtain
\begin{align*}
&\leq P\left(\sup_{a\in\mathcal{A},s^\prime\in\mathcal{S}}\left|\frac 1n \sum_{i=1}^n \mathds{1}(S_i=s^\prime)-\mu_a(s^\prime)\right|>\delta_0\mu_a(s^\prime)\right) \\ &+ P\left(\frac{2}{1-\delta_0}\|u\|_{\infty}\sup_{a\in\mathcal{A},s^\prime\in\mathcal{S}}\frac 1{\mu_{a}(s^\prime)}\left|\frac 1n \sum_{i=1}^n \mathds{1}(S_i=s^\prime)-\mu_a(s^\prime)\right|>t\right) \\
&\leq 2\sum_{a\in\mathcal{A}}\sum_{s^\prime\in \mathcal{S}}\left(\exp\left(-\frac{\delta_0^2n\mu_a(s^\prime)}{3}\right)+\exp\left(-\frac{t^2}2\left(\frac{4\|u\|_{\infty}^2}{(1-\delta_0)^2n\mu_a(s^\prime)} + \frac{2\|u\|_{\infty}t}{3(1-\delta_0)n\mu_a(s^\prime)}\right)^{-1}\right)\right),
\end{align*}
Since $\mu_a(s^\prime)\geq \mathfrak{p}_{\wedge}$, and both exponential term above is monotonically decreasing over $\mu_a(s^\prime)$, we have
\begin{align*}
&\leq 2|\mathcal{A}||\mathcal{S}|\left(\exp\left(-\frac{\delta_0^2n\mathfrak{p}_{\wedge}}{3}\right)+\exp\left(-\frac{t^2}2\left(\frac{4\|u\|_{\infty}^2}{(1-\delta_0)^2n\mathfrak{p}_\wedge} + \frac{2\|u\|_{\infty}t}{3(1-\delta_0)n\mathfrak{p}_\wedge}\right)^{-1}\right)\right)
\end{align*}
Choose $\delta_0 = \frac12$, by union bound, we obtain
\begin{align*}
&P\left(\sup_{\pi\in\Pi}\left\|\hat{\mathbf{T}}^\pi(v)-\mathcal{T}^\pi(v)\right\|_\infty>t\right)\\
&\leq P\left(\sup_{s\in\mathcal{S}}\sup_{d\in\Delta(\mathcal{A})}\left|\sup_{\boldsymbol{\eta}\in\mathbb{R}^{|\mathcal{A}|}}f(P_{s,n},R(s,\cdot,\cdot)+\gamma v,\boldsymbol{\eta})-\sup_{\boldsymbol{\eta}\in\mathbb{R}^{|\mathcal{A}|}}f(P_{s},R(s,\cdot,\cdot)+\gamma v,\boldsymbol{\eta})\right|>t\right)\\
&\leq 2|\mathcal{S}|^2|\mathcal{A}|\exp\left(-\frac{n\mathfrak{p}_{\wedge}}{12}\right)+2|\mathcal{S}|^2|\mathcal{A}|\exp\left(-\frac{t^2}{2}\left(\frac{16\|R(s,\cdot,\cdot)+\gamma v\|_{\infty}^2}{n\mathfrak{p}_{\wedge}} + \frac{4\|R(s,\cdot,\cdot)+\gamma v\|_{\infty}t}{3\gamma n\mathfrak{p}_{\wedge}}\right)^{-1}\right).
\end{align*}
Set each term to be less than $\eta/2$, by union bound, we need
\begin{gather}
n\geq \frac{12}{\mathfrak{p}_{\wedge}}\log\left(4|\mathcal{S}|^2|\mathcal{A}|/\eta\right)\label{eq:proposition-f-n}\\
t\geq \frac{8\|R+\gamma v\|_{\infty}}{3n\mathfrak{p}_{\wedge}}\log\left(4|\mathcal{S}|^2|\mathcal{A}|/\eta\right) + \frac{4\|R+\gamma v\|_{\infty}}{\sqrt{n\mathfrak{p}_{\wedge}}}\sqrt{2\log\left(4|\mathcal{S}|^2|\mathcal{A}|/\eta\right)}\label{eq:proposition-f-t}.
\end{gather}
Under \eqref{eq:proposition-f-n}, we have
\begin{equation*}
\frac{\log(4|\cS|^2|\cA|/\eta)}{n\mathfrak{p}_\wedge}\leq \sqrt{\frac{\log(4|\cS|^2|\cA|/\eta)}{n\mathfrak{p}_\wedge}}.
\end{equation*}
By substituting this bound into \eqref{eq:proposition-f-t}, we obtain
\begin{align*}
&\frac{8\|R+\gamma v\|_{\infty}}{3n\mathfrak{p}_{\wedge}}\log\left(4|\mathcal{S}|^2|\mathcal{A}|/\eta\right) + \frac{4\|R+\gamma v\|_{\infty}}{\sqrt{n\mathfrak{p}_{\wedge}}}\sqrt{2\log\left(4|\mathcal{S}|^2|\mathcal{A}|/\eta\right)}\\
&\leq \left(\frac{8}{3}+4\sqrt{2}\right)\frac{\|R+\gamma v\|_{\infty}}{\sqrt{n\mathfrak{p}_{\wedge}}}\sqrt{\log\left(4|\mathcal{S}|^2|\mathcal{A}|/\eta\right)}\\
&\leq \frac{9\|R+\gamma v\|_{\infty}}{\sqrt{n\mathfrak{p}_{\wedge}}}\sqrt{\log\left(4|\mathcal{S}|^2|\mathcal{A}|/\eta\right)}.
\end{align*}
Therefore, when $n$ satisfies \eqref{eq:proposition-f-n} and $t$ satisfies
\begin{equation*}
t\geq \frac{9\|R+\gamma v\|_{\infty}}{\sqrt{n\mathfrak{p}_{\wedge}}}\sqrt{\log\left(4|\mathcal{S}|^2|\mathcal{A}|/\eta\right)},
\end{equation*}
we have
\begin{equation*}
P\left(\sup_{\pi\in\Pi}\left\|\hat{\mathbf{T}}^\pi(v)-\mathcal{T}^\pi(v)\right\|_\infty>t\right) \leq \eta,
\end{equation*}
Since $|\sup_X f-\sup_X g|\leq \sup_X|f-g|$,
\begin{align*}
P\left(\sup_{\pi\in\Pi}\left\|\hat{\mathbf{T}}^*(v)-\mathcal{T}^*(v)\right\|_\infty>t\right)&=P\left(\left\|\sup_{\pi\in\Pi}\hat{\mathbf{T}}^\pi(v)-\sup_{\pi\in\Pi}\mathcal{T}^\pi(v)\right\|_\infty>t\right)\\
&\leq P\left(\sup_{\pi\in\Pi}\left\|\hat{\mathbf{T}}^\pi(v)-\mathcal{T}^\pi(v)\right\|_\infty>t\right) \\
&\leq \eta.
\end{align*}
which implies the statement of the proposition. 
\end{proof}

\subsection{Proof of Lemma \ref{lemma:f-divergence-Delta}}
\label{sub:f-divergence-Delta-proof}
\begin{proof}
We partition $\mathbb{R}^{|\mathcal{A}|}$ into three subsets, denoted by
\begin{align*}
  X_1 &= \left\{\boldsymbol{\eta}\Big|\eta_a\leq d_a\operatorname*{essinf}_{\mu_a}u~\text{for all }a\in\mathcal{A}\right\},\\
  X_2 &= \left\{\boldsymbol{\eta}\Big|d_a\operatorname*{essinf}_{\mu_a}u<\eta_a<M ~\text{for all }a\in\mathcal{A}\right\},\\
  X_3 &= \mathbb{R}^{|\mathcal{A}|}\setminus (X_1\cup X_2).
\end{align*}
where
\begin{equation*}
M =\max_{a,s^\prime}d_au(s^\prime) +(c(k,\rho,|\mathcal{A}|) - |\mathcal{A}|^{1/k})^{-1}\sum_{a\in\mathcal{A}}d_a\max_{s^\prime}u(s^\prime)
\end{equation*}

For any fixed $\boldsymbol{\mu},\boldsymbol{\mu}_n$ and $u$, let
\begin{gather*}
g(\boldsymbol{\eta},t) = f(\boldsymbol{\mu}_n(t),u,\boldsymbol{\eta}),\\
V(t) = \sup_{\boldsymbol{\eta}\in X_2}g(\boldsymbol{\eta},t).
\end{gather*}

First we record the regularity properties of $g$ that will be used to verify
the assumptions of the envelope theorem.

\begin{lemma}
\label{lemma:g-property}
The function $g(\boldsymbol{\eta},t)$ has the following properties:
\begin{itemize}
  \item For every fixed $t\in[0,1]$, the map
  $\boldsymbol{\eta}\mapsto g(\boldsymbol{\eta},t)$ is continuous on $X_2$.
  \item For every fixed $\boldsymbol{\eta}\in X_2$, the map
  $t\mapsto g(\boldsymbol{\eta},t)$ is absolutely continuous and differentiable
  on $[0,1]$.
  \item Under $\Omega_{n,\delta_0}$, there exists a finite constant $b_0$ such that
  \[
  |\partial_t g(\boldsymbol{\eta},t)|\le b_0,
  \qquad \forall \boldsymbol{\eta}\in X_2,\ \forall t\in[0,1].
  \]
\end{itemize}
\end{lemma}

\begin{lemma}
\label{lemma:optimization-structure}
For any fixed measure vector $\boldsymbol{\mu}=(\mu_a)_{a\in\mathcal A},$
consider the optimization problem
\begin{equation*}
\sup_{\boldsymbol{\eta}\in\mathbb{R}^{|\mathcal A|}}f(\boldsymbol{\mu},u,\boldsymbol{\eta})=\sup_{\boldsymbol{\eta}\in\mathbb{R}^{|\mathcal A|}}
-c(k,\rho,|\mathcal A|)\left(\sum_{a\in\mathcal A}\mu_a[w_a^{k^*}]\right)^{1/k^*}+\sum_{a\in\mathcal A}\eta_a,
\end{equation*}
where $w_a=(\eta_a-d_au)_+$.
Then:
\begin{itemize}
  \item this optimization problem admits an optimal solution  $\boldsymbol{\eta}^*(\boldsymbol{\mu})$;
  \item any optimal solution belongs to $X_1\cup X_2$;
  \item the optimal solution satisfies
  \begin{equation}
  \left(\sum_{a\in\mathcal A}\mu_a[w_a^{k^*}]\right)^{1/k}  =  c(k,\rho,|\mathcal A|)  \mu_i[w_i^{1/(k-1)}],  \qquad \forall i\in\mathcal A;
  \label{eq:kl-first-order}
  \end{equation}
  \item if \(\boldsymbol{\eta}^*(\boldsymbol{\mu})\in X_2\), then
  \begin{equation*}
  f(\boldsymbol{\mu},u,\boldsymbol{\eta}^*(\boldsymbol{\mu}))  =  -\frac{\sum_{a\in\mathcal A}\mu_a[w_a^{k^*}]}{\mu_i[w_i^{1/(k-1)}]} +\sum_{a\in\mathcal A}\eta_a^*(\boldsymbol{\mu}),  \qquad \forall i\in\mathcal A.
  \end{equation*}
\end{itemize}
\end{lemma}
The proof of Lemma \ref{lemma:g-property} and Lemma \ref{lemma:optimization-structure} is deferred to Appendix~\ref{sub:proof-of-lemma-g-property} and Appendix~\ref{sub:proof-of-lemma-optimization-structure}, respectively.

When $\boldsymbol{\eta}\in X_1$, we have
\begin{align*}
\left|\sup_{\boldsymbol{\eta}\in X_1}f(\boldsymbol \mu_n,u,\boldsymbol\eta)-\sup_{\boldsymbol{\eta}\in X_1}f(\boldsymbol \mu,u,\boldsymbol\eta)\right|&\leq\sup_{\boldsymbol{\eta}\in X_1}\left|f(\boldsymbol \mu_n,u,\boldsymbol\eta)-f(\boldsymbol \mu,u,\boldsymbol\eta)\right|\\
&=\left|\left(-0+\sum_{a\in\mathcal{A}}\eta_a\right)-\left(-0+\sum_{a\in\mathcal{A}}\eta_a\right)\right| = 0
\end{align*}

Otherwise, $\boldsymbol{\eta}\in X_2$. 
Before proceeding, we introduce the following version of the envelope theorem, which ensures the differentiability of $V(t)$ and provides an explicit formula for its derivative. This result allows us to apply the mean value theorem in the subsequent analysis.

\begin{lemma}[Envelope theorem, \citet{milgrom2002envelope}, Theorem 2]Denote $V$ as
\begin{equation*}
V(t) = \sup_{\mathbf{x}\in X}f(\mathbf{x},t).
\end{equation*}
Suppose that $f(\mathbf{x},\cdot)$ is absolutely continuous for all $\mathbf{x} \in X$. Suppose also that there exists an integrable function $b:[0,1]\rightarrow \mathbb{R}_+$ such that $|\partial_t f(\mathbf{x},t)|\leq b(t)$ for all $\mathbf{x}\in X$ and almost all $t\in[0,1]$. Then $V$ is absolutely continuous. Suppose, in addition, that $f(\mathbf{x},\cdot)$ is differentiable for all $\mathbf{x}\in X$, and that $X^*(t)\neq \varnothing$ almost everywhere on $[0,1]$. Then for any selection $\mathbf{x}^*(t)\in X^*(t)$,
\begin{equation*}
V(t) = V(0) + \int_0^t \partial_tf(\mathbf{x}^*(s),s)ds.
\end{equation*}
\label{lemma:envelope theorem}
\end{lemma}
On the event $\Omega_{n,\delta_0}$, by Lemma \ref{lemma:g-property}, we have that $g(\boldsymbol{\eta},\cdot)$ is absolutely continuous for all $\boldsymbol{\eta}\in X_2$, and there exists an integrable function $b_0$ such that $|\partial_t g(\boldsymbol{\eta},t)|\leq b_0$ for all $\boldsymbol{\eta}\in X_2$ and all $t\in[0,1]$. Then by using Lemma \ref{lemma:envelope theorem}, we have,

\begin{align*}
\left|\sup_{\boldsymbol{\eta}\in X_2}f(\boldsymbol \mu_n,u,\boldsymbol\eta)
-\sup_{\boldsymbol{\eta}\in X_2}f(\boldsymbol \mu,u,\boldsymbol\eta)\right|
&=\left|V(0)-V(1)\right|\\
&=\left|\int_0^1 \frac{d}{dt}g(\boldsymbol{\eta}^*(t),t)\,dt\right| \\
&\le \sup_{t\in[0,1]}\left|\frac{\partial}{\partial t} g(\boldsymbol{\eta}^*(t),t)\right|.
\end{align*}
Next, we bound for any $\tau\in[0,1]$
\begin{align*}
\left|\frac{\partial}{\partial t} g(\boldsymbol{\eta}^*(t),t)\Big|_{t=\tau}\right|
&=\left|\frac{c(k,\rho,|\mathcal{A}|)}{k^*\left(\sum_{a\in\mathcal{A}}\mu_{a,n}(\tau)\left[w_a^{k^*} \right]\right)^{1/k}}\sum_{a\in\mathcal{A}}m_{a,n}\left[w_a^{k^*}\right]\right|\\
&=\left|\sum_{a\in\mathcal{A}}\frac{c(k,\rho,|\mathcal{A}|)}{k^*\left(\sum_{\ell\in\mathcal{A}}\mu_{\ell,n}(\tau)\left[w_\ell^{k^*} \right]\right)^{1/k}}m_{a,n}\left[w_a^{k^*}\right]\right|.
\end{align*}
By using (\ref{eq:kl-first-order}), we have
\begin{equation*}
\frac{c(k,\rho,|\mathcal{A}|)}{k^*\left(\sum_{a\in\mathcal{A}}\mu_{a,n}(t)\left[w_a^{k^*} \right]\right)^{1/k}} = \frac{1}{k^*\cdot\mu_{a,n}(t)\left[w_a^{1/(k-1)}\right]}, \qquad \forall a\in\mathcal{A}.
\end{equation*}
Therefore, combining the previous displays,
\begin{align*}
\left|\frac{\partial}{\partial t} g(\boldsymbol{\eta}^*(t),t)\Big|_{t=\tau}\right|&=\left|\sum_{a\in\mathcal{A}}\frac{m_{a,n}\left[w_a^{k^*}\right]}{k^*\cdot\mu_{a,n}(\tau)\left[w_a^{1/(k-1)}\right]} \right|\\
&\leq\sum_{a\in\mathcal{A}}\left|\frac{m_{a,n}\left[w_a^{k^*}\right]}{k^*\cdot\mu_{a,n}(\tau)\left[w_a^{1/(k-1)}\right]} \right|.
\end{align*}
For $\eta_a \geq d_a\operatorname*{essinf}_{\mu_a}u$, by mean value theorem, there exists $\xi \in (\eta_a-d_a\|u\|_{L^{\infty}(\mu_a)},\eta_a-d_au)$ satisfies
\begin{align*}
\left|\frac{m_{a,n}\left[w_a^{k^*}\right]}{k^*\cdot\mu_{a,n}(\tau)\left[w_a^{1/(k-1)}\right]} \right|&\stackrel{(i)}{=}\left|\frac{m_{a,n}\left[\left(\eta_a-d_au\right)_+^{k^*}-(\eta_a-d_a\|u\|_{L^{\infty}(\mu_a)})_+^{k^*}\right]}{\mu_{a,n}(\tau)\left[k^*\left(\eta_a-d_au\right)_+^{1/(k-1)}\right]}\right|\\
&=\left|\frac{m_{a,n}\left[d_a(u-\|u\|_{L^{\infty}(\mu_a)})k^*(\xi)_+^{1/(k-1)}\right]}{\mu_{a,n}(\tau)\left[k^*\left(\eta_a-d_au\right)_+^{1/(k-1)}\right]}\right|,
\end{align*}
where $(i)$ follows from the fact that $m_{a,n}[c] = 0$ for any constant function $c$.
Since $\xi<\eta_a-d_au$, we have
\begin{align*}
&\left|\frac{m_{a,n}\left[d_a(u-\|u\|_{L^{\infty}(\mu_a)})k^*(\xi)_+^{1/(k-1)}\right]}{\mu_{a,n}(\tau)\left[k^*\left(\eta_a-d_au\right)_+^{1/(k-1)}\right]}\right|\\
&\leq \left\|\frac{d_a(u-\|u\|_{L^{\infty}(\mu_a)})k^*(\xi)_+^{1/(k-1)}}{k^*\left(\eta_a-d_au\right)_+^{1/(k-1)}}\right\|_{L^\infty(\mu_a)}\left\|\frac{dm_{n}}{d\mu_{n}(\tau)}\right\|_{L^\infty(\mu_a)}\\
&\leq\left\|\frac{d_a(u-\|u\|_{L^{\infty}(\mu_a)})(\eta_a-d_au)_+^{1/(k-1)}}{\left(\eta_a-d_au\right)_+^{1/(k-1)}}\right\|_{L^\infty(\mu_a)}\left\|\frac{dm_{n}}{d\mu_{n}(\tau)}\right\|_{L^\infty(\mu_a)}\\
&= 2d_a\|u\|_{L^\infty(\mu_a)}\left\|\frac{dm_{n}}{d\mu_{n}(\tau)}\right\|_{L^\infty(\mu_a)}.
\end{align*}
Therefore, we have
\begin{align*}
\left|\frac{\partial}{\partial t} g(\boldsymbol{\eta}^*(t),t)\Big|_{t=\tau}\right|&=\sum_{a\in\mathcal{A}}\left|\frac{m_{a,n}\left[w_a^{k^*}\right]}{k^*\cdot\mu_{a,n}(\tau)\left[w_a^{1/(k-1)}\right]} \right|\\
&\leq \sum_{a\in\mathcal{A}}2d_a\|u\|_{L^\infty(\mu_a)}\left\|\frac{dm_{n}}{d\mu_{n}(\tau)}\right\|_{L^\infty(\mu_a)}\\
&=2\|u\|_{\infty}\left\|\frac{dm_{n}}{d\mu_{n}(\tau)}\right\|_{\infty,\boldsymbol{\mu}}.
\end{align*}

To conclude the lemma, we we observe that $\boldsymbol{\eta}\in X_1$,
\begin{equation*}
\left|\sup_{\boldsymbol{\eta}\in X_1}f(\boldsymbol \mu_n,u,\boldsymbol\eta)-\sup_{\boldsymbol{\eta}\in X_1}f(\boldsymbol \mu,u,\boldsymbol\eta)\right| = 0.
\end{equation*}
When $\boldsymbol{\eta}\in X_2$,
\begin{equation*}
\begin{aligned}
\left|\sup_{\boldsymbol{\eta}\in X_2}f(\boldsymbol \mu_n,u,\boldsymbol\eta)-\sup_{\boldsymbol{\eta}\in X_2}f(\boldsymbol \mu,u,\boldsymbol\eta)\right| &\leq\sup_{t\in[0,1]}\left|\frac{\partial}{\partial t} g(\boldsymbol{\eta}^*(t),t)\right|\\
&\leq \sup_{t\in[0,1]} 2\|u\|_{\infty}\left\|\frac{dm_{n}}{d\mu_{n}(t)}\right\|_{\infty,\boldsymbol{\mu}}\\
&=  2\|u\|_{\infty}\left\|\frac{dm_{n}}{d\mu_{n}(\tau)}\right\|_{\infty,\boldsymbol{\mu}}
\end{aligned}
\end{equation*}
for some $\tau\in[0,1]$, where the existence of $\tau$ follows from the compactness of $[0,1]$ and the continuity of $t\ra \mu_n(t)$ and norms. 

Finally, using $|\max\lbrace C,x\rbrace-\max\lbrace C,y\rbrace|\leq |x-y|$, we obtain
\begin{equation*}
\left|\sup_{\boldsymbol{\eta}\in X_1\cup X_2}f(\boldsymbol \mu_n,u,\boldsymbol\eta)-\sup_{\boldsymbol{\eta}\in X_1\cup X_2}f(\boldsymbol \mu,u,\boldsymbol\eta)\right|\leq\left|\sup_{\boldsymbol{\eta}\in X_2}f(\boldsymbol \mu_n,u,\boldsymbol\eta)-\sup_{\boldsymbol{\eta}\in X_2}f(\boldsymbol \mu,u,\boldsymbol\eta)\right|\leq 2\|u\|_{\infty}\left\|\frac{dm_{n}}{d\mu_{n}(\tau)}\right\|_{\infty,\boldsymbol{\mu}}
\end{equation*}
\end{proof}

\subsection{Proof of Lemma \ref{lemma:g-property}}
\label{sub:proof-of-lemma-g-property}
Let 
\begin{equation*}
\Phi(\boldsymbol{\eta},t) := \sum_{a\in\mathcal{A}}\mu_{a,n}(t)\left[(\eta_a - d_au)_+^{k^*}\right].
\end{equation*}
For every fixed $t\in[0,1]$, since $u\in L^\infty$ and $\eta_a < M$ on $X_2$, the map 
\begin{equation*}
\eta_a \mapsto  (\eta_a - d_au)_+
\end{equation*}
is uniformly Lipschitz on $(-\infty,M]$. Hence $\Phi(\cdot,t)$ is continuous on $X_2$, and therefore $g(\cdot,t)$ is continuous on $X_2$.

Now fix $\boldsymbol{\eta}\in X_2$. Then
\begin{equation*}
\Phi(\boldsymbol{\eta},t) = \sum_{a\in\mathcal{A}} \left(t\mu_{a}\left[w_a^{k^*}\right] + (1-t)\mu_{a,n}\left[w_a^{k^*}\right]\right),
\end{equation*}
so $\Phi(\boldsymbol{\eta}, \cdot)$ is affine in t. Moreover, under $\Omega_{n,\delta_0}$, since $\eta_a > d_a \operatorname*{essinf}_{\mu_a} u$, we have $\mu_{a}\left[w_a^{k^*}\right] > 0$, and Lemma \ref{lemma:mut-near-mu} yields
\begin{equation*}
\mu_{a,n}\left[w_a^{k^*}\right] \geq (1-\delta_0)\mu_{a}\left[w_a^{k^*}\right] > 0,\qquad \forall a\in\mathcal{A}, \forall t\in[0,1].
\end{equation*}
Therefore $\Phi(\boldsymbol{\eta},t) > 0$ for all $t\in[0,1]$. Since $x\mapsto x^{1/k^*}$ is $C^1$ on $(0,\infty)$, it follows that $t\mapsto g(\boldsymbol{\eta},t)$ is $C^1$ on $[0,1]$, hence absolutely continuous and differentiable on $[0,1]$.

Finally we check under $\Omega_{n,\delta_0}$, when $\boldsymbol \eta \in X_2$, $\partial_t g(\boldsymbol \eta, t)$ has an upper bound. Recall that
\begin{equation*}
\partial_t g(\boldsymbol \eta, t) = -c(k,\rho,|\mathcal{A}|)\frac{\sum_{a\in\mathcal{A}}m_{a,n}\left[w_a^{k^*}\right]}{k^*\left(\sum_{a\in\mathcal{A}}\mu_{a,n}(t)\left[w_a^{k^*} \right]\right)^{1/k}}.
\end{equation*}

Let 
\begin{equation*}
W:= \max_{a\in\mathcal{A}} \left(M - d_a\operatorname*{essinf}_{\mu_a}u\right)\leq \infty,
\end{equation*}
by the definition of $w_a$, for every $a\in\mathcal A$ and $s$,
\begin{equation*}
w_a(s) = (\eta_a - d_au(s))_+\leq M - d_a\operatorname*{essinf}_{\mu_a}u \leq W.
\end{equation*}
Applying Lemma \ref{lemma:mut-near-mu}, we have
\begin{equation*}
\left|m_{a,n}\left[w_a^{k^*}\right]\right| = \left|\mu_{a,n}\left[w_a^{k^*}\right]-\mu_a\left[w_a^{k^*}\right] \right|\leq \delta_0\mu_{a,n}\left[w_a^{k^*}\right]\leq \frac{\delta_0}{1-\delta_0} \mu_{a,n}(t)\left[w_a^{k^*}\right].
\end{equation*}

Therefore,
\begin{align*}
|\partial_t g(\boldsymbol{\mu},t) | &\leq \frac{c(k,\rho,|\mathcal{A}|)}{k^*}\frac{\delta_0}{1-\delta_0}\frac{\sum_{a\in\mathcal{A}}\mu_{a,n}(t)\left[w_a^{k^*}\right] }{\left(\sum_{a\in\mathcal{A}}\mu_{a,n}(t)\left[w_a^{k^*} \right]\right)^{1/k}}\\
&=\frac{c(k,\rho,|\mathcal{A}|)}{k^*}\frac{\delta_0}{1-\delta_0}\left(\sum_{a\in\mathcal{A}}\mu_{a,n}(t)\left[w_a^{k^*} \right]\right)^{1/k^*}\\
&\leq \frac{c(k,\rho,|\mathcal{A}|)}{k^*}\frac{\delta_0}{1-\delta_0}|\mathcal{A}|^{1/k^*}W < \infty.
\end{align*}

$|\partial_t g(\boldsymbol{\mu},t) |$ is bounded. 

\subsection{Proof of Lemma \ref{lemma:optimization-structure}}
\label{sub:proof-of-lemma-optimization-structure}
We prove the lemma in several steps.
\par
\noindent\textbf{Step 1: Existence of an optimizer.}
We choose a closed set
\begin{equation*}
D:= \left\{\boldsymbol{\eta}\in\mathbb{R}^{|\mathcal{A}|}: \eta_a \geq d_a\operatorname*{essinf}_{\mu_a} u \right\}.
\end{equation*}
For any $\boldsymbol{\eta} \in \mathbb{R}^{|\mathcal{A}|}$, let $\eta_a = \max\{\eta_a, d_a\operatorname*{essinf}_{\mu_a} u\}$. Since $(\eta_a - d_au)_+=0$ for $\eta_a \leq \operatorname*{essinf}_{\mu_a} u\}$,
\begin{equation*}
f(\boldsymbol{\mu},u,\tilde{\boldsymbol{\eta}}) \geq f(\boldsymbol{\mu},u,\boldsymbol{\eta}).
\end{equation*}
Therefore
\begin{equation*}
\sup_{\boldsymbol{\eta}\in\mathbb{R}^{|\mathcal{A}|}}f(\boldsymbol{\mu},u,\boldsymbol{\eta}) = \sup_{\boldsymbol{\eta}\in D}f(\boldsymbol{\mu},u,\boldsymbol{\eta}).
\end{equation*}
Next we prove that $f$ is coercive on $D$. Let 
\begin{equation*}
U_a := d_a \operatorname*{esssup}_{\mu_a} u,
\end{equation*}
then we have
\begin{equation*}
(\mu_a - d_au(s^\prime))_+ \geq (\mu_a - U_a)_+ \qquad \forall s^\prime\in\mathcal{S},
\end{equation*}
therefore
\begin{equation*}
\mu_a\left[(\eta_a - d_au)_+^{k^*}\right] \geq (\eta_a - U_a)_+^{k^*}.
\end{equation*}
Then we have
\begin{align*}
f(\boldsymbol{\mu},u,\boldsymbol{\eta})&=-c(k,\rho,|\mathcal{A}|)\left(\sum_{a\in\mathcal{A}}\mu_a\left[(\eta_a - d_au)_+^{k^*}\right]\right)^{1/k^*}+\sum_{a\in\mathcal{A}} \eta_a\\
&\leq -c(k,\rho,|\mathcal{A}|)\left(\sum_{a\in\mathcal{A}}(\eta_a - U_a)_+^{k^*}\right)^{1/k^*}+\sum_{a\in\mathcal{A}} \eta_a.
\end{align*}
Since $\Vert x\Vert_k^* \geq |\mathcal{A}|^{-1/k}\Vert x\Vert_1$,
\begin{align*}
f(\boldsymbol{\mu},u,\boldsymbol{\eta})&\leq -c(k,\rho,|\mathcal{A}|)|\mathcal{A}|^{-1/k}\sum_{a\in\mathcal{A}}(\eta_a - U_a)_+ + \sum_{a\in\mathcal{A}} \eta_a\\
&\leq -c(k,\rho,|\mathcal{A}|)|\mathcal{A}|^{-1/k}\sum_{a\in\mathcal{A}}(\eta_a - U_a) + \sum_{a\in\mathcal{A}} \eta_a\\
&=\left(1 - c(k,\rho,|\mathcal{A}|)|\mathcal{A}|^{-1/k}\right)\sum_{a\in\mathcal{A}} \eta_a + c(k,\rho,|\mathcal{A}|)|\mathcal{A}|^{-1/k}\sum_{a\in\mathcal{A}} U_a.
\end{align*}
Since $\rho > 0$, we have $c(k,\rho,|\mathcal{A}|) > |\mathcal{A}|^{1/k}$, therefore $1 - c(k,\rho,|\mathcal{A}|)|\mathcal{A}|^{-1/k} < 0$. Therefore, when $\Vert \boldsymbol{\eta}\Vert \to \infty$, $f(\boldsymbol{\mu},u,\boldsymbol{\eta}) \to -\infty$. Therefore, $f$ is coercive on $D$. Since $f$ is continuous on $D$, there exists $\boldsymbol{\eta}^* \in D$ such that
\begin{equation*}
\sup_{\boldsymbol{\eta}\in\mathbb{R}^{|\mathcal{A}|}}f(\boldsymbol{\mu},u,\boldsymbol{\eta}) = \sup_{\boldsymbol{\eta}\in D}f(\boldsymbol{\mu},u,\boldsymbol{\eta}) = f(\boldsymbol{\mu},u,\boldsymbol{\eta}^*).
\end{equation*}

\noindent\textbf{Step 2: First-order characterization of an optimal solution.}
Define
\begin{equation*}
\Phi(\boldsymbol{\eta})
:=
\sum_{a\in\mathcal{A}}
\mu_a\!\left[(\eta_a-d_au)_+^{k^*}\right].
\end{equation*}
We distinguish two cases.

\smallskip
\noindent\emph{Case 1: $\Phi(\boldsymbol{\eta}^*)=0$.}
Since each term in the sum is nonnegative, we have
\begin{equation*}
\mu_a\!\left[(\eta_a^*-d_au)_+^{k^*}\right]=0,
\qquad \forall a\in\mathcal{A}.
\end{equation*}
Hence
\begin{equation*}
(\eta_a^*-d_au)_+=0 \qquad \mu_a\text{-a.s.},
\end{equation*}
which implies
\begin{equation*}
\eta_a^* \le d_a\operatorname*{essinf}_{\mu_a}u,
\qquad \forall a\in\mathcal{A}.
\end{equation*}
On the other hand, by Step 1 we already know that $\boldsymbol{\eta}^*\in D$, so
\begin{equation*}
\eta_a^* \ge d_a\operatorname*{essinf}_{\mu_a}u,
\qquad \forall a\in\mathcal{A}.
\end{equation*}
Therefore,
\begin{equation*}
\eta_a^* = d_a\operatorname*{essinf}_{\mu_a}u,
\qquad \forall a\in\mathcal{A},
\end{equation*}
and hence $\boldsymbol{\eta}^*\in X_1$. In this case,
\begin{equation*}
\mu_i[w_i^{1/(k-1)}]=0,
\qquad \forall i\in\mathcal{A},
\end{equation*}
so \eqref{eq:kl-first-order} holds trivially.

\smallskip
\noindent\emph{Case 2: $\Phi(\boldsymbol{\eta}^*)>0$.}
We first show that
\begin{equation*}
\eta_i^* > d_i\operatorname*{essinf}_{\mu_i}u,
\qquad \forall i\in\mathcal{A}.
\end{equation*}
Suppose, to the contrary, that for some $i\in\mathcal{A}$,
\begin{equation*}
\eta_i^* = d_i\operatorname*{essinf}_{\mu_i}u.
\end{equation*}
Then
\begin{equation*}
(\eta_i^*-d_iu)_+=0
\qquad \mu_i\text{-a.s.},
\end{equation*}
and therefore
\begin{equation*}
\mu_i[w_i^{1/(k-1)}]=0.
\end{equation*}
Since $\Phi(\boldsymbol{\eta}^*)>0$, the function $f(\boldsymbol{\mu},u,\cdot)$ is differentiable at $\boldsymbol{\eta}^*$, and
\begin{equation*}
\frac{\partial}{\partial\eta_i}
f(\boldsymbol{\mu},u,\boldsymbol{\eta}^*)
=
1-c(k,\rho,|\mathcal{A}|)\left(\sum_{a\in\mathcal{A}}\mu_a[w_a^{k^*}]\right)^{-1/k}\mu_i[w_i^{1/(k-1)}]
=1.
\end{equation*}
Moreover, since $\boldsymbol{\eta}^*\in D$, the direction $+e_i$ is feasible. Hence, for all sufficiently small $t>0$,
\begin{equation*}
f(\boldsymbol{\mu},u,\boldsymbol{\eta}^*+te_i)
>
f(\boldsymbol{\mu},u,\boldsymbol{\eta}^*),
\end{equation*}
which contradicts the optimality of $\boldsymbol{\eta}^*$. Therefore,
\begin{equation*}
\eta_i^* > d_i\operatorname*{essinf}_{\mu_i}u,
\qquad \forall i\in\mathcal{A}.
\end{equation*}
In particular, $\boldsymbol{\eta}^*\in X_2$, so $\boldsymbol{\eta}^*$ is an interior point of $D$.

Since $f(\boldsymbol{\mu},u,\cdot)$ is concave in $\boldsymbol{\eta}$, we may now apply the first-order optimality condition at the interior maximizer $\boldsymbol{\eta}^*$, which gives
\begin{equation*}
\frac{\partial}{\partial\eta_i}
f(\boldsymbol{\mu},u,\boldsymbol{\eta}^*)
=
1-c(k,\rho,|\mathcal{A}|)\left(\sum_{a\in\mathcal{A}}\mu_a[w_a^{k^*}]\right)^{-1/k}\mu_i[w_i^{1/(k-1)}]
=0,
\qquad \forall i\in\mathcal{A}.
\end{equation*}
Equivalently,
\begin{equation*}
\left(\sum_{a\in\mathcal{A}}\mu_a[w_a^{k^*}]\right)^{1/k}
=
c(k,\rho,|\mathcal{A}|)\mu_i[w_i^{1/(k-1)}],
\qquad \forall i\in\mathcal{A},
\end{equation*}
which is exactly \eqref{eq:kl-first-order}.

\par
\noindent\textbf{Step 3: Value of the objective when $\boldsymbol{\eta}^*(\boldsymbol{\mu})\in X_2$.}
When $\boldsymbol{\eta}^*(\boldsymbol{\mu})\in X_2$, we have
\begin{equation*}
\mu_i[w_i^{1/(k-1)}]>0,
\qquad \forall i\in\mathcal{A}.
\end{equation*}
Substituting \eqref{eq:kl-first-order} into the definition of $f$, we obtain
\begin{align*}
f(\boldsymbol{\mu},u,\boldsymbol{\eta}^*)
&=
-c(k,\rho,|\mathcal{A}|)\left(\sum_{a\in\mathcal{A}}\mu_a[w_a^{k^*}]\right)^{1/k^*}
+\sum_{a\in\mathcal{A}}\eta_a^* \\
&=
-\frac{\sum_{a\in\mathcal{A}}\mu_a[w_a^{k^*}]}
{\mu_i[w_i^{1/(k-1)}]}
+\sum_{a\in\mathcal{A}}\eta_a^*,
\qquad \forall i\in\mathcal{A}.
\end{align*}

\noindent\textbf{Step 4: Any optimizer belongs to $X_1 \cup X_2$.}

We first prove that 
\begin{equation*}
\boldsymbol{\eta}^* \notin X_{3,1}:=\left\{\boldsymbol{\eta}\Big| \exists a_1,a_2\in\mathcal{A}, \eta_{a_1}>d_{a_1}\operatorname*{essinf}_{\mu_{a_1}}u, \eta_{a_2}\leq d_{a_2}\operatorname*{essinf}_{\mu_{a_2}}u \right\}
\end{equation*}
by contradictory.

Suppose $\boldsymbol{\eta}^{*} \in X_{3,1}$. Then, according to the definition of $X_{3,1}$, $\mu_{a_2}[w_{a_2}^{1/(k-1)}] = 0$.  According to \eqref{eq:kl-first-order}, 
\begin{equation*}
\left(\sum_{a\in\mathcal{A}}\mu_a\left[w_a^{k^*}\right]  \right)^{1/k} = 0,
\end{equation*} 
which implies 
\begin{equation*}
\mu_a\left[w_a^{k^*}\right] = 0 \quad \text{for all }a\in\mathcal{A}.
\end{equation*}
Therefore, $\boldsymbol{\eta}^* \in X_1$, which contradicts the assumption that $\boldsymbol{\eta}^* \in X_{3,1}$. Next, we prove that
\begin{equation*}
\boldsymbol{\eta}^* \notin X_{3,2}:=\left\{\boldsymbol{\eta}\Big| \exists a\in\mathcal{A}, \eta_a\geq M \right\}
\end{equation*}
where
\begin{equation*}
M =\max_{a,s^\prime}d_au(s^\prime) +\frac{\sum_{a\in\mathcal{A}}d_a\max_{s^\prime} u(s^\prime)}{c(k,\rho,|\mathcal{A}|) - |\mathcal{A}|^{1/k}}.
\end{equation*}

We first bound the first term in $f$
\begin{equation*}
(\eta_a-d_au(s^\prime))_+ \geq \left(\eta_a - \max_{s^\prime}d_au(s^\prime)\right)_+
\end{equation*}
taking expectation and sum over $a$ on both side of the inequality
\begin{equation*}
\sum_{a\in\mathcal{A}}\mu_a\left[(\eta_a-d_au)^{k^*}_+\right] \geq \sum_{a\in\mathcal{A}}\left(\eta_a - \max_{s^\prime}d_au(s^\prime)\right)_+^{k^*}.
\end{equation*}
Therefore,
\begin{equation*}
\left(\sum_{a\in\mathcal{A}}\mu_a\left[w_a^{k^*}\right] \right)^{1/k^*} \geq \left(\sum_{a\in\mathcal{A}}\left(\eta_a - \max_{s^\prime}d_au(s^\prime)\right)_+^{k^*}\right)^{1/k^*}.
\end{equation*}

Then we bound the second term in $f$
\begin{equation*}
\sum_{a\in\mathcal{A}} \eta_a \leq \sum_{a\in\mathcal{A}} (\eta_a-\max_{s^\prime}d_au(s^\prime))_+ + \sum_{a\in\mathcal{A}}\max_{s^\prime}d_au(s^\prime).
\end{equation*}
According to H\"{o}lder's inequality, we have
\begin{equation*}
\sum_{a\in\mathcal{A}} (\eta_a-\max_{s^\prime}d_au(s^\prime))_+ \leq |\mathcal{A}|^{1/k}\left(\sum_{a\in\mathcal{A}}(\eta_a - \max_{s^\prime}d_au(s^\prime))_+^{k^*}\right)^{1/k^*}.
\end{equation*}
Therefore,
\begin{equation*}
\sum_{a\in\mathcal{A}} \eta_a \leq |\mathcal{A}|^{1/k}\left(\sum_{a\in\mathcal{A}}(\eta_a - \max_{s^\prime}d_au(s^\prime))_+^{k^*}\right)^{1/k^*} + \sum_{a\in\mathcal{A}}\max_{s^\prime}d_au(s^\prime).
\end{equation*}

To sum up, we have
\begin{align*}
f(\boldsymbol \mu,u,\boldsymbol\eta)&=-c(k,\rho,|\mathcal{A}|)\left(\sum_{a\in\mathcal{A}}\mu_a\left[w_a^{k^*} \right]\right)^{1/k^*}+\sum_{a\in\mathcal{A}} \eta_a\\
&\leq -\left(c(k,\rho,|\mathcal{A}|)-|\mathcal{A}|^{1/k}\right)\left(\sum_{a\in\mathcal{A}}\left(\eta_a - \max_{s^\prime}d_au(s^\prime)\right)_+^{k^*}\right)^{1/k^*} + \sum_{a\in\mathcal{A}}\max_{s^\prime}d_au(s^\prime).
\end{align*}
Note that when $\rho >0$,
\begin{equation*}
c(k,\rho,|\mathcal{A}|)-|\mathcal{A}|^{1/k}= |\mathcal{A}|^{1/k}\left(\left(k(k-1)\rho+1 \right)^{1/k}-1\right) > 0
\end{equation*}

Suppose there exists $a_1$ such that $\eta_{a_1}> M$, then
\begin{align*}
f(\boldsymbol \mu,u,\boldsymbol\eta)& \leq -\left(c(k,\rho,|\mathcal{A}|)-|\mathcal{A}|^{1/k}\right)\left(\eta_{a_1} - \max_{s^\prime}d_{a_1}u(s^\prime)\right)_+ + \sum_{a\in\mathcal{A}}\max_{s^\prime}d_au(s^\prime)\\
&< -\left(c(k,\rho,|\mathcal{A}|)-|\mathcal{A}|^{1/k}\right)\left(\frac{\sum_{a\in\mathcal{A}}d_a\max_{s^\prime} u(s^\prime)}{c(k,\rho,|\mathcal{A}|) - |\mathcal{A}|^{1/k}}\right) + \sum_{a\in\mathcal{A}}\max_{s^\prime}d_au(s^\prime)\\
&< 0
\end{align*}

Therefore, $\boldsymbol \eta$ can not be the optimal solution when $\boldsymbol \eta \in X_{3,2}$. 
It can be easily check that $X_3 = X_{3,1}\cup X_{3,2}$, and overall, we have $\boldsymbol{\eta}^* \notin X_3$.

\subsection{Proof of Theorem \ref{theorem:f}}
\label{sub:theorem-f-proof}
\begin{proof}

Substituting $\|R+\gamma v\|_{\infty} \leq 1/(1-\gamma)$ into the bound from Proposition~\ref{proposition:f-divergence-bellman-operator-error} and applying Proposition~\ref{proposition:value-function-error}, we obtain the stated result.
\begin{align*}
\|\hat v - v^*\|_{\infty}&\leq \frac{1}{1-\gamma}\left\|\hat{\mathbf{T}}^*(v^*)-\mathcal{T}^*(v^*)\right\|_\infty\\
&\leq \frac{9\|R+\gamma v\|_{\infty}}{(1-\gamma)\sqrt{n\mathfrak{p}_{\wedge}}}\sqrt{\log\left(4|\mathcal{S}|^2|\mathcal{A}|/\eta\right)}\\
&\leq \frac{9}{(1-\gamma)^2\sqrt{n\mathfrak{p}_{\wedge}}}\sqrt{\log\left(4|\mathcal{S}|^2|\mathcal{A}|/\eta\right)}
\end{align*}
with probability $1-\eta$.
The preceding step provides a uniform bound on the value-function estimation error. We now turn to the policy suboptimality gap. 
According to Corollary \ref{corollary:value-function-error-2}, Lemma \ref{lemma:supVoptgap_leq_supVdiff} and Proposition \ref{proposition:f-divergence-bellman-operator-error}, we have
\begin{align*}
\sup_{\pi\in\Pi}V_{\mathcal{P}}^{\pi}(s)-V_{\mathcal{P}}^{\hat{\pi}^*}(s)&\leq2\sup_{\pi\in\Pi}\left\|V_{\hat{\mathcal{P}}}^{\pi}-V_{\mathcal{P}}^{\pi}\right\|_{\infty}\\
&\leq\frac{2}{1-\gamma}\sup_{\pi\in\Pi}\left\|\hat{\mathbf{T}}^{\pi}(V_{\mathcal{P}}^{\pi})-\mathcal{T}^{\pi}(V_{\mathcal{P}}^{\pi})\right\|_{\infty}\\
&\leq\frac{18\|R+\gamma v\|_{\infty}}{\sqrt{n\mathfrak{p}_\wedge}(1-\gamma)}\sqrt{\log\left(4|S|^2|\mathcal{A}|/\eta\right)}\\
&\leq\frac{18}{\sqrt{n\mathfrak{p}_\wedge}(1-\gamma)^2}\sqrt{\log\left(4|S|^2|\mathcal{A}|/\eta\right)}
\end{align*}
\end{proof}

\end{document}